\newcommand{\rev}[1]{{#1}}
\newenvironment{envrev}{}{}
\newcommand{\opt}{\textsf{opt}}
\newcommand{\E}{\mathscr{E}}
\newcommand{\V}{\mathsf{V}}
\newcommand{\R}{\mathcal{R}}
\newcommand{\T}{\mathscr{T}}
\newcommand{\C}{\mathsf{C}}
\newcommand{\U}{\mathsf{U}}
\newcommand{\W}{\mathsf{W}}
\newcommand{\Y}{\mathscr{Y}}
\newcommand{\Z}{\mathscr{Z}}
\newcommand{\eps}{\varepsilon}
\renewcommand{\epsilon}{\varepsilon}
\renewcommand{\Re}{\mathbb{R}}
\renewcommand{\paragraph}[1]{\smallskip\noindent{\bf {#1. }}}
\renewcommand{\L}{{\mathcal{L}}}
\renewcommand{\opt}{\mathsf{OPT}}
\newcommand{\ddim}{\ensuremath{\zeta}}
\newcommand{\prob}{\textsf{OracleCluster}\xspace}
\newcommand{\disloc}{\mathcal{D}}
\newcommand{\Coreset}{\mathsf{C}}
\definecolor{lightblue}{RGB}{173, 216, 230}
\newcommand{\QOracle}{\mathcal{Q}}
\newcommand{\QPOracle}{\tilde{\mathcal{Q}}}
\newcommand{\Sample}{\mathcal{S}}
\newcommand{\Ball}{\mathfrak{B}}
\newcommand{\dist}{d}
\newcommand{\rank}{\textsc{rank}}
\newcommand{\edge}{\mathbf{e}}
\newcommand{\perror}{\varphi}
\newcommand{\Core}{\textsc{kernel}}
\newcommand{\ProbSort}{\textsc{ProbSort}}
\newcommand{\polylog}{\mathsf{polylog}\,}
\newcommand{\N}{\mathscr{N}}
\newcommand{\F}{\mathcal{F}}
\newcommand{\M}{\mathcal{M}}
\newcommand{\X}{\mathscr{X}}
\newcommand{\Sainyam}[1]{}
\newcommand{\Syamantak}[1]{}
\renewcommand{\O}{\tilde{O}}
\newcommand{\fedge}{\mathbbmss{f}}
\newcommand{\nrank}{\textsc{pos}}
\newcommand{\LGuard}{\textsc{Guard}}
\newcommand{\nrounds}{r}
\newcommand{\firsamsize}{m_{\mathsf{S1}}}   
\newcommand{\secsamsize}{m_{\mathsf{S2}}}   
\newcommand{\cgsize}{m_{\mathsf{win}}}
\newcommand{\consfirsamsize}{c_{\mathsf{S1}}}   
\newcommand{\conssecsamsize}{c_{\mathsf{S2}}}   
\newcommand{\conscgsize}{c_{\mathsf{win}}}
\newcommand{\vguard}{\mathsf{g}}
\newcommand{\vsample}{\mathsf{s}}
\newcommand{\vcore}{\mathsf{w}}
\newcommand{\vertex}[1]{\mathsf{#1}}
\newcommand{\optsol}{\mathsf{C}^\star}
\newcommand{\AlgGen}{\textsc{Alg-G}}
\newcommand{\AlgDoub}{\textsc{Alg-D}}
\newcommand{\AlgImpDoub}{\textsc{Alg-DI}}
\newcommand{\AlgTest}{\textsc{Alg-Tester}}
\newcommand{\proxcount}{\textsc{Pcount}}
\newcommand{\testcount}{\textsc{Tcount}}
\newcommand{\CostGen}{\textsc{COST}}
\newcommand{\consimpsamp}{c_{\mathsf{IMP}}}
\newcommand{\ssize}{\textsc{size}}
\par\vspace{0.75em}\noindent\textbf{Keywords: }\ignorespaces}%
\par\vspace{0.5em}}
\newtheorem{theorem}{Theorem}[section]
\newtheorem{lemma}[theorem]{Lemma}
\newtheorem{corollary}[theorem]{Corollary}
\theoremstyle{definition}
\newtheorem{definition}[theorem]{Definition}
\theoremstyle{remark}
\newtheorem{remark}[theorem]{Remark}
\newcommand{\rmodel}{R-model}
\newcommand{\coreplus}{\textsf{Coreset+}}
\title{Metric $k$-clustering  using only Weak Comparison Oracles}
\author{
Rahul Raychaudhury\\
Duke University
\and
Aryan Esmailpour\\
University of Illinois Chicago
\and
Sainyam Galhotra\\
Cornell University
\and
Stavros Sintos\\
University of Illinois Chicago
}
\date{}
\begin{document}

\maketitle


\begin{abstract}
Clustering is a fundamental primitive in unsupervised learning. However, classical algorithms for $k$-clustering (such as $k$-median and $k$-means) assume access to exact pairwise distances---an unrealistic requirement in many modern applications.  We study clustering in the \emph{Rank-model (R-model)}, where access to distances is entirely replaced by a  \emph{quadruplet oracle} that provides only relative distance comparisons. In practice, such an oracle can represent learned models or human feedback, and is expected to be noisy and entail an access cost.
\par Given a metric space with $n$ input items, we design randomized algorithms that, using only a noisy quadruplet oracle, compute a set of $O(k \cdot \mathsf{polylog}(n))$ centers along with a mapping from the input items to the centers such that the clustering cost of the mapping is at most constant times the optimum $k$-clustering cost. Our method achieves a query complexity of $O(n\cdot k \cdot \mathsf{polylog}(n))$ for arbitrary metric spaces and improves to $O((n+k^2) \cdot \mathsf{polylog}(n))$ when the underlying metric has bounded doubling dimension. When the metric has bounded doubling dimension we can further improve the approximation from constant to \rev{$1+\eps$}, for any arbitrarily small constant $\eps\in(0,1)$, while preserving the same asymptotic query complexity.
Our framework demonstrates how noisy, low-cost oracles, such as those derived from large language models, can be systematically integrated into scalable clustering algorithms.

\end{abstract}



\vspace{-1.0em}
\section{Introduction}
\label{sec:intro}
\vspace{-0.5em}
Clustering is a fundamental problem in unsupervised learning. Traditional methods like $k$-center, $k$-median, and $k$-means all rely on computing pairwise distances. For their output clusters to be meaningful, these distances must reflect the user’s notion of semantic similarity. However, designing such tailored distance measures is especially difficult for complex data like images.
Even when a distance function is well defined, evaluating distances between certain types of objects can be prohibitively expensive.  
\par Motivated by these challenges, there has been a long line of work that avoids direct distance computations and instead uses \emph{oracles}.
Oracles serve as abstractions for machine learning models or human feedback that provide partial information about the relative distance between the points. Oracle-based models have been studied for $k$-clustering~\cite{bateni2023metric,bravermanrelative,addanki2021design,galhotra2024k,raychaudhury2025metric}, hierarchical clustering~\cite{emamjomeh2018adaptive, chatziafratis2018hierarchical, ghoshdastidar2019foundations}, correlation clustering~\cite{ukkonen2017crowdsourced,silwal2023kwikbucks} among others.

In this paper, we study clustering in the \emph{Rank-model} (\rmodel), where pairwise distances are inaccessible. Instead, one has access to a noisy quadruplet oracle, a function that, given two pairs of input items \((A,B)\) and \((C,D)\), answers the question: \emph{``Is \(A\) closer to \(B\), or is \(C\) closer to \(D\)?''}. Intuitively, quadruplet queries are easier than direct distance queries because they are inherently local and require only relative comparisons, compared to distances which are global. Quadruplet queries are also more practical than the commonly studied optimal-cluster queries, which must return the correct clusters for the queried points. In practice, a quadruplet oracle can be realized in several ways. A natural option is to leverage a large language model (LLM). For instance, two candidate pairs can be presented within a fixed prompt that specifies the intuitive similarity metric of interest, and the model is then asked to return a categorical judgment. Another option is to use an online embedding service: embeddings are computed for each object individually, and the oracle’s decision is obtained by comparing similarity scores for pairs \((A,B)\) and \((C,D)\), returning whichever pair appears more similar. A further possibility is to train a dedicated quadruplet oracle using learning-to-rank methods on annotated data~\cite{liu2009learning}, where the labels themselves may come from crowdsourcing.  
Regardless of the implementation, we generally expect the oracle to be noisy and to have some cost associated with access. For example, with embedding-based oracles, accuracy depends on how well the embedding space aligns with the semantic notion of similarity. In terms of access costs, LLMs and embedding services incur a direct financial cost.

The study of clustering in the \rmodel\ was initiated by~\cite{addanki2021design}, who considered problems such as $k$-center and hierarchical clustering. Subsequently,~\cite{galhotra2024k} showed that no \rev{$o(n)$-}approximation\rev{, where $n$ is the number of items in the metric space,} is possible for $k$-median and $k$-means clustering without distance information and introduced the Rank-Measure (RM) model: Along with a quadruplet oracle, they allow access to a distance oracle that returns the exact distance between two input items. They further established several results in this setting. Recently, \cite{raychaudhury2025metric}, showed that $k$-clustering is possible in the RM-model using $O(nk\,\polylog n)$ quadruplet queries and only $O(\polylog n)$ distance queries. 
When the doubling dimension of the input metric space is bounded, they further improve the quadruplet queries to $O((n+k^2)\polylog n)$ while distance queries remain $O(\polylog n)$. These query complexities are near-optimal within logarithmic factors.

Although these results provide strong guarantees, two challenges remain. First, in practice, a strong distance oracle may just not be available to evaluate distances accurately. Second, it is critical in prior work to interleave distance queries with quadruplet queries, which can be problematic. For example, obtaining exact distances may itself require solving an $\mathsf{NP}$-hard problem, creating a computational bottleneck. Motivated by these considerations, in this paper, we ask, what is the best we can do in the context of clustering problems having access only to a noisy quadruplet oracle?
\rev{In Appendix~\ref{appndx:lb}, we show that at least 
$2k-1$ centers are necessary to obtain any 
$o(n)$-approximation algorithm for $k$-median/means clustering in the \rmodel.}
Hence, we ask the following questions:

\textit{``Can we compute a $k$-clustering in the \rmodel\ with only $O(k\,\polylog n)$ centers and a mapping from each item to a center, using $O(n\,k\,\polylog n)$ quadruplet queries, such that the clustering cost is comparable to the optimal cost with $k$ centers?''}

In many practical settings, the data live in high dimensions but have low intrinsic complexity (e.g., small doubling dimension)~\cite{yin2018dimensionality, nakis2025intrinsic, ng2002predicting, tenenbaum2000global}. \rev{For example, Euclidean space with a fixed number of dimensions is a metric space with a constant doubling dimension.} A natural question is whether this additional structure may be beneficial. Specifically, we ask

\textit{``When the intrinsic dimensionality is small, can we reduce the query complexity to $O(n\,\polylog n)$? Moreover, can we improve the approximation quality while still relying solely on quadruplet queries?''}

In this paper, we provide affirmative answers to both questions. We note that having such a small set of centers along with a mapping function is very useful in practice, as it shifts the burden of clustering to a substantially smaller set. For instance, consider clustering MNIST digits $0,\dots,9$. If one can extract a representative subset of only a few hundred images along with a good mapping, then human annotators would only need to identify the correct class of these, while the mapping automatically ensures that the remaining images are mapped to the correct cluster. Next, we present the formal setting and summarize our contributions.

\subsection{Problem Setup and Contributions}
We require some preliminary definitions before formally presenting the model and our results.  Let $\Sigma=(\V,\dist)$ be a finite metric space with $\dist:\V\times\V\to\Re_{\ge0}$.  
We consider metric spaces with $|\V|=n$.  Any such space can be viewed as a weighted complete graph. We use $\E$ to denote the set of all edges between vertices in $\V$. The \emph{doubling dimension} $\dim(\Sigma)$ is the smallest $\delta$ such that every ball of radius $\rho$ can be covered by $2^{\delta}$ balls of radius $\rho/2$. We say $\Sigma$ has bounded doubling dimension if $\dim(\Sigma)\le\delta_0$ for some fixed constant $\delta_0$.

\paragraph{Clustering Cost} For $\vertex{v}\in\V$ and $\U\subseteq\V$, define $\dist(\vertex{v},\U)=\min_{\vertex{u}\in\U}\dist(\vertex{v},\vertex{u})$. Let $k,p\in\mathbb{Z}_{\geq 1}$. For $\U,\W\subseteq\V$, let 
$\CostGen_\U^p(\W)=\sum_{\vertex{w}\in\W}(\dist(\vertex{w},\U))^p$. 
For $\W\subseteq\V$, the optimal $(k,p)$-clustering cost is 
$\opt^p_k(\W)=\min_{\U\subseteq\V,\,|\U|=k}\CostGen_\U^p(\W)$; 
if $\W=\V$, we simply write $\opt^p_k$.
A $\beta$-approximation algorithm for $(k,p)$ clustering returns a set $\mathcal{A}$ such that $\CostGen_{\mathcal{A}}^p(\V)\leq \beta\cdot \opt^p_k$, where $\beta\geq 1$.
For $p=1,2$ the $(k,p)$ clustering corresponds to $k$-median and $k$-means clustering, respectively. 

\paragraph{Coresets and \coreplus} 
Next, we recall the standard definition of a coreset and introduce the stronger notion of a \coreplus.

$O(1)$-coreset: A set $\mathsf{T}\subseteq \V$ along with a weight function $\omega:\V\rightarrow \Re_{>0}$ is called an $O(1)$-coreset for $(k,p)$-clustering if any subset $\mathcal{A}\subseteq \mathsf{T}$ such that $|\mathcal{A}|=k$ and $\sum_{\vertex{t}\in\mathsf{T}}\omega(\vertex{t})(\dist(\vertex{t},\mathcal{A}))^p\leq \beta \cdot \min_{\U\subseteq\mathsf{T},\,|\U|=k}\sum_{\vertex{t}\in\mathsf{T}}\omega(\vertex{t})(\dist(\vertex{t},\U))^p$ satisfies $\CostGen_{\mathcal{A}}^p(\V)\leq O(\beta)\cdot \opt^p_k$.

\rev{$(k,\eps)$-coreset: For a real number $\eps\in(0,1)$,} a set $\mathsf{T}\subseteq \V$ along with a weight function $\omega:\V\rightarrow \Re_{>0}$ is called a \rev{$(k,\eps)$-coreset} for $(k,p)$-clustering if for any subset $\mathcal{A}\subseteq \V$ with $|\mathcal{A}|=k$, it holds that $(1-\eps)\CostGen_{\mathcal{A}}^p(\V)\leq\sum_{\vertex{t}\in\mathsf{T}}\omega(\vertex{t})(\dist(\vertex{t},\mathcal{A}))^p\leq (1+\eps)\CostGen_{\mathcal{A}}^p(\V)$.

$\alpha$-\emph{\coreplus}: 
\rev{For a positive real number $\alpha$,} a set $\Coreset\subseteq\V$ together with a mapping $\mathcal{M}:\V\to\Coreset$ \rev{is called an $\alpha$-\coreplus\ if} $\sum_{v\in\V}(\dist(v,\mathcal{M}(v)))^p\leq \alpha\cdot \opt^p_k$.

It follows from well known results (see \cite{har2004coresets}, for example) that any $O(1)$-\coreplus\ directly yields an $O(1)$-coreset: define the weight function $\omega:\Coreset\to\Re_{>0}$ by $\omega(c)=\sum_{v\in \V}\mathbf{1}\{\mathcal{M}(v)=c\}$.
Similarly, it follows that an $\eps$-\coreplus\ can be converted to a \rev{$(k,\eps)$-coreset} in the same way.

\paragraph{\rmodel}
 In the {\rmodel}, direct access to the distance function $\dist$ is unavailable.
Instead, all distance-related information must be obtained through a noisy quadruplet oracle. Formally, for an error parameter $\perror \in \left[0, \rev{\frac{1}{4}}\right)$, a \emph{quadruplet oracle with probabilistic noise} $\perror$ is a function $\QPOracle : \E \times \E \rightarrow \{\textsc{Yes}, \textsc{No}\}$ that, given two edges $\edge_1,\edge_2\in \E$, where $\edge_1=\{\vertex{v}_1,\vertex{v}_2\}$, and $\edge_2=\{\vertex{v}_3,\vertex{v}_4\}\, $, outputs
\[
\QPOracle(\edge_1, \edge_2) = 
\begin{cases} 
\text{\textsc{Yes}, with probability at least } 1 - \perror, & \text{if } \dist(\vertex{v}_1, \vertex{v}_2) \leq \dist(\vertex{v}_3, \vertex{v}_4), \\
\text{\textsc{No}, with probability at least } 1 - \perror, & \text{if } \dist(\vertex{v}_1, \vertex{v}_2) > \dist(\vertex{v}_3, \vertex{v}_4).
\end{cases}
\]
In other words, the oracle fails to identify the closer pair with probability at most $\perror$. Furthermore, the randomness is independent across distinct queries and is fixed once per edge pair; thus, repeated calls to $\QPOracle(\edge_1, \edge_2)$ always return the same result, and flipping the order of the edges always flips the answer. This property is referred to as \emph{persistence}.

\paragraph{Our Results}  
\rev{As a warm-up, in Appendix~\ref{appndx:lb}, we show that at least 
$2k-1$ centers are necessary to obtain any 
$o(n)$-approximation algorithm for $k$-median/means clustering in the \rmodel. On the other hand, we obtain the following positive results for clustering in the \rmodel.}
\begin{table}[h!]
\centering
\resizebox{\textwidth}{!}{%
\begin{tabular}{|cc|cc|cc|}
\hline
\multicolumn{2}{|c|}{\multirow{2}{*}{}} & \multicolumn{2}{c|}{General} & \multicolumn{2}{c|}{Doubling} \\ \cline{3-6}
\multicolumn{2}{|c|}{} & \multicolumn{1}{c|}{centers} & queries & \multicolumn{1}{c|}{centers} & queries \\ \hline
\multicolumn{1}{|c|}{\multirow{2}{*}{RM-model}} & \cite{galhotra2024k} & \multicolumn{1}{c|}{$k$} & $\O(nk)$, $\O(k^2)$ & \multicolumn{1}{c|}{$k$} & $\O(nk)$, $\O(k^2)$ \\ \cline{2-6}
\multicolumn{1}{|c|}{} & \cite{raychaudhury2025metric} & \multicolumn{1}{c|}{$k$} & $\O(nk)$, $\O(1)$ & \multicolumn{1}{c|}{$k$} & $\O(k^2+n)$, $\O(k^2)$ \\ \hline
\multicolumn{1}{|c|}{\multirow{2}{*}{R-model}} & \cite{addanki2021design} & \multicolumn{1}{c|}{$k$} & $\O(nk^2)$ & \multicolumn{1}{c|}{$k$} & $\O(nk^2)$ \\ \cline{2-6}
\multicolumn{1}{|c|}{} & \textbf{NEW} & \multicolumn{1}{c|}{$\O(k)$} & $\O(nk)$ & \multicolumn{1}{c|}{$\O(k)$} & $\O(k^2+n)$ \\ \hline
\end{tabular}%
}
\caption{\rev{Comparison of our algorithms for $k$-clustering in the \rmodel\ with known clustering algorithms in the \rmodel\ and RM-model on general and doubling metric spaces. 
They are all $O(1)$-approximation algorithms.
For every algorithm we show the number of centers it returns and the number of oracle queries it executes.
In the RM-model, the first (resp. second) quantity in the column queries shows the number of queries to the quadruplet (resp. distance) oracle.
The notation $\O(\cdot)$ hides a $\polylog (n)$ factor.
\cite{galhotra2024k, raychaudhury2025metric} studied $k$-median/means clustering.
\cite{addanki2021design} only studied the $k$-center clustering in the \rmodel\, however their algorithm only holds if the optimal clusters are of size $\Omega(\sqrt{n})$.}}
\label{table:compare}
\end{table}

$\bullet$ For general metric spaces, we give an algorithm that constructs an $O(1)$-\coreplus\ for $(k,p)$-clustering of size $O(k\polylog n)$, using $O(nk\polylog n)$ queries to the quadruplet oracle.  

$\bullet$ For metric spaces with bounded doubling dimension \rev{(for example, the Euclidean space with constant number of dimensions)}, we design an algorithm that constructs an $O(1)$-\coreplus\ of size $O(k\polylog n)$, using only $O((n+k^2)\polylog n)$ quadruplet queries.  

$\bullet$ For the special case of $k$-median 
in doubling metrics, we further obtain an $\eps$-\coreplus\ of size $O(k\polylog n)$ with the same query complexity, i.e., $O((n+k^2)\polylog n)$.  

\rev{Our main results and the comparison with other known algorithms in the RM and \rmodel\ are shown in Table~\ref{table:compare}.}

\vspace{-0.5em}
\subsection{Related Work}
Due to the large body of related work, we focus mostly on clustering problems under related oracle-based models. We discuss additional related work in Appendix~\ref{Appndx:RelWork}.

\cite{addanki2021design} initiated the study of $k$-center clustering with access to a quadruplet oracle.
In the \rmodel\ (with probabilistic noise), they developed an algorithm under structural assumptions on the optimal clustering. To the best of our knowledge, this remains the only prior work on clustering under purely the \rmodel. 

\par More recently, \cite{galhotra2024k} showed that even with a perfect quadruplet oracle, no \(O(1)\)-approximation is possible for \(k\)-means or \(k\)-median without distance queries. This motivated their weak-strong framework: a weak oracle provides inexpensive quadruplet comparisons, while a strong oracle supplies exact distances at a higher cost. Within this framework, they designed \(O(1)\)-approximation algorithms for \(k\)-center, \(k\)-median, and \(k\)-means in general metric spaces, achieving query complexities of \(O(nk\,\polylog n)\) to the quadruplet oracle and \(O(k^2 \polylog n)\) to the distance oracle.
In a follow-up work,~\cite{raychaudhury2025metric} improved the number of calls to the distance oracle from $O(k^2\polylog n)$ to $O(\polylog n)$. They also study clustering problems in metric spaces with bounded doubling dimension, designing $O(1)$-approximation algorithms with $O((n+k^2)\polylog n)$ calls to the quadruplet oracle and $O(\polylog n)$ calls to the distance oracle. Specifically, for $k$-center clustering~\cite{raychaudhury2025metric} construct $O(1)$-coreset of size $O(k\polylog n)$ executing $O(nk\polylog n)$ (resp. $O((n+k^2)\polylog n)$ in doubling metrics) queries to the quadruplet oracle (\rmodel). However, using their techniques, no coreset construction for $k$-median or $k$-means can be constructed in the \rmodel, i.e., an exact distance oracle is necessary.

\par Independently, \cite{bateni2023metric} investigated $k$-clustering and the MST problem in general metrics under a related weak-strong framework. Their strong oracle matches that of \cite{galhotra2024k}, but their weak oracle differs: given $u,v \in \V$, it outputs $\dist(u,v)$ with probability at least $1-\epsilon$, and an arbitrary value otherwise. They obtained \(O(1)\)-approximations for $k$-center, $k$-median, and $k$-means using \(O(nk\, \polylog n)\) weak queries and \(O(k^2 \polylog n)\) strong queries. 
Under the same strong–weak model,~\cite{bravermanrelative} studied coresets for clustering problems. 

There is a rich line of work on approximate sorting with a probabilistic comparison oracle. In this model, it is well known that the \emph{maximum dislocation} cannot be improved beyond $O(\log n)$; in particular, no algorithm can guarantee that every element is placed within $o(\log n)$ positions of its true rank. After a series of results~\cite{braverman2008noisy, braverman2016parallel, geissmann2017sorting, geissmann2020optimal}, it was recently shown in~\cite{geissmann2025optimal} that $O(\log n)$ dislocation can be achieved with $O(n\log n)$ queries with high probability, when the noise $\perror<1/4$.

\section{Technical Preliminaries}
\label{sec:prelim}
Let $\Sigma=(\V,\dist)$ be a metric space with $\dist:\V\times\V\to\Re_{\ge0}$. We consider finite metric spaces with $|\V|=n$. Any finite metric space can be viewed as a weighted complete graph, and we often use graph terminology. For $\U,\W\subseteq\V$, let $\E(\U,\W)=\{\{\vertex{u},\vertex{w}\}\mid \vertex{u}\in\U,\,\vertex{w}\in\W,\,\vertex{u}\ne\vertex{w}\}$, and $\E(\U):=\E(\U,\U)$. For an edge set $\X$, let $\V(\X)$ denote the set of vertices incident to edges in $\X$. For $\edge=\{\vertex{u},\vertex{v}\}\in\E(\V)$, we define  $\dist(\edge):=\dist(\vertex{u},\vertex{v})$. For $\U,\W\subseteq\V$ and $\vertex{u}\in\U$, define 
$\nrank_\W(\vertex{u};\U)=1+|\{\vertex{x}\in\U:\,d(\vertex{x},\W)<d(\vertex{u},\W)\}|$, 
the position of $\vertex{u}$ when $\U$ is ordered by nearest-neighbor distance to $\W$.

\par  While in the {\rmodel} we do not have access to distances, the quadruplet oracle allows us to approximately order edges based on their weights (distance). For an edge set $\X\subseteq\E$, let $\pi_\X$ denote an \emph{ordered sequence} of the edges in $\X$.
 For an edge $\edge\in\X$, we use $\rank_\X(\edge)$ to denote its position among the edges in $\X$ when sorted in ascending order of distance,~\footnote{For simplicity, we assume all pairwise distances are unique.} and  $\rank_{\pi_\X}(\edge)$ for its index in $\pi_\X$. The \emph{dislocation} of $\edge$ under $\pi_\X$ is defined as $|\rank_{\pi_\X}(\edge)-\rank_\X(\edge)|$. The maximum dislocation of $\pi_\X$ is bounded by $\disloc$ if the dislocation of every edge in $\X$ is bounded by $\disloc$. It is easy to verify that if $\pi_\X$ has maximum dislocation $\disloc$, then any subsequence $\pi\sqsubseteq\pi_\X$ also has maximum dislocation at most $\disloc$.
The next lemma follows from a straightforward application of a result by~\cite{ geissmann2025optimal}.

\begin{lemma}\label{corr:prob_sort}
Let $\Sigma=(\V,\dist)$ be a metric with $|\V|=n$, and $\E=\E(\V)$ be its edge set. Suppose $\QPOracle:\E\times\E\to\{\textsc{yes},\textsc{no}\}$ is a probabilistic quadruplet oracle with noise $\perror\in[0,\tfrac14]$. There exists an algorithm $\ProbSort$ such that for any $\X\subseteq\E$, with probability $1-n^{-4/3}$, $\ProbSort(\X)$ outputs an ordering $\pi_\X$ with maximum dislocation $O(\log n)$ using $O(\max(\rev{|\X|},n)\log n)$ queries to $\QPOracle$.
\end{lemma}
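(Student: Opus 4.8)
The plan is to reduce Lemma~\ref{corr:prob_sort} directly to the known result on noisy sorting with a probabilistic comparison oracle, namely the $O(\log n)$-dislocation sorting algorithm of~\cite{geissmann2025optimal}. The key observation is that a quadruplet query $\QPOracle(\edge_1,\edge_2)$ on a pair of edges is exactly a noisy comparison between the two real numbers $\dist(\edge_1)$ and $\dist(\edge_2)$: it returns the correct order of $\dist(\edge_1)$ versus $\dist(\edge_2)$ with probability at least $1-\perror$, and by persistence the answer is fixed once per pair and antisymmetric. So for a fixed edge set $\X\subseteq\E$ we are in precisely the setting of sorting $|\X|$ elements (the edges, keyed by their distances) under a persistent probabilistic comparison oracle with noise $\perror<\tfrac14$.

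**First I would** set $N=|\X|$ and instantiate the algorithm of~\cite{geissmann2025optimal} on the ground set $\X$, treating each call to its comparison primitive as one call to $\QPOracle$ on the corresponding pair of edges; since distances are assumed distinct, the underlying total order is well-defined. That algorithm produces, with high probability in $N$, an ordering whose maximum dislocation is $O(\log N)$ using $O(N\log N)$ comparisons. **Then I would** translate the guarantees into the form stated in the lemma. For the failure probability: if $N\ge n^{c}$ for a suitable constant $c$ the native high-probability bound already gives $1-n^{-4/3}$; if $N$ is small (say $N<n$), one pads the instance with $\Theta(n)$ dummy elements — or equivalently runs the routine with its confidence parameter set against $n$ rather than $N$, which only inflates the query count to $O(n\log n)$ — so that the error probability is measured against $n$ and is at most $n^{-4/3}$. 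This is exactly why the query bound in the statement is $O(\max(|\X|,n)\log n)$ rather than $O(|\X|\log|\X|)$: we always want the concentration to be in terms of the global parameter $n$, since $\ProbSort$ will be invoked on many edge subsets and we need a union bound. For the dislocation: $O(\log N)=O(\log n)$ since $N\le\binom{n}{2}$, so $\log N = O(\log n)$.

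**The main obstacle**, such as it is, is bookkeeping rather than mathematics: making sure the probabilistic-noise model in the cited sorting result matches ours (persistent, independent-across-pairs, antisymmetric noise with rate $<1/4$) and that ``dislocation'' there is defined as here (absolute difference between output index and true rank). Once those definitions are aligned the proof is a one-line invocation. I would also remark, for use elsewhere in the paper, that since $\pi_\X$ has maximum dislocation $O(\log n)$, every subsequence $\pi\sqsubseteq\pi_\X$ inherits maximum dislocation $O(\log n)$ (already noted in the preliminaries), and that distinctness of distances is only a convenience — ties can be broken arbitrarily and consistently, e.g.\ by a fixed order on edges, without affecting the bounds.
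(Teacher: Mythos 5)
Your proposal matches the paper's approach exactly: the paper states only that the lemma ``follows from a straightforward application of a result by~\cite{geissmann2025optimal},'' which is precisely the reduction you spell out (treat $\QPOracle$ as a persistent noisy comparator on the totally ordered set of edge-distances, invoke the $O(\log)$-dislocation sorter, and set the confidence parameter against $n$ to obtain the $n^{-4/3}$ failure bound and the $O(\max(|\X|,n)\log n)$ query count). Your elaboration is correct and fills in the details the paper leaves implicit.
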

\vspace{-0.5em}

  A small bound on the maximum dislocation of $\pi_\X$ ensures that the ranks are approximately preserved; however, it offers no guarantees about the relative magnitudes of edges that appear in the wrong order. For such guarantees, we require a stronger notion of approximate ordering. For an index $i\in [|\X|]$, let $\pi_\X[i]\in \X$ denote the edge in the $i$-th position of $\pi_\X$. For a constant $\alpha\ge1$, we say $\pi_\X$ is $\alpha$-\emph{sorted}, if for all $i<j\in [|\X|]$, $\dist(\pi_\X[i])\le \alpha\,\dist(\pi_\X[j])$.
While in this paper we study quadruplet oracles under the probabilistic noise model, prior work~(see, for example  \cite{addanki2021design}) also considered a weaker \emph{adversarial noise model}.

\emph{Quadruplet Oracle with Adversarial Noise.}\label{Adversarial-oracle}\footnote{Most of our results extend to the adversarial model as well, but we focus on the more challenging probabilistic case.} 
Let $\mu \in \Re_{\geq 0}$ be a constant. A quadruplet oracle with \emph{adversarial noise} $\mu$ is a function $\QOracle : \E \times \E \to \{\textsc{Yes}, \textsc{No}\}$ that, given two edges $\edge_1=\{\vertex{v}_1,\vertex{v}_2\}$ and $\edge_2=\{\vertex{v}_3,\vertex{v}_4\}$, outputs \textsc{Yes} if $\dist(\vertex{v}_1,\vertex{v}_2) \leq \tfrac{1}{1+\mu}\,\dist(\vertex{v}_3,\vertex{v}_4)$, \textsc{No} if $\dist(\vertex{v}_1,\vertex{v}_2) \geq (1+\mu)\,\dist(\vertex{v}_3,\vertex{v}_4)$, and an adversarially chosen (non-adaptive) answer whenever the ratio $\dist(\vertex{v}_1,\vertex{v}_2)/\dist(\vertex{v}_3,\vertex{v}_4)$ lies in the interval $[1/(1+\mu),\,1+\mu]$.

In other words, the oracle gives the correct response when the edge weights are not relatively close but may be adversarially wrong otherwise. The next lemma shows that under the adversarial noise model, it is possible to compute an $O(1)$-sorted sequence of edges using the quadruplet oracle.
\begin{lemma}\label{corr:adv_sort}
Let $\Sigma=(\V,\dist)$ be a metric with $|\V|=n$, $\E=\E(\V)$ the edge set, and $\QOracle$ an adversarial quadruplet oracle with noise $\mu\ge0$. There exists an algorithm $\textsc{AdvSort}$ such that for any $\X\subseteq\E$ of size $m$, with probability $1-n^{-4}$, $\textsc{AdvSort}(\X)$ outputs a $(1+\mu)^2$-sorted sequence $\pi_\X$ using $O(m\, \polylog n)$ queries to $\QOracle$.
\end{lemma}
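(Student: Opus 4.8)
The plan is to realize $\textsc{AdvSort}$ as \emph{randomized quicksort} driven by the oracle $\QOracle$, and to show that the structural guarantee of the adversarial oracle forces the output to be $(1+\mu)^2$-sorted \emph{for every} sequence of pivot choices; randomness is then used only to keep the number of comparisons near-linear. Concretely, $\textsc{AdvSort}(\X)$ picks a uniformly random pivot $\pivot\in\X$, queries $\QOracle(\edge,\pivot)$ for every $\edge\in\X\setminus\{\pivot\}$, forms $\Less=\{\edge:\QOracle(\edge,\pivot)=\textsc{Yes}\}$ and $\More=\{\edge:\QOracle(\edge,\pivot)=\textsc{No}\}$, recursively sorts each, and returns $\pi_\Less\cdot(\pivot)\cdot\pi_\More$. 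The engine of the analysis is the \emph{partition property}: because the oracle is \emph{forced} to answer \textsc{No} whenever $\dist(\edge)\ge(1+\mu)\dist(\pivot)$ and \textsc{Yes} whenever $\dist(\edge)\le\tfrac{1}{1+\mu}\dist(\pivot)$, every $\edge\in\Less$ satisfies $\dist(\edge)<(1+\mu)\dist(\pivot)$ and every $\edge\in\More$ satisfies $\dist(\edge)>\tfrac{1}{1+\mu}\dist(\pivot)$.

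For correctness I would argue directly on pairs. Fix edges $\edge,\edge'$ with $\edge$ before $\edge'$ in $\pi_\X$, and let the \emph{separating node} be the first recursive call in which $\edge$ and $\edge'$ do not lie in the same sub-instance; denote that node's pivot by $\pivot$. Since $\edge$ precedes $\edge'$ in the output, exactly one of three cases holds: (i) $\edge=\pivot$ and $\edge'\in\More$; (ii) $\edge\in\Less$ and $\edge'=\pivot$; (iii) $\edge\in\Less$ and $\edge'\in\More$. Using the partition property and, in cases involving $\More$, the rewriting $\dist(\pivot)<(1+\mu)\dist(\edge')$, each case gives $\dist(\edge)<(1+\mu)^2\dist(\edge')$ — at most one factor $(1+\mu)$ in (i) and (ii), two in (iii). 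Hence $\pi_\X$ is $(1+\mu)^2$-sorted, and this holds for any realization of the random pivots (and does not even need the oracle to be persistent).

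For the query bound I would invoke the textbook analysis of randomized quicksort: the recursion $T(m)=(m-1)+T(|\Less|)+T(|\More|)$ with a uniform pivot uses $O(m\log m)$ comparisons in expectation, which is $O(m\log n)$ since $m\le\binom{n}{2}$. To make the bound hold with probability $1-n^{-4}$ uniformly in $m$, $\textsc{AdvSort}$ caps a single run at $\Theta(m\log n)$ oracle calls, aborting and restarting with fresh randomness if the cap is reached; by Markov's inequality a run completes within the cap with probability at least, say, $4/5$, so after $\Theta(\log n)$ independent attempts the probability that all abort is at most $n^{-4}$. In the complementary event some attempt completes and therefore outputs a valid $(1+\mu)^2$-sorted sequence, and the total number of queries over all attempts is $\Theta(\log n)\cdot\Theta(m\log n)=O(m\log^2 n)=O(m\,\polylog n)$, as claimed.

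The main obstacle — and the reason one cannot simply quote an off-the-shelf sorting routine — is the \emph{inconsistency} of the adversarial oracle: on edges of comparable length its answers need not be transitive, so any argument that chains together several unreliable comparisons is doomed, as the distortion would compound with the recursion depth. The quicksort structure circumvents this precisely because every ordered pair in the output is certified by a \emph{single} pivot at its separating node, which pins the approximation factor at $(1+\mu)^2$ independently of depth; the only remaining delicate point is boosting the near-linear query bound from the standard $1-1/\operatorname{poly}(m)$ to $1-n^{-4}$ uniformly in $m$, which the abort-and-restart wrapper handles.
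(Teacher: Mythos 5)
Your correctness argument (the ``separating node'' charge: exactly one pivot certifies each ordered pair, giving at most two factors of $(1+\mu)$) is right, and it matches the approach the paper attributes to \textsc{AdvSort} — a modification of randomized quicksort, with the proof deferred to \cite{raychaudhury2025metric} and \cite{acharya2018maximum}. The query-complexity step, however, has a real gap. You ``invoke the textbook analysis of randomized quicksort'' to get $O(m\log m)$ expected comparisons, but that analysis hinges on the comparator being a total order, so that a uniform pivot yields a split whose size is uniformly distributed over $\{0,\ldots,m-1\}$ and the recursion $T(m)=(m-1)+T(|\Less|)+T(|\More|)$ closes. An adversarial oracle destroys exactly this structure. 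Concretely: take $\X$ with all edge lengths within a factor $(1+\mu)$ of one another. Every query then has ratio in $[1/(1+\mu),\,1+\mu]$, so the (non-adaptive) adversary may answer \textsc{Yes} to \emph{every} query $\QOracle(\edge,\pivot)$, forcing $\Less(\pivot)=\X\setminus\{\pivot\}$ and $\More(\pivot)=\emptyset$ at every node; the recursion tree is a path of depth $m$ and the algorithm makes $\Theta(m^2)$ comparisons, not $O(m\,\polylog n)$. (The output happens to still be $(1+\mu)^2$-sorted in this degenerate instance, but the query bound in the lemma fails.) Note that the paper's definition of the adversarial quadruplet oracle does not impose antisymmetry, so this adversary is admissible.

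Even if one additionally assumes antisymmetric answers — so the comparator is a tournament — the $O(m\log m)$ expectation is still not ``textbook.'' The usual indicator-variable or recursion argument silently uses transitivity (e.g., every element strictly between $u$ and $v$ stays with both of them until one of the three is the pivot). For a general tournament one must argue separately that a uniformly random pivot splits reasonably evenly with constant probability, for instance by the observation that any $m'$-vertex tournament has at most $2c+1$ vertices of in-degree at most $c$, so a random pivot lands in the central range of in-degrees with probability $\Omega(1)$ — a fact that holds recursively in every sub-tournament and yields $O(\log m)$ depth with high probability. Your proof needs either this tournament-specific argument together with an explicit enforcement of antisymmetry (e.g., querying both orders and breaking ties canonically), or some other modification of quicksort that handles the all-ambiguous regime; as written, the step that ``the recursion uses $O(m\log m)$ comparisons in expectation'' is unjustified. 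Your restart-with-cap wrapper and the upgrade to success probability $1-n^{-4}$ are fine conditional on that expectation bound.
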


 The proof of Lemma~\ref{corr:adv_sort} can be found in ~\cite{raychaudhury2025metric} and is based on a similar result by \cite{ acharya2018maximum}. The actual algorithm, $\textsc{AdvSort}$, is a slight modification of the classic randomized \textsc{QuickSort} algorithm.\\Although in this paper we work in the probabilistic noise model, in the analysis of our algorithms, we show that when certain very specific structural conditions hold, it is feasible to emulate an adversarial quadruplet oracle by appropriate calls to the probabilistic quadruplet oracle. In such situations, we will be able to plug in an \emph{emulated adversarial quadruplet oracle} into $\textsc{AdvSort}$  to obtain an $O(1)$-sorted ordering of edges. We discuss more in the technical overview.

\section{Technical Overview}\label{sec:technical-overview}
Let $\Sigma=(\V,\dist)$ be a metric space accessible in the {\rmodel}, i.e., there exists a noisy probabilistic quadruplet oracle $\QPOracle$ that compares edge weights in $\Sigma$. We assume that the error rate of $\QPOracle$ satisfies $\perror<1/4$.
Our goal is to design an algorithm which, given parameters $k,p\in\mathbb{Z}_{\geq 1}$ and access to $\QPOracle$, returns a {\coreplus} for $(k,p)$-clustering of size $O(k\,\polylog n)$. For simplicity, we focus on the $k$-median objective ($p=1$), but our approach extends naturally to \rev{any constant $p>1$}.  
\\ The remainder of this section is organized as follows. We first describe a generic approach for computing a {\coreplus} under a perfect quadruplet oracle. We then present our algorithm for general metric spaces, $\AlgGen$, which adapts this high-level strategy to the noisy setting using $O(n\,k\polylog n)$ queries. Next, we introduce $\AlgDoub$, which further reduces the query complexity to $O((n+k^2)\,\polylog n)$ when $\Sigma$ has bounded doubling dimension. Finally, we outline a refinement method, $\AlgImpDoub$, which takes a {\coreplus} returned by the previous algorithm and builds an $\eps$-{\coreplus} with $O(n\,\polylog n)$ additional queries.  Due to space constraints, full details and proofs of these algorithms are deferred to Appendix~\ref{sec:Gen},~\ref{sec:Doubling}, and~\ref{sec:improvedapprox}, respectively.

\par Let $\C^\star$ denote an optimal $k$-median solution, i.e., $|\C^\star|=k$ and $\CostGen_{\C^\star}^1(\V)=\opt^1_k$. Recall that a {\coreplus} is defined as a set $\Coreset\subseteq \V$ together with a mapping $\M:\V\to\Coreset$ such that $\dist(\vertex{v},\Coreset)\leq O(1)\cdot\opt^1_k$ for all $\vertex{v}\in\V$. The generic algorithm is based on a well-known \emph{sampling property} of $k$-median clustering~\cite{har2004coresets,MettuP02}. Suppose we take a random sample $\Sample \subseteq \V$ of size $\Theta(k\,\polylog n)$. We call a vertex $\vertex{v}\in \V$ \emph{good} if $\dist(\vertex{v},\Sample) < 2\dist(\vertex{v},\C^\star)$, and \emph{bad} otherwise. Let $\V_b\subseteq\V$ denote the set of bad vertices. It can be shown that, with high probability, $|\V_b|=o(|\V|)$.

\vspace{-0.5em}
  \paragraph{Generic Algorithm} The above sampling property leads to a natural recursive sampling algorithm. Sample a set $\Sample \subseteq \V$ of $O(k\,\polylog n)$ vertices, order the vertices in $\V$ in ascending order by their nearest-neighbor distance to $\Sample$, remove a constant fraction subset of the first half from $\V$, and recurse. The process continues until there are $O(k\,\polylog n)$ remaining vertices.  The union of all samples across rounds (along with the remaining vertices in the last round) constructs an $O(1)$-\coreplus\ of size $O(k\,\polylog n)$. The mapping is defined by assigning each vertex $v \in \V$ to its nearest neighbor among the sampled vertices from the round in which $v$ was removed. The main argument is that in any round, there are sufficiently many good vertices in the second half (with larger distances from $\Sample$) that can account for the accidentally removed bad vertices. By a careful analysis, one can show that across rounds, there exists a bijection between the bad vertices and these good vertices. 

\begin{wrapfigure}{r}{0.35\textwidth}
    \vspace{-1em}
    \centering
    \includegraphics[width=0.35\textwidth]{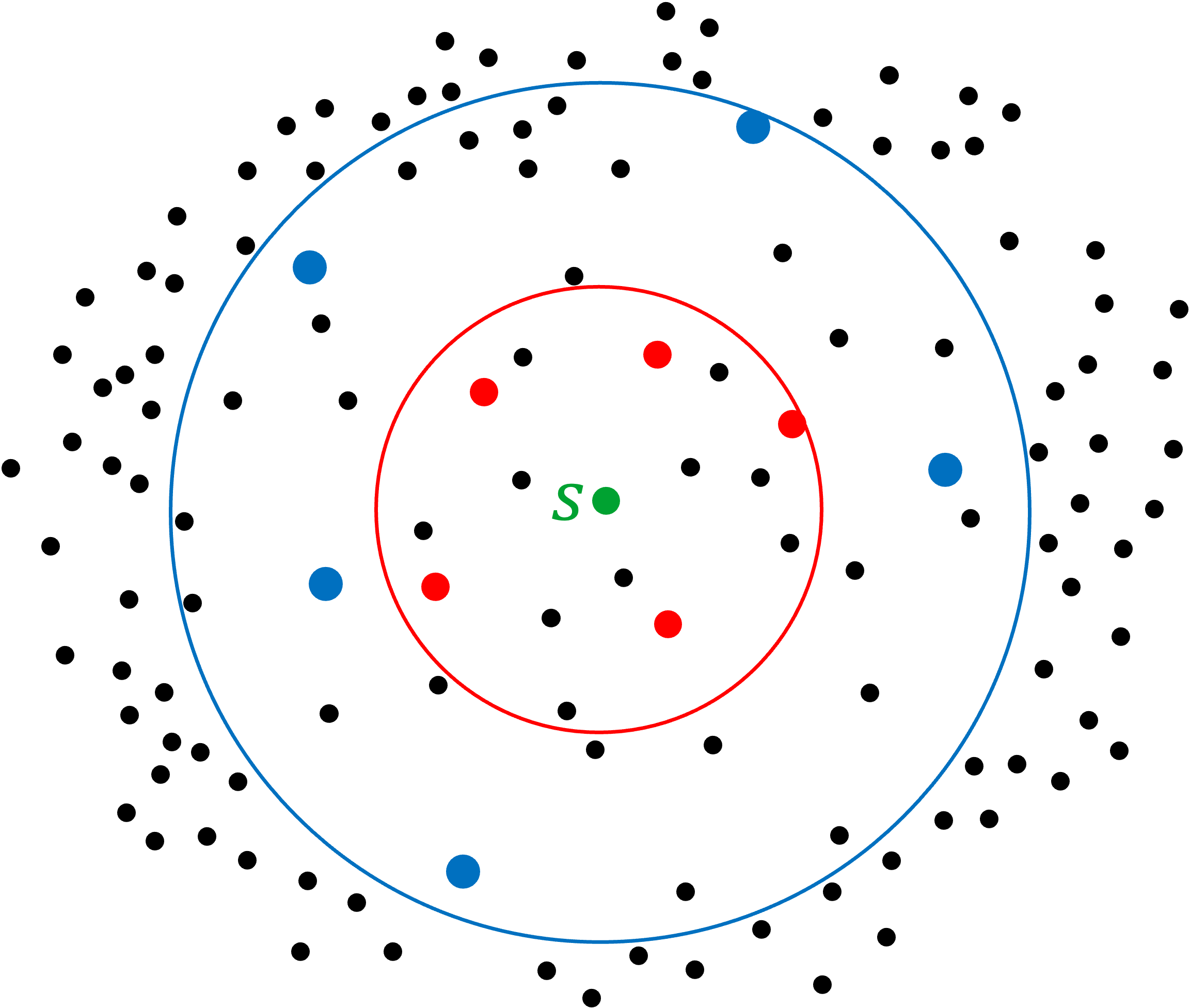}
    \vspace{-1em} 
    \caption{\rev{Let $\vsample \in \Sample_i^{(1)}$ be the green vertex. The vertices in $\Core_i(\vsample)$ are shown in red, and those in $\LGuard_i(\vsample)$ are shown in blue. All remaining vertices in $\V$ are depicted as black points. The combined set $\Core_i(\vsample) \cup \LGuard_i(\vsample)$ consists of vertices that are close to $\vsample$ in rank relative to all vertices in $\V$, with those in $\Core_i(\vsample)$ being closer to $\vsample$ than those in $\LGuard_i(\vsample)$. All black vertices inside the red circle will be filtered out. No vertex outside the blue circle will be filtered out. Some vertices between the red and blue circles may also be filtered out.}}
          \label{fig:Example}
\end{wrapfigure}
At a high level, our algorithms $\AlgGen$ and $\AlgDoub$ emulate this approach. However, there are several obstacles. In order to succeed, in each round we need to map vertices in $\V$ to (approximate) nearest neighbors in $\Sample$ before ordering them by distance. Although in the {\rmodel} we can order the edges  $\E(\Sample,\V)$ with $O(\log n)$ dislocation using the $\ProbSort$ primitive, such an ordering is not sufficient to find approximate nearest neighbors. We need a much more sophisticated approach.

\paragraph{Overview of $\AlgGen$ (Appendix~\ref{sec:Gen})} The algorithm operates in rounds. 
Let $\V_i \subseteq \V$ denote the set of active vertices in round $i$. In each round, the algorithm takes two random samples $\Sample_i^{(1)}$ and $\Sample_i^{(2)}$ of sizes $\Theta(k\log^2 n)$ and $\Theta(k\log^3 n)$ respectively. The first sample $\Sample_i^{(1)}$ plays the same role as the sample set in the generic algorithm described above. The second sample set $\Sample_i^{(2)}$ is used as follows. The algorithm applies the $\ProbSort$ primitive to approximately order the edges $\X_i=\E(\Sample_i^{(1)},\Sample_i^{(2)})$, requiring $O(k^2\,\polylog n)$ quadruplet queries. From this ordering, the algorithm identifies for each $\vsample \in \Sample_i^{(1)}$ two disjoint sets of $\Theta(\log n)$ vertices: a \emph{kernel set} $\Core_i(\vsample)\subset \Sample_i^{(2)}$ and a \emph{guard set} $\LGuard_i(\vsample)\subset \Sample_i^{(2)}$. These sets satisfy the following properties:  
\vspace{-1em}
\begin{enumerate}[label=(\roman*), leftmargin=1.5em]
  \item For every $\vsample \in \Sample_i^{(1)}$ and every $\vertex{v} \in \Core_i(\vsample)\cup \LGuard_i(\vsample)$,  
  $\nrank_{\vsample}(\vertex{v},\V_i) \le \tfrac{|\V_i|}{k\,\polylog n}$.  
  \vspace{-0.8em}
  \item For every $\vsample \in \Sample_i^{(1)}$, any $\vcore \in \Core_i(\vsample)$, and any $\vguard \in \LGuard_i(\vsample)$,  
  $\dist(\vsample,\vcore) < \dist(\vsample,\vguard)$.  
\end{enumerate}  
\vspace{-1em}
The first property implies that for any $\vsample\in \Sample_i^{(1)}$ vertices in both $\Core_i(\vsample)$ and $\LGuard_i(\vsample)$ are very close to $\vsample$ in terms of rank. The second property ensures that kernel vertices are strictly closer than their respective guard vertices.
\rev{An example of a kernel and guard set is shown in Figure~\ref{fig:Example}.}
These sets play complementary roles: guard vertices are used to filter all vertices from $\V_i$ that are too close to $\Sample_i^{(1)}$, while kernel vertices are used to compute approximate nearest neighbors in $\Sample_i^{(1)}$ for the rest of the vertices. 

\par \emph{Filtering.}   Next, the algorithm for each $\vertex{v}\in\V_i\setminus( \Sample_i^{(1)}\cup \Sample_i^{(2)})$ and $\vertex{s}\in \Sample_i^{(1)}$, compares against the guard vertices to compute \emph{proximity scores}:
\vspace{-0.5em}
\begin{equation}
\label{eq:proxscore}
\proxcount_{\vsample}(\vertex{v}):=\sum_{\vguard\in\LGuard_i(\vsample)}\mathbf{1}\{\QPOracle(\{\vsample,\vertex{v}\},\{\vsample,\vguard\})\ \text{returns } ``d(\vsample,\vertex{v})\le d(\vsample,\vguard)"\},
\end{equation}
Based on these scores, the algorithm computes a subset 
$\V_i'=\{\vertex{v}\in\V_i\setminus(\Sample_i^{(1)}\cup \Sample_i^{(2)})\mid\max_{\vsample\in\Sample_i^{(1)}}\proxcount_{\vsample}(\vertex{v})<\lfloor\cgsize/2\rfloor\}$, where $\cgsize$ is
a suitable threshold of size $\Theta(\log n)$. Computing $\V_i'$ requires $O(nk\,\polylog n)$ quadruplet queries. 
Recall that the generic algorithm, in each round, orders vertices based on their nearest neighbor distance to the sample and removes a fraction of the first half. In the presence of noise, we cannot find the nearest neighbors of all vertices in $\V_i$. It turns out that $\V_i'$ has sufficient structural properties for the algorithm to be able to find their nearest neighbors in $\Sample_i^{(1)}$. In the analysis, we show that filtering guarantees that no vertex in $\V_i'$ is closer to a sample vertex $\vsample \in \Sample_i^{(1)}$ than the kernel vertices $\Core_i(\vsample)$, while at the same time not discarding too many additional vertices. Formally, filtering ensures that with high probability, $|\V_i'|\geq \tfrac{3}{5}|\V_i|$ and  
$\forall \vertex{v}\in \V_i',\;\forall \vsample \in \Sample_i^{(1)}:\; 
\dist(\vertex{v},\vsample) > r_\vsample$,
where $r_\vsample := \max_{\vcore \in \Core_i(\vsample)} \dist(\vsample,\vcore)$ denotes the \emph{kernel radius} of $\vsample$. 
\rev{An example of filtering vertices is shown in Figure~\ref{fig:Example}.}



\emph{Finding approximate nearest neighbors.}
The key insight is the following. Consider any two sample vertices $\vsample_1,\vsample_2 \in \Sample_i^{(1)}$ and any two vertices $\vertex{v}_1,\vertex{v}_2 \in \V\setminus \Sample_i^{(2)}$, and suppose the following conditions hold:
\begin{enumerate}[ leftmargin=1.5em]
  \item[(i)] $\dist(\vsample_1,\vertex{v}_1) > r_{\vsample_1}$ and $\dist(\vsample_2,\vertex{v}_2) > r_{\vsample_2}$,
  \item[(ii)] we know which kernel has the smaller radius, i.e., whether $r_{\vsample_1} \leq r_{\vsample_2}$ or vice versa.
\end{enumerate}
\vspace{-0.5em}
In the analysis, we show that when the above conditions hold, by making appropriate comparisons with the smaller kernel, we can design a test procedure that answers whether $\dist(\vertex{s}_1,\vertex{v}_1) < \dist(\vertex{s}_2,\vertex{v}_2)$.
The test is correct with high probability when the two distances differ by more than a $2$ factor, i.e.,
$
{\dist(\vsample_1,\vertex{v}_1)}/{\dist(\vsample_2,\vertex{v}_2)} > 2
\quad\text{or}\quad
{\dist(\vsample_2,\vertex{v}_2)}/{\dist(\vsample_1,\vertex{v}_1)} > 2
$.
Observe that this behavior is similar to a \emph{quadruplet oracle with adversarial noise} with error $\mu=1$~(see Section~\ref{sec:prelim}). 
 We design a procedure $\AlgTest$ (see Section~\ref{sec:AlgTest}), based on this test. \\ Our algorithm runs $\textsc{AdvSort}$~(see Section~\ref{sec:prelim}) with $\AlgTest$ as the comparator to order the edges in $\Y_i=\E(\Sample_i^{(1)},\V_i')$.
 Whenever $\AlgTest$ is asked to compare two edges from $\Y_i$, the first condition is satisfied by the construction of $\V_i'$, and $\AlgTest$ uses the ordering of $\X_i$ to determine which kernel has a smaller radius. By the preceding discussion, on all such queries, it behaves akin to an adversarial quadruplet oracle with noise $\mu=1$.
 This immediately yields a $4$-approximate nearest neighbor in $\Sample_i^{(1)}$ for every vertex in $\V_i'$. Each call to $\AlgTest$ triggers $O(\log n)$ quadruplet queries, and $\textsc{AdvSort}$ calls $\AlgGen$ a total of $O(nk\,\polylog n)$ times. 
 
\par The previous step computes an approximate nearest neighbor for each vertex in $\V_i'$ within the set $\Sample_i^{(1)}$. In the next step, the algorithm again applies $\textsc{AdvSort}$, as before, to approximately order the vertices in $\V_i'$ according to their estimated nearest-neighbor distances, and then identifies a prefix $\V_i'' \subseteq \V_i'$. Intuitively, this set $\V_i''$ corresponds to the set of vertices removed by the generic algorithm in a round. The algorithm also \emph{maps} each vertex in $\V_i''$ to their neighbor in $\Sample_i^{(1)}$ found previously. 
\\ Next, the algorithm recurses on $\V_{i+1}=\V_i\setminus (\V_i''\cup\Sample_i^{1}\cup\Sample_i^{2})$. The process terminates after $\nrounds=O(\log n)$ rounds, at which point we output the set $\Coreset=\bigcup_{i=1}^\nrounds (\Sample_i^{(1)}\cup \Sample_i^{(2)})$ of $O(k\,\polylog n)$ centers that consists of the union of all samples. The mapping function $\M$ is defined by the per-round mappings of the sets $\V_i''$ to the sets $\Sample_i^{(1)}$.

The complete algorithm along with the analysis is available in Appendix~\ref{sec:Gen}. We conclude with the next theorem.

\begin{theorem}\label{thm:gen}
Let $\Sigma = (\V,\dist)$ be a finite metric space of size $|\V|=n$, which is accessible under the \rmodel. There exists a randomized algorithm $\AlgGen$ that, given parameters $k,p\in \mathbb{Z}_{+}$, with high probability returns an $O(1)$-\coreplus\ for $(k,p)$-clustering using $O(nk\,\polylog n)$ calls to the quadruplet oracle.
\end{theorem}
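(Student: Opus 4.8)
The plan is to instantiate the recursive-sampling strategy sketched in the technical overview and carefully account for the noise introduced by the probabilistic quadruplet oracle. I would proceed in rounds $i=1,\dots,\nrounds$ with $\nrounds=O(\log n)$, maintaining an active set $\V_i$ with $|\V_{i+1}|\le \tfrac{4}{5}|\V_i|$ (say), so that after $O(\log n)$ rounds $|\V_\nrounds|=O(k\,\polylog n)$ and we can add all survivors to $\Coreset$ directly. In each round I would draw the two samples $\Sample_i^{(1)}$ and $\Sample_i^{(2)}$ of sizes $\Theta(k\log^2 n)$ and $\Theta(k\log^3 n)$, sort $\X_i=\E(\Sample_i^{(1)},\Sample_i^{(2)})$ using $\ProbSort$ (Lemma~\ref{corr:prob_sort}, costing $O(k^2\,\polylog n)$ queries since $|\X_i|=O(k^2\,\polylog n)$), extract the kernel and guard sets $\Core_i(\vsample),\LGuard_i(\vsample)$ satisfying properties (i) and (ii), perform the proximity-score filtering to obtain $\V_i'$ with $|\V_i'|\ge \tfrac{3}{5}|\V_i|$, then run $\textsc{AdvSort}$ with the $\AlgTest$ comparator (Lemma~\ref{corr:adv_sort}) twice — once on $\Y_i=\E(\Sample_i^{(1)},\V_i')$ to get $4$-approximate nearest neighbors, once to order $\V_i'$ by estimated NN-distance — and finally strip off the prefix $\V_i''$ and map it into $\Sample_i^{(1)}$.

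\textbf{Correctness of the \coreplus\ guarantee.} The heart of the analysis is the sampling property: with a sample of size $\Theta(k\,\polylog n)$, all but a $o(1)$-fraction of $\V_i$ are \emph{good}, i.e.\ have $\dist(\vertex{v},\Sample_i^{(1)})< 2\dist(\vertex{v},\C^\star)$. I would first establish, via a union bound over the $O(\log n)$ rounds and the high-probability guarantees of $\ProbSort$, $\textsc{AdvSort}$, and the Chernoff bounds controlling the proximity scores, that with high probability in every round the following all hold simultaneously: the kernel/guard properties (i)–(ii); the filtering bound $|\V_i'|\ge\tfrac35|\V_i|$ together with $\dist(\vertex{v},\vsample)>r_\vsample$ for all $\vertex{v}\in\V_i'$, $\vsample\in\Sample_i^{(1)}$; and that $\AlgTest$, restricted to comparisons arising in the $\textsc{AdvSort}$ call on $\Y_i$, behaves as an adversarial oracle with $\mu=1$ (so $\textsc{AdvSort}$ returns an $O(1)$-sorted ordering, yielding $4$-approximate NNs and an $O(1)$-sorted order of $\V_i'$ by NN-distance). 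Conditioning on this good event, I would then mimic the generic analysis: the removed prefix $\V_i''$ consists mostly of good vertices mapped to within $O(1)\cdot\dist(\vertex{v},\C^\star)$, and the few bad vertices in $\V_i''$ can be charged, via a global bijection argument across rounds, to an equal number of good, not-yet-removed vertices in the ``second half'' of $\V_i'$ whose $\C^\star$-distances are at least as large. Summing over rounds gives $\sum_{\vertex{v}\in\V}\dist(\vertex{v},\M(\vertex{v}))\le O(1)\cdot\opt_k^1$, and the case $p>1$ follows by the same argument with $p$-th powers and constants depending on $p$ (absorbing an extra $2^p$-type factor from the approximations into the $O(1)$).

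\textbf{Query complexity.} I would tally the per-round cost: $\ProbSort$ on $\X_i$ is $O(k^2\,\polylog n)$; computing all proximity scores is $O(nk\,\polylog n)$ since each of $O(n)$ vertices is compared against $|\Sample_i^{(1)}|\cdot\Theta(\log n)=O(k\,\polylog n)$ guard pairs; $\textsc{AdvSort}$ on $\Y_i$ (which has $|\Y_i|=O(nk\,\polylog n)$ edges) makes $O(|\Y_i|\,\polylog n)$ comparator calls, each call to $\AlgTest$ triggering $O(\log n)$ quadruplet queries, for a total of $O(nk\,\polylog n)$; and the second $\textsc{AdvSort}$ on $\V_i'$ is cheaper. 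Since $k\le n$ the $O(k^2\,\polylog n)$ term is dominated, so each round costs $O(nk\,\polylog n)$, and $\nrounds=O(\log n)$ rounds give $O(nk\,\polylog n)$ total, with the $\polylog n$ factor absorbing the $\log n$-from-rounds.

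\textbf{The main obstacle} is the ``finding approximate nearest neighbors'' step — specifically, proving that $\AlgTest$, built from pairwise comparisons against the smaller kernel, genuinely simulates an adversarial oracle with constant noise whenever the two query distances differ by more than a factor of $2$. This requires showing that the kernel radius $r_\vsample$ sandwiches $\dist(\vsample,\cdot)$ tightly enough (up to $O(1)$) for vertices surviving the filter, that the persistence of $\QPOracle$ does not cause correlated failures across the $\Theta(\log n)$ comparisons inside a single $\AlgTest$ call (handled by fresh guard/kernel vertices and Chernoff), and that knowing which of $r_{\vsample_1},r_{\vsample_2}$ is smaller (from the $O(\log n)$-dislocation order of $\X_i$) suffices despite that order itself being noisy. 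Threading the dislocation bound of $\ProbSort$ through to a clean factor-$2$ separation guarantee for $\AlgTest$ is the delicate part; everything downstream is then a fairly standard coreset/bijection argument.
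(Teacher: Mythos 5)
Your proposal mirrors the paper's own algorithm and analysis essentially point for point: the same two-sample structure with sizes $\Theta(k\log^2 n)$ and $\Theta(k\log^3 n)$, the same kernel/guard construction and proximity-score filtering yielding $|\V_i'|\ge\tfrac35|\V_i|$, the same use of $\AlgTest$ to emulate an adversarial oracle inside $\textsc{AdvSort}$, the same bad-to-good injection across rounds, and the same per-round query tally. The only (inconsequential) deviations are cosmetic — e.g., the paper derives the ordering of $\N_i$ as a subsequence of $\pi_{\Y_i}$ rather than invoking $\textsc{AdvSort}$ a second time, and removes a $\tfrac14$-fraction per round rather than $\tfrac15$ — so this is the same proof.
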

 
\vspace{-0.5em}
\paragraph{Overview of $\AlgDoub$ (Appendix~\ref{sec:Doubling})} 
When $\Sigma$ has bounded doubling dimension, we present an algorithm $\AlgDoub$ (Section~\ref{sec:AlgDoub}) that reduces the total number of quadruplet queries to $O((n+k^2)\,\polylog n)$. 
The algorithm follows a similar recursive-sampling framework as $\AlgGen$. Each round begins by drawing two random samples $\Sample_i^{(1)}$ and $\Sample_i^{(2)}$, and computing the kernel and guard sets for every $\vsample \in \Sample_i^{(1)}$. However, in order to achieve the desired query complexity bounds, we cannot afford to perform the full filtering and nearest-neighbor procedures used in $\AlgGen$. Instead, the algorithm proceeds as follows.  

\emph{Partitioning.} We first partition the vertices in $\Sample_i^{(1)}$ into classes $\Sample_i^{(1,1)},\ldots,\Sample_i^{(1,\chi_i)}$ such that no two vertices in the same class are \emph{close}. To do this, we construct a \emph{conflict graph} $G_i$ whose vertex set is $\Sample_i^{(1)}$, and add an edge between any two vertices that are close. Closeness is determined using proximity scores derived from the guard sets, as in Equation~\eqref{eq:proxscore}. In the analysis, we show that $G_i$ is $O(\log n)$-\emph{degenerate}, i.e., every subgraph of $G_i$ has a vertex of degree at most $O(\log n)$. It is known that a $\xi$-degenerate graph can be properly colored with $\xi+1$ colors using a simple greedy algorithm~\cite{lick1970k}. We use such a coloring  to obtain the classes $\Sample_i^{(1,1)},\ldots,\Sample_i^{(1,\chi_i)}$.  

\emph{Nearest neighbors.}  
Our next goal is to compute approximate nearest neighbors for vertices in $\V_i \setminus (\Sample_i^{(1)} \cup \Sample_i^{(2)})$ with respect to each class $\Sample_i^{(1,j)}$. In~\cite{raychaudhury2025metric} it was shown that given two disjoint sets $\U,\W \subseteq \V$ and access to an adversarial quadruplet oracle with noise $\mu$, that can answer quadruplet queries of the form $(\{\edge_1\}, \{\edge_2\})$, where $\edge_1\in \E(\U,\U)$, $\edge_2\in \E(\U,\W)$, one can construct a data structure, such that given a vertex $\vertex{v} \in \W$, it returns a subset of size $O(\polylog n)$ containing at least one vertex $\vertex{u}\in \U$, such that $\dist(\vertex{w},\vertex{u})\leq O(1)\cdot\dist(\vertex{w},\U)$.
Our plan is to apply this result to each class $\Sample_i^{(1,j)}$ for each $j=1,\ldots, \chi_i$, where we set $\U=\Sample_i^{(1,j)}$ and $\W=\V_i\setminus (\Sample_i^{(1)}\cup \Sample_i^{(2)})$.
To simulate the adversarial oracle, we again use $\AlgTest$ as in the general metric case. However, unlike in $\AlgGen$, we have not pre-filtered close vertices in $\V_i\setminus (\Sample_i^{(1)}\cup \Sample_i^{(1)})$. If we apply $\AlgTest$ to edges containing such vertices, it is not guaranteed to behave like an adversarial quadruplet oracle. Hence, we adopt a \emph{lazy filtering} strategy: whenever $\AlgTest$ is invoked to compare a pair of edges, we first compute proximity scores to ensure that the vertices are not too close. If a vertex is found to be too close, we discard it. This ensures correctness while avoiding the heavy global filtering step of $\AlgGen$.  In the analysis, we show that the overall number of quadruplet oracles required in this step is $O(n\,\polylog n)$.

\par Post finding approximate nearest neighbors, we proceed similarly as in $\AlgGen$. The full details are in Appendix~\ref{sec:Doubling}.
Overall, we get the next result.

\begin{theorem}\label{thm:genDD}
Let $\Sigma = (\V,\dist)$ be a finite metric space of size $|\V|=n$ with bounded doubling dimension, which is accessible under the \rmodel. There exists a randomized algorithm $\AlgDoub$ that, given parameters $k,p \in \mathbb{Z}_{+}$, with high probability returns an $O(1)$-\coreplus\ for $(k,p)$-clustering using $O((n+k^2)\,\polylog n)$ calls to the quadruplet oracle.
\end{theorem}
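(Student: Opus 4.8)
$\AlgDoub$ runs for $\nrounds=O(\log n)$ rounds on a shrinking set of active vertices $\V_i$ (with $\V_1=\V$), halting once $|\V_i|=O(k\,\polylog n)$, and outputs $\Coreset=\bigcup_i(\Sample_i^{(1)}\cup\Sample_i^{(2)})$ together with the mapping $\M$ assembled from the per-round maps of the removed prefixes. The $O(1)$-$\coreplus$ guarantee will follow from the same sampling property and charging argument already used for $\AlgGen$ (Theorem~\ref{thm:gen}): taking $\Sample_i^{(1)}$ of size $\Theta(k\log^2 n)$, a Chernoff bound gives that only an $o(1)$-fraction of $\V_i$ is \emph{bad} (has $\dist(\cdot,\Sample_i^{(1)})\ge 2\dist(\cdot,\C^\star)$), and across rounds one exhibits a bijection from the accidentally-removed bad vertices to good vertices of comparable cost, yielding $\sum_{v\in\V}(\dist(v,\M(v)))^p\le O(1)\cdot\opt^p_k$. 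For $p=1$ this is immediate from the cited coreset results; for any fixed $p>1$ the same scheme works with the resulting $2^{O(p)}$ factors absorbed into the $O(1)$. Hence it suffices to show that each round (i) correctly computes kernel and guard sets, (ii) produces $O(1)$-approximate nearest neighbors in $\Sample_i^{(1)}$ for a $(1-o(1))$-fraction of $\V_i$, and (iii) removes a constant fraction consisting of the vertices nearest to $\Sample_i^{(1)}$ --- all within the query budget.

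\textbf{Kernels/guards and the conflict-graph partition.} We draw $\Sample_i^{(1)},\Sample_i^{(2)}$ of sizes $\Theta(k\log^2 n),\Theta(k\log^3 n)$, run $\ProbSort$ on $\X_i=\E(\Sample_i^{(1)},\Sample_i^{(2)})$ --- $O(\max(|\X_i|,n)\log n)=O((n+k^2)\,\polylog n)$ queries by Lemma~\ref{corr:prob_sort} --- and read off $\Core_i(\vsample),\LGuard_i(\vsample)$ for each $\vsample\in\Sample_i^{(1)}$; the $O(\log n)$-dislocation of $\ProbSort$ is precisely what makes the stated rank and separation properties hold whp for sampled vertices. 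We then form the conflict graph $G_i$ on $\Sample_i^{(1)}$, inserting $\{\vsample,\vsample'\}$ whenever one of them is ``close'' to the other, detected via the proximity score of Equation~\eqref{eq:proxscore} against the relevant guard set; this costs $O(|\Sample_i^{(1)}|^2\log n)=O(k^2\,\polylog n)$ queries. The crucial structural claim is that $G_i$ is $O(\log n)$-degenerate: by the guard/rank property, for every edge $\{\vsample,\vsample'\}$ at least one endpoint lies among the $\tfrac{|\V_i|}{k\,\polylog n}$ rank-nearest vertices of the other, so orienting each edge towards that endpoint bounds the out-degree by the number of points of $\Sample_i^{(1)}$ landing in a rank-ball of that radius, which is $O(\log n)$ whp by Chernoff; an $O(\log n)$-bounded-outdegree graph has every subgraph of average degree $O(\log n)$, hence is $O(\log n)$-degenerate. (Here the bounded doubling dimension is what ensures such rank-balls are the right objects to count and that the packing/Chernoff estimate is clean.) A greedy coloring~\cite{lick1970k} then splits $\Sample_i^{(1)}$ into $\chi_i=O(\log n)$ independent classes $\Sample_i^{(1,1)},\dots,\Sample_i^{(1,\chi_i)}$, within each of which no two samples are close.

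\textbf{Nearest neighbors via lazy filtering.} For each class $j$ we invoke the $O(\polylog n)$-size ANN data structure of~\cite{raychaudhury2025metric} with $\U=\Sample_i^{(1,j)}$ and $\W=\V_i\setminus(\Sample_i^{(1)}\cup\Sample_i^{(2)})$, which requires an adversarial quadruplet oracle on $\E(\U,\U)\times\E(\U,\W)$; we simulate it with $\AlgTest$, using the $\ProbSort$-ordering of $\X_i$ to learn which kernel radius is smaller. Since $\AlgTest$ behaves like an adversarial oracle of noise $\mu=1$ only when the $\W$-endpoint is not too close to the $\U$-endpoint, and unlike $\AlgGen$ we do not pre-filter, we filter \emph{lazily}: whenever $\AlgTest$ must compare an edge incident to some $\vertex{v}\in\W$, we first spend $O(\log n)$ queries on the relevant proximity scores and, if $\vertex{v}$ is too close, drop it from $\W$ for the rest of the round. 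The analysis must establish three things: every comparison $\AlgTest$ ultimately answers meets the adversarial-oracle guarantee, so for each surviving $\vertex{v}$ each class returns a size-$O(\polylog n)$ candidate set containing an $O(1)$-approximate nearest neighbor of $\vertex{v}$ in $\Sample_i^{(1,j)}$ --- merging the $\chi_i$ candidate sets and running a min-search with $\AlgTest$ then yields an $O(1)$-approximate nearest neighbor in $\Sample_i^{(1)}$; the set $\V_i'$ of survivors still has $|\V_i'|\ge\tfrac35|\V_i|$ and, crucially, excludes exactly the vertices too close to $\Sample_i^{(1)}$, which is what step~(iii)'s charging needs; and, since there are $\chi_i=O(\log n)$ classes, each ANN query touches $O(\polylog n)$ edges, and each triggers $O(\log n)$ proximity queries, the whole step costs $O(n\,\polylog n)$ queries. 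We then order $\V_i'$ by estimated nearest-neighbor distance with $\textsc{AdvSort}$ (comparator $\AlgTest$), taking $O(n\,\polylog n)$ queries by Lemma~\ref{corr:adv_sort}, remove the nearest-distance prefix $\V_i''$ (a constant fraction of $\V_i$), map each $v\in\V_i''$ to its computed neighbor in $\Sample_i^{(1)}$, and recurse on $\V_{i+1}=\V_i\setminus(\V_i''\cup\Sample_i^{(1)}\cup\Sample_i^{(2)})$.

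\textbf{Wrap-up and main obstacle.} Each round costs $O((n+k^2)\,\polylog n)$ queries (kernels and conflict graph) plus $O(n\,\polylog n)$ (lazy ANN and sorting); over $O(\log n)$ rounds this is $O((n+k^2)\,\polylog n)$ total. A union bound over the failure events of $\ProbSort$, $\textsc{AdvSort}$, $\AlgTest$, the degeneracy bound and the various Chernoff bounds --- each $n^{-\Omega(1)}$ --- across all rounds gives the high-probability guarantee, and the $O(1)$-approximation follows from the charging argument summarized in the first paragraph (lazy filtering substitutes for the global filtering of $\AlgGen$ without otherwise changing it, since it still removes precisely the close vertices and at most a constant fraction of $\V_i$). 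I expect the main obstacle to be the query-complexity accounting of the lazy-filtering step: one must show that without any global filtering pass the on-demand proximity-score computations across all classes and all ANN queries still sum to $O(n\,\polylog n)$, which relies on the doubling dimension simultaneously bounding $\chi_i$ and the per-point work of the ANN structure, while the lazy-filtering threshold must be calibrated to discard all ``too close'' vertices (for correctness of the recursion) yet no more than a constant fraction (so the recursion keeps making progress). Establishing the $O(\log n)$-degeneracy of $G_i$ in the doubling metric is the other delicate point.
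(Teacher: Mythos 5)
Your proposal follows the same approach as the paper's proof: the same round structure, the same kernel/guard construction, the conflict graph $G_i$ colored into $\chi_i=O(\log n)$ independent classes, the ANN data structure of \cite{raychaudhury2025metric} driven by $\AlgTest$ with lazy per-comparison filtering, and the same charging argument lifted from $\AlgGen$. The query-complexity accounting and the recursion bookkeeping also match.

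One small but conceptually meaningful slip: you attribute the $O(\log n)$-degeneracy of $G_i$ to the bounded doubling dimension (``the bounded doubling dimension is what ensures such rank-balls are the right objects to count and that the packing/Chernoff estimate is clean''). That is not where the doubling dimension enters. The degeneracy bound is a pure sampling statement: for a fixed $\vsample\in\Sample_i^{(1)}$, each of the $\Theta(k\log^2 n)$ i.i.d.\ draws lands among the $|\V_i|/(100\firsamsize)$ rank-nearest vertices of $\vsample$ with probability $1/(100\firsamsize)$, so the expected count is $\Theta(1/\log n)\cdot\Theta(\log^2 n \cdot k)/(k\log^2 n)$ — i.e., $O(1)$ in expectation and $O(\log n)$ whp by Chernoff, with no geometric packing used. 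This holds in an arbitrary metric. The doubling dimension is used only in Lemma~\ref{lem:ds-ann}: it is what makes the per-query ANN data structure return a candidate set of size $O(\polylog n)$ (rather than $\Omega(n)$) and hence keeps the total ANN/traverse work at $O(n\,\polylog n)$. The rest of your argument, including the edge-orientation trick to bound out-degree and the lazy-filtering calibration (kill $\le |\V_i|/100$ vertices, keep $|\V_i'|\ge\tfrac35|\V_i|$), is exactly what the paper does.
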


\vspace{-0.5em}
\paragraph{Overview of $\AlgImpDoub$ (Appendix~\ref{sec:improvedapprox})}
When the underlying metric has bounded doubling dimension, we show that, given a {\coreplus}\ consisting of a set $\Coreset\subseteq \V$ and a mapping $\M:\V\to\Coreset$ computed via $\AlgGen$ or $\AlgDoub$, we can compute an $\eps$-{\coreplus}, $(\C^+,\M^+)$ using only $O(n\,\polylog n)$ additional queries, for $k$-median clustering. \\Consider a vertex $\vsample\in \Coreset$, and let $\U_{\vsample}$ denote the set of vertices mapped to it, i.e., $\U_{\vsample}=\{\vertex{u}\in \V:\M(\vertex{u})=\vsample\}$. Let
$
\alpha_{\vsample}\;=\;\max_{\vertex{u}\in \U_{\vsample}}\dist(\vertex{u},\vsample)
$.
Suppose we order the vertices in $\U_{\vsample}$ by $\dist(\cdot,\vsample)$ and partition them into buckets: $B_0$ has maximum distance $\alpha_{\vsample}/n^2$, $B_1$ covers $(\alpha_{\vsample}/n^2,\,2\alpha_{\vsample}/n^2]$, $B_2$ covers $(2\alpha_{\vsample}/n^2,\,4\alpha_{\vsample}/n^2]$ and so on, doubling the outer radius each time. It is easy to see that there are at most $O(\log n)$ buckets since $\alpha_{\vsample}\le O(1)\cdot \opt^1_k$. Since the doubling dimension is bounded, it is well known that if we had access to such a partition, we could improve the approximation quality by choosing a constant number of vertices from each bucket (see, e.g., \cite{har2004coresets}). 
Our high-level strategy is to follow a similar approach. Unfortunately, without access to distances, we cannot compute such a partition directly. Our algorithm operates as follows. 

\par For every $\vsample\in \Coreset$, it first obtains a $O(1)$-sorted ordering of the edges $\E(\vsample,\U_\vsample)$. This does not require any quadruplet queries and can simply be extracted from the post-ANN ordering computed by $\AlgDoub$ or $\AlgGen$. We then perform a multistep sampling on this order: in the first round, we sample $\Theta(\polylog n)$ vertices from $\U_{\vsample}$, in the second round from the last half, then from the last quarter, and so on, halving the suffix each time. Let $\W_\vsample$ be all the samples obtained from $\U_\vsample$. Repeating this for all $\vsample\in \Coreset$ accumulates $O(k\,\polylog n)$ new samples. We set $\C^+=\Coreset\cup(\bigcup_{\vsample\in \C}\W_\vsample)$. The algorithm then orders the set $\bigcup_{\vsample\in\Coreset}\E(\W_\vsample,\U_\vsample)$ using $\ProbSort$ and uses that information to construct a new mapping $\M^+$; we skip those details here. This step uses $O(n\,\polylog n)$ additional queries.

\par
The crux is the analysis. Although we cannot ensure that we hit every bucket for a $\vsample\in \Coreset$ exactly, we show that our sampling hits relevant distance scales with high probability. We carefully argue that the revised mapping is indeed sufficient.

\begin{theorem}\label{thm:epscoreset}
Let $\Sigma = (\V,\dist)$ be a finite metric space of size $|\V|=n$ with bounded doubling dimension, which is accessible under the \rmodel. There exists a randomized algorithm $\AlgImpDoub$ that, given a parameter $k \in \mathbb{Z}_{+}$,  with high probability returns an $\eps$-\coreplus\ for $k$-median clustering using $O((n+k^2)\,\polylog n)$ calls to the quadruplet oracle, where  $\eps \in (0,1)$ is an arbitrarily small constant.
\end{theorem}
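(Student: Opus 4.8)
The plan is to obtain an $\eps$-\coreplus\ by \emph{locally refining} a constant-factor one (this is the algorithm $\AlgImpDoub$). First run $\AlgGen$ (Theorem~\ref{thm:gen}) or $\AlgDoub$ (Theorem~\ref{thm:genDD}) with $O((n+k^2)\polylog n)$ quadruplet queries to get a set $\Coreset$ of $O(k\polylog n)$ centers and a mapping $\M:\V\to\Coreset$ with $\sum_{\vertex{v}\in\V}\dist(\vertex{v},\M(\vertex{v}))\le\beta\cdot\opt^1_k$ for an absolute constant $\beta$; write $\U_\vsample=\M^{-1}(\vsample)$, $\mathrm{cost}_\vsample=\sum_{\vertex{u}\in\U_\vsample}\dist(\vertex{u},\vsample)$, and note $\alpha_\vsample:=\max_{\vertex{u}\in\U_\vsample}\dist(\vertex{u},\vsample)\le\beta\cdot\opt^1_k$. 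As explained in the technical overview, for each $\vsample\in\Coreset$ one can extract, \emph{with no further queries}, an $O(1)$-sorted ordering $\vertex{u}_1,\dots,\vertex{u}_{m_\vsample}$ of $\U_\vsample$ by $\dist(\cdot,\vsample)$ from the post-ANN orderings already computed. On this order perform suffix-doubling sampling: for $t=1,\dots,O(\log n)$ draw $\Theta(\log n)$ uniform samples (hidden constant depending on $\eps$ and $\delta_0$) from the suffix $S_t$ of the last $m_\vsample/2^{t-1}$ vertices (take all of $S_t$ once it has $O(\log n)$ vertices), let $\W_\vsample=\{\vsample\}\cup(\text{all these samples})$, and set $\C^+=\bigcup_{\vsample}\W_\vsample$, so $|\C^+|=O(k\polylog n)$. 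Finally call $\ProbSort$ (Lemma~\ref{corr:prob_sort}) on the edge set $\Z:=\bigcup_\vsample\E(\W_\vsample,\U_\vsample)$, which has $|\Z|=O(n\polylog n)$ and hence costs $O(n\polylog n)$ queries, to obtain an ordering $\pi_\Z$ of dislocation $\disloc=O(\log n)$. Define $\M^+(\vertex{u}_i)$ to be the vertex $\vertex{w}$ minimizing the position of $\{\vertex{u}_i,\vertex{w}\}$ in $\pi_\Z$ over $\vertex{w}\in(\W_\vsample\cap\{\vertex{u}_1,\dots,\vertex{u}_i\})\cup\{\vsample\}$, i.e.\ over the center and the sampled vertices that do not come after $\vertex{u}_i$ in the $\vsample$-order. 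The total query cost is $O((n+k^2)\polylog n)+O(n\polylog n)=O((n+k^2)\polylog n)$.

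For the cost bound, fix $\vsample$, put $R_0=\alpha_\vsample/n^3$, and split $\U_\vsample$ into the inner ball $B_0=\{\vertex{u}:\dist(\vertex{u},\vsample)<R_0\}$ and geometric rings $B_1,\dots,B_J$, $J=O(\log n)$, using a large constant ratio so that each ring is a contiguous block of the $O(1)$-sorted order; the $B_0$ vertices contribute, over all $\vsample$, at most $\beta\opt^1_k/n^2$, which is negligible (and for $n=O(1)$ one takes $\C^+=\V$). Bounded doubling dimension $\delta_0$ covers $B(\vsample,\mathrm{outerrad}(B_j))\supseteq B_j$ by $\lambda=\lambda(\eps,\delta_0)=O(1)$ balls of radius $\Theta(\eps)\cdot\mathrm{outerrad}(B_j)$, splitting $B_j$ into at most $\lambda$ \emph{cells} of diameter $\Theta(\eps)\cdot\dist(\cdot,\vsample)$. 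Let $t^\dagger(j)$ be the largest round whose suffix still contains all of $B_j$; then $|S_{t^\dagger(j)}|=\Theta(|B_j|+|B_{>j}|)$, where $B_{>j}=B_{j+1}\cup\dots\cup B_J$ is the remaining suffix. Call $\vertex{u}_i\in B_j$ \emph{resolved} if its cell has at least $\eta\,|S_{t^\dagger(j)}|$ members of index $\le i$, for a threshold $\eta=\Theta(\eps/\lambda)$, and \emph{unresolved} otherwise. The two facts we need are: (i) if $\vertex{u}_i$ is resolved, then with probability $1-n^{-\Omega(1)}$ round $t^\dagger(j)$ samples $\Omega(\log n)$ cell-mates of $\vertex{u}_i$ of index $\le i$; and (ii) each ring has at most $\lambda\eta\,|S_{t^\dagger(j)}|$ unresolved vertices, while $\sum_j\bigl(|B_j|+|B_{>j}|\bigr)\cdot\mathrm{outerrad}(B_j)=O(\mathrm{cost}_\vsample)$ by a geometric-series estimate.

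These facts combine with two properties of $\M^+$. First, restricting candidates to indices $i'\le i$ makes $\dist(\vertex{u}_i,\M^+(\vertex{u}_i))\le O(1)\cdot\dist(\vertex{u}_i,\vsample)$ \emph{unconditionally}: $O(1)$-sortedness gives $\dist(\vertex{u}_{i'},\vsample)\le O(1)\dist(\vertex{u}_i,\vsample)$, hence $\dist(\vertex{u}_i,\vertex{u}_{i'})\le O(1)\dist(\vertex{u}_i,\vsample)$, and $\vsample$ is always a candidate. Second, if $\vertex{u}_i$ is resolved then, by (i), its candidate set contains $\Omega(\log n)$ cell-mates, so $\Omega(\log n)$ edges incident to $\vertex{u}_i$ have weight $\le\rho:=\Theta(\eps)\dist(\vertex{u}_i,\vsample)$; since pairwise distances are distinct, the number of edges of $\Z$ of weight $\le\rho$ exceeds the true rank of the smallest such incident edge by $\ge\Omega(\log n)$, whereas the $\pi_\Z$-earliest candidate edge has true rank at most $2\disloc$ beyond that of the smallest-weight incident edge, so — choosing the per-round sample size so that $\Omega(\log n)>2\disloc$ — the chosen edge has weight $\le\rho$, i.e.\ $\dist(\vertex{u}_i,\M^+(\vertex{u}_i))\le\Theta(\eps)\dist(\vertex{u}_i,\vsample)$. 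Therefore resolved points cost $O(\eps)\cdot\mathrm{cost}_\vsample$ in total; unresolved points cost $O(1)$ times their current cost each, but by (ii) their current cost sums to $O(\eps)\cdot\mathrm{cost}_\vsample$; and $B_0$ is negligible. Summing over $\vsample$, then tuning the ring ratio, $\lambda$, $\eta$, and the sample size as functions of $\eps,\beta,\delta_0$, gives $\sum_{\vertex{v}\in\V}\dist(\vertex{v},\M^+(\vertex{v}))\le\eps\cdot\opt^1_k$. A union bound over the $O(k\polylog n)$ relevant sampling events (each failing with probability $n^{-\Omega(1)}$), together with the failure probabilities of $\AlgGen/\AlgDoub$ and of $\ProbSort$, yields the high-probability guarantee, so $(\C^+,\M^+)$ is an $\eps$-\coreplus.

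The step I expect to be the main obstacle is fact (ii) and its interplay with (i): the sampling schedule halves in \emph{rank} space, whereas the cells one must hit live at geometric \emph{distance} scales, and a ring $B_j$ can hold far fewer vertices than the suffix $S_{t^\dagger(j)}$ that contains it, so no single round samples such a ring densely — one must argue that the rounds \emph{collectively} resolve every distance scale and that the mapping cost charged to the necessarily under-sampled vertices telescopes against $\mathrm{cost}_\vsample$ (this is where the geometric growth of the outer radii and the choice $R_0=\alpha_\vsample/n^3$ enter). A secondary difficulty is that $\ProbSort$ guarantees only bounded dislocation, not a multiplicatively faithful order, so turning the sorted information into a $(1+\eps)$-faithful mapping relies on the "a resolved point sees $\Omega(\log n)$ short candidate edges, which crowd out the dislocation" argument above; pinning down the sample sizes and thresholds so that this argument and the cost accounting hold simultaneously is the most delicate bookkeeping in the proof.
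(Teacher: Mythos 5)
Your proposal is correct in its essentials and follows the same overall strategy as the paper: extract a $4$-sorted per-center order from $\AlgDoub$'s structures, take $O(\log n)$ geometrically-halved suffix samples per center, run one global $\ProbSort$ on the sample-to-cluster edges, and use the bounded doubling dimension to cover each distance ring by $O_{\eps,\delta_0}(1)$ cells so that enough samples fall into the cell of any "typical" vertex to crowd out the $O(\log n)$ dislocation. Your query-cost accounting ($O((n+k^2)\polylog n)$) and the high-probability union bound are right, as is the observation that the per-center sorted order requires no new queries.

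That said, you take a genuinely different route in the charging/bookkeeping, and a different definition of $\M^+$. The paper does \emph{not} restrict the candidate set to smaller-index samples and the center; its $\M^+(v)$ is simply the $\pi_\Z$-earliest edge of $\E(v,\W_\vsample)$. To control the fallback case, the paper ensures (Lemma~\ref{lem:bucket-sampling}, using $\Theta(\log^3 n)$ samples per level rather than your $\Theta(\log n)$) that every non-light \emph{bucket} receives more than $\disloc$ samples, so even a vertex in a light cell is within $8\alpha$ of $\disloc+1$ bucket-mates and the dislocation argument still caps its new cost at $O(1)$ times its old cost. Your index restriction yields that unconditional $O(1)$ bound for free, which is a nice simplification, at the cost of a slightly more elaborate definition of "resolved." The paper also organizes the rings into a recursive heavy/light round decomposition (Lemmas~\ref{lem:heavy-progress-whole}–\ref{lem:light-vs-heavy}) before applying the doubling-dimension cell cover within a bucket; your argument replaces that two-level decomposition with a direct geometric-series charge $\sum_j(|B_j|+|B_{>j}|)\cdot\mathrm{outerrad}(B_j)=O(\mathrm{cost}_\vsample)$, which is cleaner but requires the $|S_{t^\dagger(j)}|=\Theta(|B_j|+|B_{>j}|)$ identification (which does hold for the rank-halving schedule). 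Two small points to fix: the phrase "each ring is a contiguous block of the $O(1)$-sorted order" is imprecise — with only $4$-sortedness, rings defined by distance thresholds are not literally contiguous; you should (as the paper does) define the buckets \emph{as} contiguous rank blocks whose last element crosses the next distance threshold, which is enough for the same argument. And the opening "run $\AlgGen$ or $\AlgDoub$ with $O((n+k^2)\polylog n)$ queries" should be $\AlgDoub$ specifically; $\AlgGen$ costs $O(nk\polylog n)$.
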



\section{Experiments}
\begin{figure}[t]
  \centering
  \begin{subfigure}[t]{0.23\textwidth}
    \vspace{0pt}
    \centering
    \includegraphics[width=\linewidth]{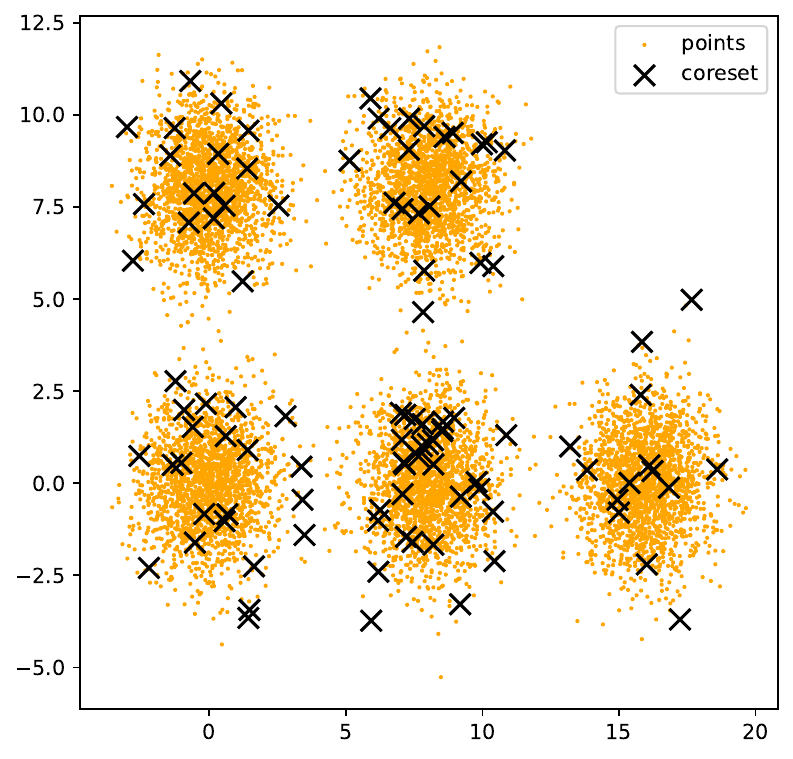}
    \caption{Coreset Points}
    \label{fig:points_coreset}
  \end{subfigure}\hfill
  \begin{subfigure}[t]{0.23\textwidth}
    \vspace{0pt}
    \centering
    \includegraphics[width=\linewidth]{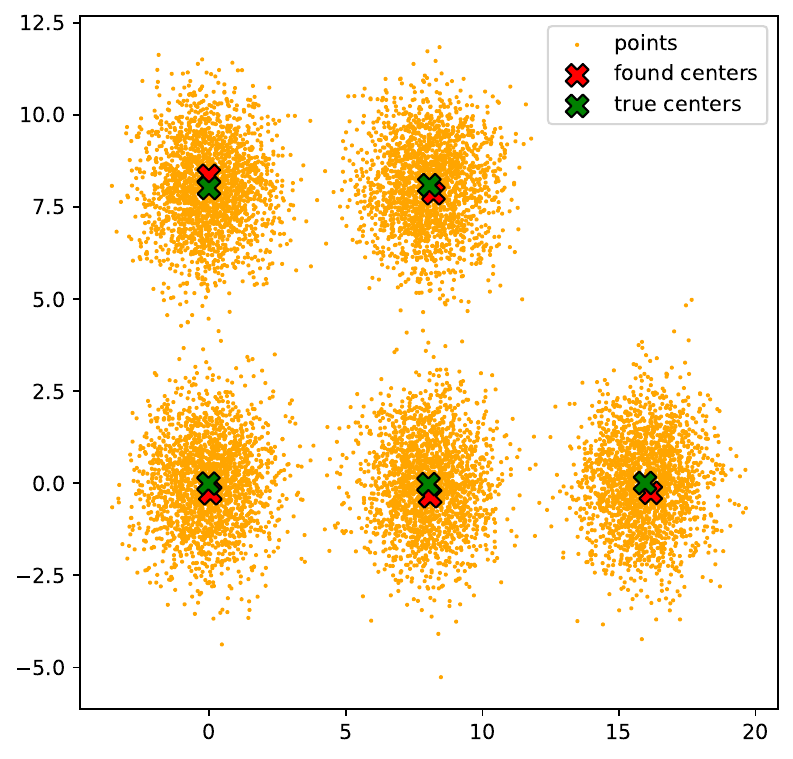}
    \caption{Our Results}
    \label{fig:our_centers}
  \end{subfigure}\hfill
  \begin{subfigure}[t]{0.23\textwidth}
    \vspace{0pt}
    \centering
    \includegraphics[width=\linewidth]{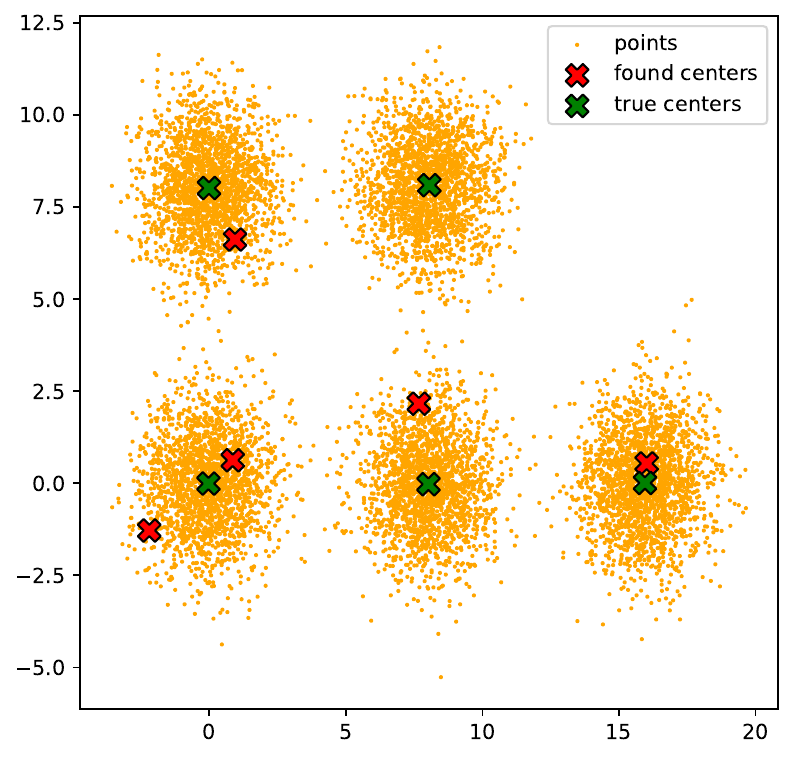}
    \caption{Baseline}
    \label{fig:baseline_centers}
  \end{subfigure}\hfill
  \begin{subfigure}[t]{0.23\textwidth}
    \vspace{4pt}
    \centering
    {%
    \renewcommand{\arraystretch}{1.4}
    \begin{tabular}{l c}
      \toprule
      Method & Cost \\
      \midrule
      Optimal      & $2.9 \times 10^4$ \\
      Ours & $3.1 \times 10^4$ \\
      Baseline     & $4 \times 10^5$ \\
      \bottomrule
    \end{tabular}
    }
    \vspace{9.68pt}
    \caption{Clustering Cost}
    \label{tab:clustering_cost}
  \end{subfigure}
  %
  \caption{Comparison of the $k$-means clustering results obtained by our method against the baseline and the optimal clustering.}
  \label{fig:overall_comparison}
  \vspace{-1.5em}
\end{figure}

In this section, we present preliminary experimental results based on a basic implementation of our algorithm for $k$-means clustering. We evaluate the quality of the results obtained from our algorithm, comparing it against the clustering obtained by $k$-means++ algorithm over the ground truth data points, and the clustering obtained by a baseline that always trusts the answers of the noisy quadruplet oracle. The complete descriptions of our algorithm and the baseline are provided later. We refer to the cluster centers obtained from running $k$-means++ algorithm over the ground truth data as \emph{true centers} and the $k$-means cost of the true centers as optimal cost\footnote{$k$-means clustering is an $\mathsf{NP}$-hard problem, and thus the $k$-means++ algorithm does not always yield the optimal solution. Nevertheless, we treat the clustering derived from the ground-truth data points as optimal.
}.
\rev{We use both a synthetic and two real datasets whose points are used as the ground truth data in the experiments.}
By default, we simulate the quadruplet oracle with probabilistic noise and set the error rate to $\perror = 0.15$. The oracle has access to the ground truth data and answers correctly with probability $1 - \perror = 0.85$. Both the baseline algorithm and our algorithm can only access the data via this noisy oracle and do not have direct access to the ground truth data.
The main goal of our experiments is to show the effectiveness of our \coreplus\ construction and especially the mapping of the points to the \coreplus\ centers.
All methods were implemented in Python, and the experiments were conducted using the free version of Google Colab.

\subsection{\rev{Experimental Setup}}

\vspace{-0.5em}
\paragraph{\rev{Synthetic dataset}} 
For our experiments, we generate a synthetic dataset consisting of two-dimensional points arranged in $k = 5$ clusters of approximately equal size. For each cluster, the points are randomly generated following the Gaussian distribution with a standard deviation equal to $1$. After generating the data, all the points are randomly shuffled to avoid any relation between ordering and the clusters. 
Using this procedure, we generate a dataset of $10^4$ synthetic points as shown in Figure~\ref{fig:overall_comparison}. We use this dataset as the ground truth data in the experiments.

\vspace{-0.5em}
\paragraph{\rev{Real datasets}}
\rev{
We use two standard real-world datasets used to evaluate clustering algorithms (~\cite{bravermanrelative, huang2019coresets}): the \emph{Adult} dataset~\cite{becker_kohavi_adult_1996} and the \emph{Default of Credit Card Clients} dataset~\cite{yeh_default_credit_2009}.
For Adult dataset, we use the eight numerical attributes, resulting in a collection of rouglhy
$50{,}000$ points in $8$ dimensions.
For Credit dataset, we select nine numerical attributes, resulting in a collection of roughly 
$30{,}000$ points in $9$ dimensions. The values of all attributes are then normalized to be in the range $[0, 1]$.
Due to computational constraints, we sample 
$2{,}000$ data points from each dataset using Meyerson sampling~\cite{meyerson2001online}. Meyerson sampling proceeds as follows:
we begin with an empty set $S$ and process the points sequentially. The first point is added to $S$. For each subsequent point, we add it to $S$ with probability proportional to its distance from the current set $S$, where distance is measured using the Euclidean metric. Thus, points that lie farther from $S$ have a higher probability of being selected.
We note that we employ Meyerson sampling rather than uniform sampling because it better preserves the structure of the original dataset. Uniform sampling may fail to capture smaller or sparser clusters, leading to less representative subsets.}


\vspace{-0.5em}
\paragraph{Our approach} 
We first run $\AlgGen$ from Theorem~\ref{thm:gen} to obtain an $O(1)$-\coreplus. Then, as described in Section~\ref{sec:intro}, the obtained centers and the mapping are used to construct an $O(1)$-coreset: each \coreplus\ point gets a weight equal to the number of points mapped to it. After constructing the weighted coreset, we assume access to a distance oracle, as in the RM-model of~\cite{raychaudhury2025metric}. We then apply $k$-means++ algorithm on the weighted coreset to obtain the final set of $k$ centers, invoking the distance oracle when necessary.




\vspace{-0.5em}
\paragraph{Baseline} 
The baseline directly applies the Generic algorithm (Section~\ref{sec:technical-overview}) using the quadruplet oracle. 
Although the oracle may return an incorrect answer with probability $\perror$, the algorithm assumes every response is correct and proceeds accordingly. 
From the returned set of points and the mapping, we construct a weighted set of centers (coreset) in the same manner as our algorithm: each sampled center is assigned a weight equal to the number of data points mapped to it. Finally, we run $k$-means++ algorithm on the weighted coreset to obtain the final $k$ centers, invoking the distance oracle whenever the exact distance between two coreset points is required.

\vspace{-0.5em}
\subsection{\rev{Experimental results}} 

\paragraph{\rev{Results on synthetic dataset}}
We observe that the coreset produced by our algorithm contains only $187$ points. This corresponds to a $98\%$ reduction in size, from the original $10^4$ input points down to just $187$ coreset points. 
Figure~\ref{fig:points_coreset} shows the set of coreset points.
Instead of using the original large dataset, this much smaller coreset is used to obtain the final $k$ centers running the $k$-means++ algorithm, assuming access to a distance oracle, as described above.

We next evaluate the clustering quality by comparing the results of our algorithm against both the baseline and the ground-truth centers. As shown in Figure~\ref{fig:overall_comparison}, the centers identified by our method closely match the true centers (see also Figure~\ref{fig:our_centers}). By contrast, the baseline fails to recover some clusters due to erroneous mapping, as illustrated in Figure~\ref{fig:baseline_centers}, leading to noticeably poorer clustering performance. This limitation arises because the baseline unconditionally trusts the noisy oracle during the Generic algorithm.
Table~\ref{tab:clustering_cost} reports the corresponding clustering costs. The $k$-means cost is defined as the sum of squared distances from each point to its mapped center. Our method achieves a cost within $7\%$ of the optimal solution, while the baseline incurs a cost exceeding the optimum by more than $1200\%$.

\paragraph{\rev{Results on real datasets}}
\rev{Next, we compare the clustering cost of our method with both the baseline and the optimal cost on real datasets. In Figure~\ref{fig:overall-k}, we fix the error rate at $\perror = 0.15$ and vary the number of clusters $k \in \{4,5,6,7,8\}$. In both datasets, our method achieves a clustering cost that is very close to the optimum. In contrast, the baseline consistently yields a clustering cost that is $2.5$ to $4$ times higher than that of our approach. As expected, the clustering cost of all methods decreases as the number of clusters increases.
Finally, in Figure~\ref{fig:overall-error}, we fix the number of clusters at $k = 6$ and vary the oracle’s probabilistic noise level $\perror \in \{0.05, 0.1, 0.15, 0.2, 0.25\}$. As before, in both datasets, the clustering cost achieved by our method remains close to the optimum. Moreover, consistent with our theoretical guarantees, the clustering cost of our method is essentially independent of $\perror$. In contrast, the clustering cost of the baseline increases substantially as $\perror$ grows.}



\begin{figure*}[t]
    \centering

    \begin{minipage}[t]{0.49\textwidth} 
        \centering
        
        \begin{minipage}[t]{0.49\linewidth} 
            \includegraphics[width=1.05\textwidth]{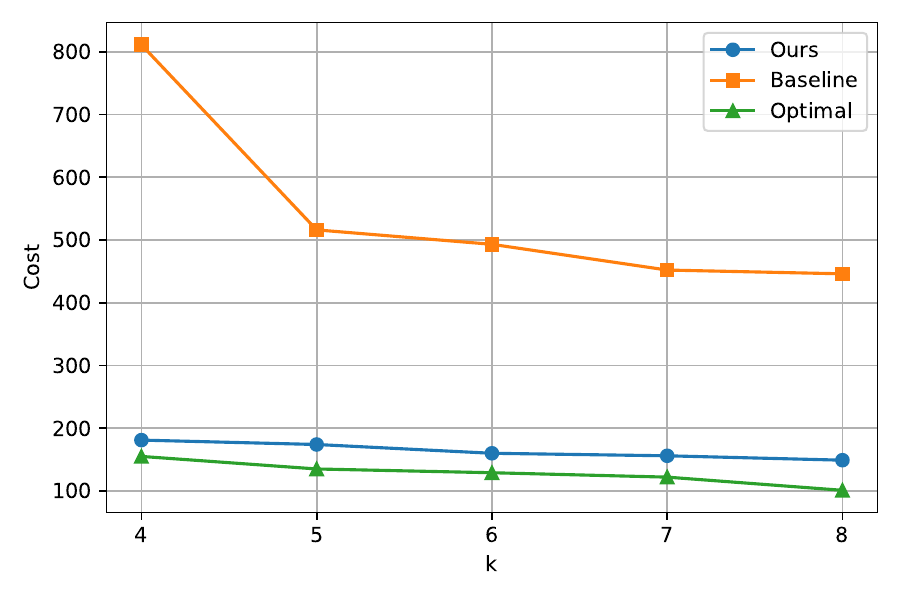}
            \vspace{-1.5em}
            \subcaption{\rev{Adult}}
            \label{fig:adult-k}
        \end{minipage}
        \hfill 
        \begin{minipage}[t]{0.49\linewidth}
            \includegraphics[width=1.05\textwidth]{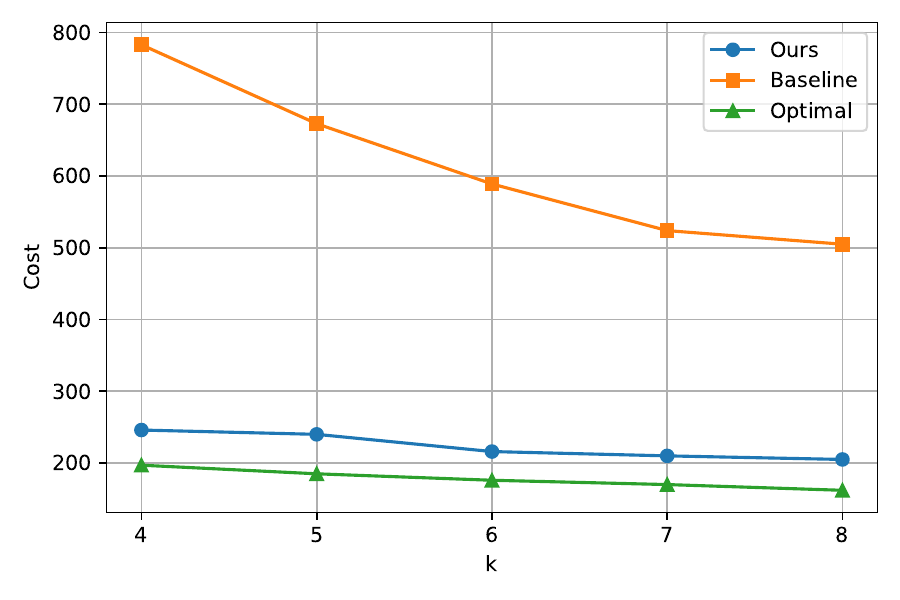}
            \vspace{-1.5em}
            \subcaption{\rev{Credit}}
            \label{fig:credit-k}
        \end{minipage}

        \vspace{-0.5em}
        \caption{\rev{Clustering cost varying $k$}}
        \label{fig:overall-k}
    \end{minipage}\hfill 
    \begin{minipage}[t]{0.49\textwidth}
        \centering

        \begin{minipage}[t]{0.49\linewidth}
            \includegraphics[width=1.05\textwidth]{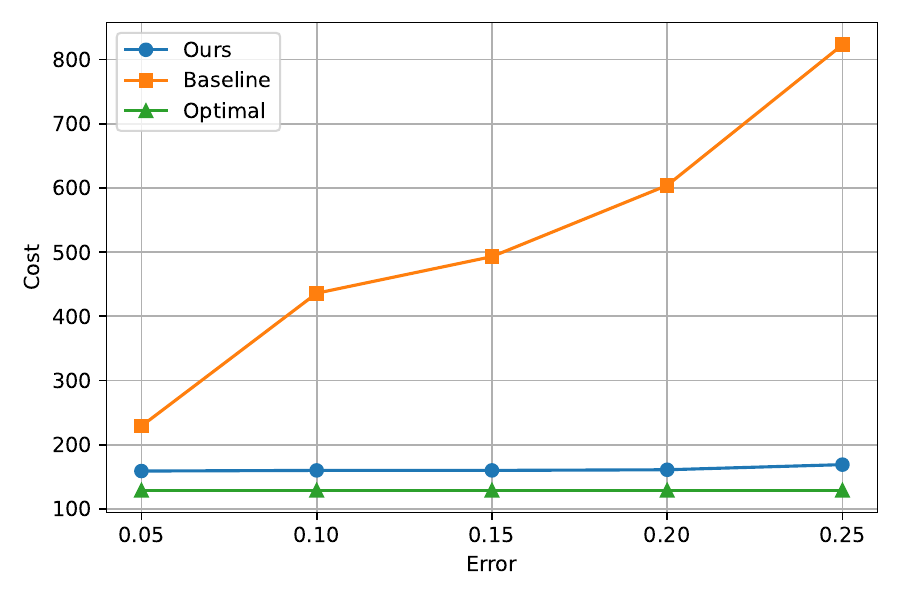}
            \vspace{-1.5em}
            \subcaption{\rev{Adult}}
            \label{fig:adult-error}
        \end{minipage}
        \hfill
        \begin{minipage}[t]{0.49\linewidth}
            \includegraphics[width=1.05\textwidth]{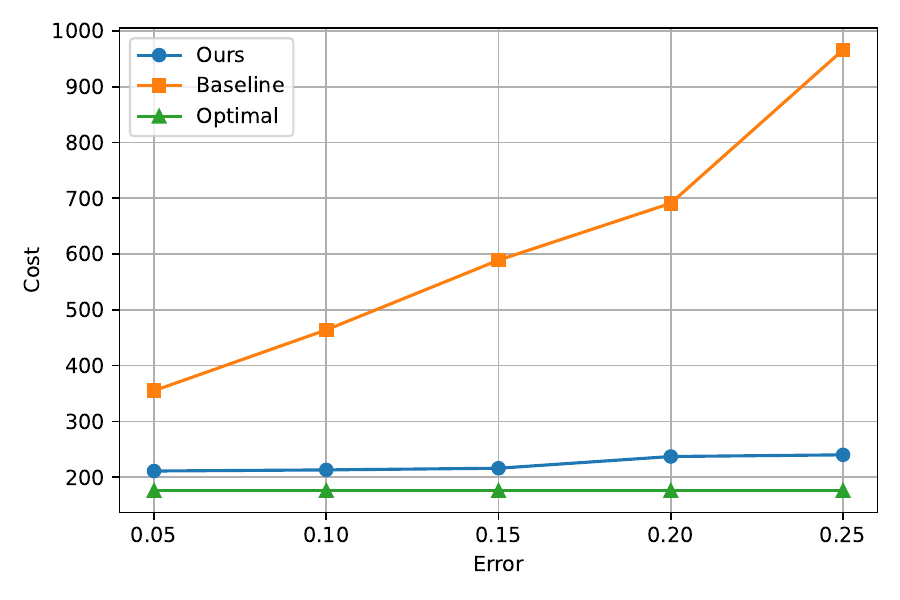}
            \vspace{-1.5em}
            \subcaption{\rev{Credit}}
            \label{fig:credit-error}
        \end{minipage}

        \vspace{-0.5em}
        \caption{\rev{Clustering cost varying $\perror$}}
        \label{fig:overall-error}
    \end{minipage}

    \vspace{-1em}
\end{figure*}

\vspace{-0.5em}
\section{Conclusion}
We proposed near-optimal algorithms for constructing coresets for $(k,p)$-clustering in the \rmodel.
Our results open several directions for future research. 
First, it is interesting to study whether the techniques can be extended to other clustering objectives (such as hierarchical clustering or sum-of-radii clustering) and to related graph problems (such as the Minimum Spanning Tree problem) in the \rmodel. 
Second, we plan to generalize our framework to non-metric graphs, to alternative oracle models (such as triplet oracles), and to different error models.
\bibliographystyle{iclr2026_conference}
\bibliography{ref}

\newpage
\appendix
\begin{envrev}
    \section{Lower bound for $k$-clustering in the \rmodel.}
    \label{appndx:lb}
    \begin{theorem}  
    \label{thm:lbnew}
    Every $o(n)$-approximation algorithm for the $k$-median/means clustering problem in the \rmodel\ must contain at least $2k-1$ centers, where $n$ is the number of vertices in the input metric space and $k$ is any integer in $[2,\frac{\sqrt{n}}{2}]$.
    \end{theorem}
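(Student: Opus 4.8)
The plan is to show that any algorithm outputting at most $2k-2$ centers must incur approximation ratio $\omega(n)$ on some instance, by combining a ``block'' gadget, a hitting‑set argument, and an indistinguishability argument for the noisy oracle. First I would fix a tiny parameter $\delta=n^{-4}$ and partition the $n$ vertices into $2k-1$ blocks $B_1,\dots,B_{2k-1}$, each of size $m=\Theta(n/k)$; here the hypothesis $k\le\sqrt n/2$ guarantees $m\ge\sqrt n$, so every block is ``large''. Inside each block all pairwise distances lie in $[\delta/2,\delta]$. For each $t\in[2k-1]$ I define a \emph{configuration} $\mathcal{C}_t$ that groups the blocks into exactly $k$ clusters: the singleton $\{B_t\}$ together with a fixed pairing of the remaining $2k-2$ blocks into $k-1$ pairs. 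The associated metric sets inter‑block distances to $\approx\delta$ within a cluster and to $1$ across clusters (the triangle inequality is automatic since every distance is in $\{\Theta(\delta),1\}$, after an infinitesimal perturbation to keep all distances distinct).

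The cost bookkeeping is routine. In every configuration, taking the set $S_0$ of one vertex per cluster gives $\CostGen_{S_0}^1(\V)=O(n\delta)$, hence $\opt^1_k=O(n\delta)$. Conversely, if a center set $S$ with $|S|\le 2k-2$ contains no vertex of some cluster $Z$ of $\mathcal{C}_t$, then each of the $\ge m$ vertices of $Z$ is at distance $\ge 1-\delta$ from every center, so $\CostGen_S^1(\V)\ge(1-\delta)m=\Omega(n/k)$; with $\delta=n^{-4}$ the ratio $\CostGen_S^1(\V)/\opt^1_k$ is $\Omega(n^{2})=\omega(n)$. The same computation with squared distances handles $k$-means. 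So it suffices to force every $S$ with $|S|\le 2k-2$ to miss an entire cluster in at least one configuration that the algorithm cannot rule out.

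The hitting‑set step is where the bound $2k-1$ enters. Any $S$ with $|S|\le 2k-2$ is incident to at most $2k-2$ blocks, so it misses some block $B_{t^\star}$; but $\{B_{t^\star}\}$ is a full cluster of the configuration $\mathcal{C}_{t^\star}$, so $S$ is $\Omega(n^{2})$‑bad for $\mathcal{C}_{t^\star}$. Hence no set of $2k-2$ centers is simultaneously $o(n)$‑good for all $2k-1$ configurations.

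The main obstacle is that the $2k-1$ configurations are not oracle‑equivalent (their distance orders differ), so I cannot simply assert that the algorithm's output is the same across all of them. I would invoke Yao's principle: sample $t$ uniformly from $[2k-1]$, give the algorithm the metric of $\mathcal{C}_t$ together with a valid $\perror$‑noisy persistent quadruplet oracle for it, and argue that any deterministic algorithm returns an $\Omega(n^{2})$‑bad set with constant probability --- which, by Yao, yields a fixed bad instance for randomized algorithms. The delicate point is that a ``macroscopic'' encoding of $t$ (e.g.\ ``$B_i,B_j$ are close iff same cluster'') is in fact recoverable by majority voting over many \emph{distinct} comparisons, even with persistence; so the gadget must be refined so that the identity of the singleton block is carried by only a bounded number of ``critical'' comparisons, and those comparisons must sit in an otherwise empty gap of the distance order, so that transitivity cannot leak the answer and even a perfect oracle leaves the clustering‑relevant magnitude undetermined, while $\perror$‑noise corrupts each critical comparison with constant probability. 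Making this refinement precise --- checking that it yields a genuine metric and still produces $2k-1$ mutually confusable configurations, and bounding the total variation between the oracle's answer distributions for different $t$ --- is the part I expect to require the real work; once it is in place, the cost amplification and the hitting‑set argument above close the proof for both $k$-median and $k$-means.
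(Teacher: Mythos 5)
There is a genuine gap, and it sits exactly at the ``delicate point'' you flag yourself. Your $2k-1$ configurations $\mathcal{C}_1,\dots,\mathcal{C}_{2k-1}$ have \emph{different distance orderings}, so they are distinguishable by a perfect quadruplet oracle. You then propose to patch this by confining the information to a few ``critical'' comparisons and appealing to the oracle's noise $\perror$ to corrupt them. But the theorem is stated for the \rmodel\ in general, and in particular must hold when $\perror=0$; indeed, the paper proves the \emph{stronger} claim for a perfect oracle. Any argument whose confusability ultimately rests on noisy corruption of queries cannot prove the statement as written. There is also no way to finish your refinement so that the $2k-1$ configurations really become indistinguishable to a perfect oracle: if the distance order is literally the same across all of them, then the clusterings induced by that order are also the same, and you would no longer have distinct target configurations to play against each other.

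The paper's construction sidesteps all of this by varying \emph{magnitude} rather than \emph{configuration}. It builds a single combinatorial gadget and considers two instances that differ only in the value of one scale parameter $\zeta$ (set to $2$ versus $n^3$). Because every affected distance is on the same side of $1$ in both instances, the total ordering of all pairwise distances is \emph{identical}, so a perfect quadruplet oracle returns exactly the same answers on both; the algorithm is therefore forced to output the same set $R$ on both. The counting then goes: with $|R|\le 2k-2$, any $R$ that is $o(n)$-good for the $\zeta=2$ instance must use all $2k-1$ ``slots'' from $Y$ and the $X_i$'s, leaving at most $k-2$ slots for the $k-1$ far-away $U$-vertices, which makes $R$ an $\Omega(n^2)$ factor off optimum when $\zeta=n^3$. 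This gives the $2k-1$ threshold directly, with no Yao averaging, no hitting-set argument, and no dependence on noise. If you want to salvage your direction, replace the multiple configurations by a single family parameterized by an unobservable scale, as the paper does; the hitting-set step is not needed once the oracle answers are literally identical.
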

    \begin{proof}
    We show the proof for the $k$-median clustering, however all results can be extended to $k$-means clustering.
    We show a stronger result, assuming that the probabilistic noise is $0$, i.e., perfect oracle.

    We construct a finite metric space $\Sigma=(\V, d)$ of $n$ vertices as follows. For simplicity, we assume that $n'=\frac{n}{k}$ is an integer.
        A set of $k-1$ vertices $U=u_1,\ldots, u_{k-1}$ are contained in $\V$ with $d(u_i,u_j)=\zeta$, for every $i\neq j\in [k-1]$, for a parameter $\zeta>0$ that we specify later. Furthermore, there exists a set $Y$ of
        $n'$ vertices $y^{(1)}_1,\ldots, y^{(1)}_{n'}$ such that $d(y^{(1)}_i, y^{(1)}_j)=0$ for every $i\neq j\in [n']$. Moreover, there are $k-1$ groups of vertices, $X_2=\{x^{(2)}_1,\ldots, x^{(2)}_{n'-1}\}$, $X_3=\{x^{(3)}_1,\ldots, x^{(3)}_{n'-1}\}$, $\ldots$, $X_k=\{x^{(k)}_1,\ldots, x^{(k)}_{n'-1}\}$
        such that each group contains $n'-1$ vertices, while $d(x^{(h)}_i, x^{(h)}_j)=0$ for every $h\in\{2,\ldots, k\}$ and every $i\neq j\in [n'-1]$. 
        Let $X=\bigcup_{i=2,\ldots, k}X_i$.
        For every $y_i\in Y$ (for $i\in [n']$) and $x\in X$, $d(x,y_i)=1$.
        For every $x_1\in X_i$ and $x_2\in X_j$, where $i\neq j\in\{2,\ldots, k\}$, $d(x_1,x_2)=1$.
        Finally, for every vertex $t\in X\cup Y$ and $u\in U$, $d(t,u)=\zeta$. It is easy to verify that the function $d(\cdot)$ satisfies the triangle inequality, so $\Sigma$ is a metric space.

        We note that under the \rmodel\ (with probabilistic noise $0$) an algorithm cannot distinguish between an instance of the metric space where $\zeta=n^3$ and $\zeta=2$. Indeed, for both choices, $\zeta>1$, so the ordering of all pairwise distances of the vertices in  $\V$ remain the same for $\zeta=n^3$ and $\zeta=2$.

        Next, we show that no set of centers of size at most $2k-2$ returns an $o(n)$-approximation solution for both undistinguishable instances of $k$-median clustering on $\V$ with $\zeta=2$ and $\zeta=n^3$.


        First, consider the case where $\zeta=n^3$. The optimum set for $k$-median clustering  for $\V$ is $P^*_1=U\cup\{y\}$, where $y\in Y$, with 
        $\opt^1=\CostGen^1_{P^*_1}(\V)=\left(\frac{N}{k}-1\right)(k-1)$.
        

        If $\zeta=2$, then the optimum set for $k$-median clustering on $\V$ is $P^*_2=\{y\}\cup \left(\bigcup_{h=2,\ldots k}x_h\right)$, where $y\in Y$, and $x_h\in X_h$ for every $h\in\{2,\ldots, k\}$ with $\opt^1=\CostGen^1_{P^*_2}(\V)=2(k-1)$.
        

        For $\zeta=2$, any $o(n)$-approximation solution $R\subseteq \V$ of size at most $2k-2$ must include one (arbitrary) vertex from every group $X_i$ (for every $i=\{2,\ldots, k\}$) and one (arbitrary) vertex from $Y$. If this is not the case, then the $k$-median cost will be at least $\frac{n}{k}-1$ leading to an approximation ratio $\frac{\frac{n}{k}-1}{2(k-1)}=\Omega(n)$. Without loss of generality assume that $R$ contain one arbitrary vertex from $Y$, one arbitrary vertex for every $X_i$, for every $i=\{2,\ldots, k\}$, and $k-2$ arbitrary vertices from $U$. Notice that $\CostGen_{R_2}^1(\V)=2$.
        Recall that in the \rmodel\ no algorithm can distinguish the two instances with $\zeta=2$ and $\zeta=n^3$. Hence, if $R$ is selected as a solution of size $2k-2$ for the $k$-median instance when $\zeta=2$, then $R$ will also be selected as a solution for the $k$-median instance when $\zeta=n^3$. However, when $\zeta=n^3$, then $\opt^1=\left(\frac{N}{k}-1\right)(k-1)$ and $\CostGen_{R}^1(\V)=n^3$ so the approximation ratio is $\Omega(n^2)$. The result follows.
        
    \end{proof}
\end{envrev}

\section{Algorithm for General Metrics} \label{sec:Gen} 
We are given a set of vertices $\V$ from a metric space $\Sigma=(\V,\dist)$ with $|\V|=n$, and access to a noisy quadruplet oracle $\QPOracle$ over $\Sigma$ with known error rate $\perror<\tfrac12$.  
Given cluster size $k$ and parameter $p\in\mathbb{Z}_{\ge1}\cup\{\infty\}$, the algorithm outputs a set $\Coreset\subseteq \V$ of size $O(k\,\polylog n)$ and a mapping $\M:\V\to\Coreset$.  For clarity of exposition, we restrict to the $k$-median case ($p=1$) and assume $\perror$ is bounded by a fixed constant below $1/4$.  The rest of this section is organized as follows: Section~\ref{sec:AlgGen} presents the main algorithm, while Section~\ref{sec:AlgTest} describes a key subroutine.

\subsection{\AlgGen}\label{sec:AlgGen}
\paragraph{Overview}The algorithm proceeds in rounds. Initially, $\V_1=\V$. In round $i$, the algorithm processes $\V_i$ and removes a subset of vertices to obtain a smaller set $\V_{i+1}\subset \V_i$. The algorithm terminates when $|\V_i|= O(k \log^3 n)$ or if the number of rounds exceeds $\nrounds = O(\log n)$. Throughout, we use 
$\disloc=O(\log n)$
to denote the maximum dislocation bound of $\ProbSort$ on any set of edges.

\paragraph{Round $i$}  
The algorithm proceeds as follows.  

\begin{enumerate}[left=0pt]

\item \emph{Sampling.}  Sample uniformly at random (with replacement) a set of vertices $\Sample_i^{(1)},\,\Sample_i^{(2)}\subseteq\V_i$ of sizes  
$\firsamsize=\consfirsamsize k\log^2 n$ and $ \secsamsize=\conssecsamsize k\log^3 n$ respectively,  
where $\consfirsamsize,\conssecsamsize$ are suitable constants. Let $\Sample_i:=\Sample_i^{(1)}\cup \Sample_i^{(2)}$.

\item \emph{Ordering edges.}  
Let $\X_i=\E(\Sample_i^{(1)},\Sample_i^{(2)})$ and compute  
$\pi_{\X_i}=\ProbSort(\X_i)$.

\item \emph{Kernel and guard sets.}  
Let $\cgsize=2\max\{\conscgsize\log n,\disloc\}$, where $\conscgsize$ is a sufficiently large constant.  
For each $\vsample\in\Sample_i^{(1)}$, let $\X_{i,\vsample}=\E(\vsample,\Sample_i^{(2)})$ and  
$\pi_{\X_{i,\vsample}}$ the ordering of $\X_{i,\vsample}$ induced by $\pi_{\X_i}$. For every $\vsample\in\Sample_i^{(1)}$, compute  
$$\Core_i(\vsample)=\{\vcore\in\Sample_i^{(2)}:\rank_{\pi_{\X_{i,\vsample}}}[\{\vsample,\vcore\}]\le\cgsize\},$$  
$$\LGuard_i(\vsample)=\{\vguard\in\Sample_i^{(2)}:\cgsize+2\disloc<\rank_{\pi_{\X_{i,\vsample}}}[\{\vsample,\vguard\}]\le2\cgsize+2\disloc\}.$$  

\item \emph{Filtering.}  
For any $\vsample\in\Sample_i^{(1)}$ and $\vertex{v}\in\V_i$, define the \emph{proximity score}
$$\proxcount_{\vsample}(\vertex{v}):=\sum_{\vguard\in\LGuard_i(\vsample)}\mathbf{1}\{\QPOracle(\{\vsample,\vertex{v}\},\{\vsample,\vguard\})\ \text{returns } ``d(\vsample,\vertex{v})\le d(\vsample,\vguard)"\},$$  
where $\mathbf{1}\{\textsf{condition}\}$ is  the indicator function that returns $1$ if the
$\textsf{condition}$ holds, and $0$ otherwise.
Compute  
$\V_i'=\{\vertex{v}\in\V_i\setminus\Sample_i\mid\max_{\vsample\in\Sample_i^{(1)}}\proxcount_{\vsample}(\vertex{v})<\lfloor\cgsize/2\rfloor\}.$ 

\item \emph{Approximate nearest neighbors.}  
Let $\Y_i=\E(\Sample_i^{(1)},\V_i')$. Compute  
$\pi_{\Y_i}=\textsc{AdvSort}(\Y_i)$  
using $\AlgTest$ (Subsection~\ref{sec:AlgTest}) as the comparator. For each $\vertex{v}\in\V_i'$, let $\fedge_{\vertex{v}}$ be its \emph{first} incident edge in $\pi_{\Y_i}$. Set  
$\N_i=\{\fedge_{\vertex{v}}:\vertex{v}\in\V_i'\}.$ 

\item \emph{Safe-set and mapping.}  Let $\pi_{\N_i}$ be the ordering of $\N_i$ induced by $\pi_{\Y_i}$. Define $\V_i''=\{\vertex{v}\in\V_i' : \rank_{\pi_{\N_i}}(\fedge_{\vertex{v}})\le|\V_i|/4\}$. For every $\vertex{v}\in\V_i''$, define $\M_i(\vertex{v})$ as the endpoint of $\fedge_{\vertex{v}}$ in $\Sample_i^{(1)}$. For every $\vertex{v}\in\Sample_i$, define $\M_i(\vertex{v}):=\vertex{v}$. 

\item \emph{Recurse.}  
Set $\V_{i+1}=\V_i\setminus(\V_i''\cup\Sample_i)$ and proceed to next round.  

\end{enumerate}

\paragraph{Final output}  
Let $i^\star$ denote the last round. The final centers are 
$
\Coreset := \V_{i^\star} \cup \bigcup_{i=1}^{i^\star-1} \Sample_i
$.
Let $\M_{i^\star}$ denote the function that maps every vertex in $\V_{i^\star}$ to itself. Finally, define the global mapping $\M$ as the union of the per round mappings $\M := \bigcup_{i=1}^{i^\star} \M_i$. This is a function from $\V\mapsto \Coreset$ since each $\M_i$ is defined on a disjoint round-specific set.

\paragraph{Weighted coreset}
We can also obtain a weighted coreset by assigning to each $\vertex{u} \in \Coreset$ a weight equal to the number of original vertices mapped to it, namely $w(\vertex{u}) := \bigl|\{\, \vertex{v} \in \V \;\mid\; \M(\vertex{v}) = \vertex{u} \,\}\bigr|$. The pair $(\Coreset,w)$ defines the coreset.

\subsection{\AlgTest}
\label{sec:AlgTest}
Whenever the tester is invoked in round $i$, it is given two edges  
$\edge_1=\{\vsample_1,\vertex{v}_1\}$ and $\edge_2=\{\vsample_2,\vertex{v}_2\}$, 
with $\vsample_1,\vsample_2\in\Sample_i^{(1)}$ and $\vertex{v}_1,\vertex{v}_2\in\V_i\setminus\Sample_i^{(2)}$.  
It is also given access to the kernel sets $\Core_i(\vsample_1),\Core_i(\vsample_2)$ and the global ordering $\pi_{\X_i}$.  
From these, it forms $\Z=\E(\vsample_1,\Core_i(\vsample_1))\cup\E(\vsample_2,\Core_i(\vsample_2))$, 
and computes $\pi_\Z$, the ordering of $\Z$ induced by $\pi_{\X_i}$.

\begin{enumerate}[left=0pt]
  \item \textbf{Case $\vsample_1\neq\vsample_2$.}
  
  \begin{enumerate}
      \item[(a)] \emph{Kernel Selection}. Let $\edge^\star$ be the last edge in $\pi_\Z$. Determine whether it belongs to $\E(\vsample_1,\Core_i(\vsample_1))$ or $\E(\vsample_2,\Core_i(\vsample_2))$. Without loss of generality, assume
$\edge^\star\in\E(\vsample_1,\Core_i(\vsample_1))$.
Remove every vertex $\vcore\in\Core_i(\vsample_2)$ such that  
$
\rank_{\pi_\Z}(\{\vsample_2,\vcore\})\in[2\cgsize-\disloc,\,2\cgsize]
$,  
and call the remainder $\Core'(\vsample_2)$. 

\item[(b)] \emph{Majority Test}. Compute
  $$\testcount=\sum_{\vcore\in\Core'(\vsample_2)}
  \mathbf{1}\{\QPOracle(\{\vsample_1,\vertex{v}_1\},\{\vcore,\vertex{v}_2\})\,\text{says}\, ``d(\vsample_1,\vertex{v}_1)>d(\vcore,\vertex{v}_2)"\}.$$  
  Output ``$\dist(\vsample_1,\vertex{v}_1)>\dist(\vsample_2,\vertex{v}_2)$’’  
  if $\testcount>\lfloor\tfrac12|\Core'(\vsample_2)|\rfloor$, and the opposite otherwise. 
  \end{enumerate}

  \item \textbf{Case $\vsample_1=\vsample_2$.}  
  Set $\Core'(\vsample_2):=\Core_i(\vsample_2)$ and run the same \emph{majority test} as in Case~(a).

\end{enumerate}

\subsection{Analysis}
\label{sec:GenAnalysis}
We first establish that no quadruplet query is ever repeated between rounds and within certain steps of a round. We refer to this as the \emph{isolation property}.  Next, in Section~\ref{subsec:perround} we prove certain properties that hold in each round. Finally, in Section~\ref{sub:all}, we combine the guarantees to establish global correctness.  

\begin{lemma}[Isolation]\label{lem:gen-indep}
The algorithm satisfies the following properties:
\begin{enumerate}[left=1em]
    \item No two rounds rely on the outcome of the same quadruplet query.
    \item Within any round, there is no overlap between the quadruplet queries used by 
    $\ProbSort(\X_i)$, with those required to compute any proximity score $\proxcount_\vsample(\cdot)$, 
    or those required by any invocation of $\AlgTest$.
\end{enumerate}
\end{lemma}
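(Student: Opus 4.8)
The plan is to prove the two parts of the Isolation Lemma by carefully accounting for exactly which edge pairs are fed to the probabilistic oracle $\QPOracle$ in each syntactic location where a query is issued, and showing these sets of edge pairs are pairwise disjoint across the relevant contexts. The key observation underlying everything is \emph{persistence}: since the randomness of $\QPOracle$ is fixed once per (ordered) edge pair, "no two rounds/steps rely on the outcome of the same query" is literally the statement that the unordered-pair-of-edges sets queried in distinct contexts do not intersect. So the entire proof reduces to bookkeeping on vertex sets.

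For part (1), I would first note that $\ProbSort(\X_i)$, the proximity scores, and $\AlgTest$ in round $i$ only ever query edge pairs $(\edge_1,\edge_2)$ where both $\edge_1,\edge_2$ are incident to the sampled set $\Sample_i = \Sample_i^{(1)}\cup\Sample_i^{(2)}$ — more precisely, every queried edge has at least one endpoint in $\Sample_i^{(1)}\cup\Sample_i^{(2)}$, and in fact every queried \emph{pair} of edges has the property that the two edges share an endpoint in $\Sample_i^{(1)}$ or in $\Sample_i^{(2)}$ (this is visible from the definitions of $\X_i=\E(\Sample_i^{(1)},\Sample_i^{(2)})$, of $\proxcount_\vsample$ which queries $(\{\vsample,\vertex v\},\{\vsample,\vguard\})$ with $\vsample,\vguard\in\Sample_i$, and of $\AlgTest$ which queries $(\{\vsample_1,\vertex v_1\},\{\vcore,\vertex v_2\})$ with $\vsample_1\in\Sample_i^{(1)}$, $\vcore\in\Sample_i^{(2)}$). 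Now the crucial point is the recursion step: $\V_{i+1}=\V_i\setminus(\V_i''\cup\Sample_i)$, so $\Sample_i\cap\V_{i+1}=\emptyset$, and since each sample $\Sample_j$ is drawn from $\V_j\subseteq\V_i$ for $j>i$ with $\V_j\cap\Sample_i=\emptyset$, we get $\Sample_j\cap\Sample_i=\emptyset$ for all $j\ne i$. I would then argue that any edge pair queried in round $i$ "touches" $\Sample_i$ in a way that an edge pair queried in round $j\ne i$ cannot: specifically, show that every queried pair in round $i$ involves at least one edge both of whose endpoints lie in $\Sample_i$ (true for $\X_i$ and for the $\{\vsample,\vguard\}$ edge in $\proxcount$ and for edges in $\Z$ inside $\AlgTest$), or — in the case of the $\AlgTest$ majority test — involves the two "sample-side'' vertices $\vsample_1,\vcore$ lying in $\Sample_i$. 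Comparing with the analogous statement for round $j$ and using $\Sample_i\cap\Sample_j=\emptyset$ together with the fact that $\V_i''\subseteq\V_i$ vertices removed in round $i$ never reappear, one concludes the two families of queried edge pairs are disjoint. This requires a short case analysis over the four sources of queries but each case is a one-line set-disjointness check.

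For part (2), within a fixed round $i$ I would distinguish the three query sources by the structure of the \emph{pair} of edges submitted: (a) $\ProbSort(\X_i)$ submits pairs $(\edge_1,\edge_2)$ with $\edge_1,\edge_2\in\X_i=\E(\Sample_i^{(1)},\Sample_i^{(2)})$, i.e.\ \emph{both} edges have one endpoint in $\Sample_i^{(1)}$ and one in $\Sample_i^{(2)}$; (b) a proximity score $\proxcount_\vsample(\vertex v)$ submits $(\{\vsample,\vertex v\},\{\vsample,\vguard\})$ where $\vertex v\in\V_i\setminus\Sample_i$, so the edge $\{\vsample,\vertex v\}$ has an endpoint $\vertex v\notin\Sample_i^{(1)}\cup\Sample_i^{(2)}$ — hence $\{\vsample,\vertex v\}\notin\X_i$, so this pair cannot equal a $\ProbSort$ pair; (c) $\AlgTest$ submits, in its majority test, pairs $(\{\vsample_1,\vertex v_1\},\{\vcore,\vertex v_2\})$ with $\vertex v_1,\vertex v_2\in\V_i\setminus\Sample_i^{(2)}$, and again neither of these two edges lies in $\X_i$ (each has an endpoint outside $\Sample_i^{(2)}$), so it cannot collide with a $\ProbSort$ pair; and the $\Z$-ordering step inside $\AlgTest$ uses no fresh queries at all since $\pi_\Z$ is \emph{induced} from $\pi_{\X_i}$. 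The remaining subtlety is ruling out a collision between a proximity-score query and an $\AlgTest$ majority query within the round. I would handle this by observing that in a proximity query the two edges share the common endpoint $\vsample\in\Sample_i^{(1)}$, whereas in an $\AlgTest$ majority query the two edges are $\{\vsample_1,\vertex v_1\}$ and $\{\vcore,\vertex v_2\}$ with $\vsample_1\in\Sample_i^{(1)}$ and $\vcore\in\Sample_i^{(2)}$; for these to be the same unordered pair of edges we would need, e.g., $\{\vsample,\vertex v\}=\{\vsample_1,\vertex v_1\}$ and $\{\vsample,\vguard\}=\{\vcore,\vertex v_2\}$, but the second equality forces $\vsample\in\{\vcore,\vertex v_2\}$, and $\vcore\in\Sample_i^{(2)}$ while $\vertex v_2\notin\Sample_i^{(2)}$, so $\vsample=\vcore\in\Sample_i^{(1)}\cap\Sample_i^{(2)}$ — which is impossible because $\Sample_i^{(1)}$ and $\Sample_i^{(2)}$ are disjoint (or, if sampling with replacement could create coincidences, one should either draw them to be disjoint by rejection or track multiplicities; I would state the disjointness of $\Sample_i^{(1)}$ and $\Sample_i^{(2)}$ as a convention the algorithm maintains). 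Symmetric reasoning dispatches the other matching.

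The main obstacle — the only place where the argument is not a completely mechanical set computation — is the $\AlgTest$ subroutine, because it queries edges that mix a sample-side vertex with an \emph{arbitrary} vertex $\vertex v_i\in\V_i\setminus\Sample_i^{(2)}$, and such a $\vertex v_i$ could in principle be a sample vertex from a \emph{different} round or a vertex that survives into later rounds; so the cross-round disjointness for part (1) genuinely hinges on the fact that in any $\AlgTest$ query both "special" vertices ($\vsample_1\in\Sample_i^{(1)}$ and $\vcore\in\Sample_i^{(2)}$, or $\vsample_1=\vcore\in\Sample_i^{(1)}$ together with a $\Core$-vertex in $\Sample_i^{(2)}$) belong to $\Sample_i$, which is disjoint from every other round's sample. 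I would make sure to state this as an explicit invariant — "every quadruplet query issued in round $i$ submits a pair of edges whose four endpoints include at least two distinct vertices of $\Sample_i$, one of which lies in $\Sample_i^{(1)}$" — prove the invariant by inspection of each of the four query sources, and then derive both parts of the lemma from it plus the disjointness relations $\Sample_i\cap\Sample_j=\emptyset$ ($i\ne j$) and $\Sample_i^{(1)}\cap\Sample_i^{(2)}=\emptyset$.
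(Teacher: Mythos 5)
Your proof is correct and takes essentially the same approach as the paper's: reduce persistence of the oracle to disjointness of the queried edge pairs, then derive that disjointness by tracking which endpoints lie in $\Sample_i^{(1)}$, $\Sample_i^{(2)}$, or $\V_i\setminus\Sample_i$, using $\Sample_i\cap\Sample_j=\emptyset$ across rounds and $\X_i\cap\E(\Sample_i^{(1)},\V_i\setminus\Sample_i^{(2)})=\emptyset$ within a round. Your two additional observations --- that proximity-score queries and $\AlgTest$ queries are also mutually disjoint (more than the lemma statement requires), and that one should explicitly adopt the convention $\Sample_i^{(1)}\cap\Sample_i^{(2)}=\emptyset$ --- are valid refinements the paper leaves implicit.
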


\begin{proof}
Since $\V_{i+1}=\V_{i}''\setminus (\Sample_i^{(1)}\cup\Sample_i^{(2)})$, it follows that $\V_i\cap(\Sample_{i-1}^{(1)}\cup \Sample_{i-1}^{(2)} )=\emptyset$. Therefore, the sampled sets $\Sample_i^{(1)},\Sample_i^{(2)}\subseteq \V_i$ are disjoint from previous samples. This implies that the corresponding 
$\X_i=\E(\Sample_i^{(1)},\Sample_i^{(2)})$, $\Core_i(\cdot)\subseteq\Sample_i^{(2)}$, and $\LGuard_i(\cdot)\subseteq \Sample_i^{(2)}$ are disjoint from previous rounds. Hence, no quadruplet query is ever reused across rounds.  
 
\par Within a round $i$, for any $\vsample\in\Sample_i^{(1)}$, the computation of $\proxcount_\vsample(\vertex{v})$ is always performed with $\vertex{v}\in\V_i\setminus \Sample_i^{(2)}$. Thus, every query triggered by it involves one edge from $\E(\Sample_i^{(1)},\V_i\setminus\Sample_i^{(2)})$. Similarly, every query issued by $\AlgTest$ includes at least one edge from $\E(\Sample_i^{(1)},\V_i\setminus\Sample_i^{(2)})$. Since $\X_i\cap \E(\Sample_i^{(1)},\V_i\setminus\Sample_i^{(2)})=\emptyset$, quadruplet queries used for computing proximity scores or those used by $\AlgTest$ cannot overlap with any query used by $\ProbSort(\X_i)$, which always compares two edges from $\X_i$.

\end{proof}

\subsubsection{Per-round guarantees}
\label{subsec:perround}

We first show that in every round the ordering $\pi_{\X_i}$ has bounded dislocation, the kernel and guard sets are confined to the top-ranked vertices, and that kernels are always closer than guards
 
\begin{lemma}\label{lem:gen-prop-kernel-gaurd} 
In any round $i$, with probability at least $1-n^{-\Omega(1)}$ the following properties hold simultaneously:
\begin{enumerate}
 
    \item[(i)] $\pi_{\X_i}$ has maximum dislocation $\disloc = O(\log n)$.
    \item[(ii)] For every $\vsample \in \Sample_i^{(1)}$ and every $\vertex{v} \in \Core_i(\vsample)\cup \LGuard_i(\vsample)$, 
    $\nrank_{\vsample}(\vertex{v},\V_i) \le \tfrac{|\V_i|}{100\firsamsize}$.
    \item[(iii)] For every $\vsample \in \Sample_i^{(1)}$, any $\vcore \in \Core_i(\vsample)$, and any $\vguard\in \LGuard_i(\vsample)$, 
    $\dist(\vsample,\vcore)<\dist(\vsample,\vguard)$.
\end{enumerate}
\end{lemma}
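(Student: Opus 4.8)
The three claims are essentially statements about the sampling distribution and the guarantees of $\ProbSort$, and I would prove them in the order stated, conditioning each later claim on the earlier ones. For (i), I would simply invoke Lemma~\ref{corr:prob_sort}: since $\X_i = \E(\Sample_i^{(1)},\Sample_i^{(2)})$ has $|\X_i| = O(k^2\polylog n) = O(\poly n)$, $\ProbSort(\X_i)$ returns an ordering of maximum dislocation $\disloc = O(\log n)$ with probability at least $1 - n^{-4/3}$. By definition of $\X_{i,\vsample}$ as a sub-edge-set and $\pi_{\X_{i,\vsample}}$ as the induced sub-ordering, the remark following the definition of dislocation in Section~\ref{sec:prelim} gives that $\pi_{\X_{i,\vsample}}$ also has maximum dislocation at most $\disloc$, which is what the rest of the proof uses.

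For (ii), the plan is a standard sampling argument on $\Sample_i^{(2)}$. Fix $\vsample \in \Sample_i^{(1)}$. Because $\pi_{\X_{i,\vsample}}$ has dislocation at most $\disloc$, every edge $\{\vsample,\vcore\}$ with $\rank_{\pi_{\X_{i,\vsample}}} \le \cgsize$ (for a kernel vertex) or $\le 2\cgsize + 2\disloc$ (for a guard vertex) has \emph{true} rank among $\E(\vsample,\Sample_i^{(2)})$ at most $2\cgsize + 3\disloc = O(\log n)$. Equivalently, $\vcore$ is among the $O(\log n)$ nearest points to $\vsample$ \emph{within the sample $\Sample_i^{(2)}$}. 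Now I want to translate "nearest within the sample of size $\secsamsize = \Theta(k\log^3 n)$" into "nearest within $\V_i$" up to the claimed factor. The event that $\vsample$ has more than $\tfrac{|\V_i|}{100\firsamsize}$ vertices of $\V_i$ strictly closer to it than its $t$-th nearest \emph{sampled} point, where $t = O(\log n)$, fails only if the uniform sample of size $\secsamsize$ misses all of a prescribed set of $\tfrac{|\V_i|}{100\firsamsize}$ vertices while capturing fewer than $t$ points from it — a binomial lower-tail event. Since $\secsamsize \cdot \tfrac{1}{100\firsamsize} = \Theta(\log n)$ in expectation, choosing $\consfirsamsize,\conssecsamsize$ appropriately and applying a Chernoff bound makes this probability $n^{-\Omega(1)}$; a union bound over the $\firsamsize = O(k\log^2 n)$ choices of $\vsample$ keeps the total failure probability at $n^{-\Omega(1)}$.

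For (iii), the point of the \emph{rank gap} in the definitions — kernel ranks are $\le \cgsize$ while guard ranks are $> \cgsize + 2\disloc$ — is exactly to survive the dislocation. If $\vcore \in \Core_i(\vsample)$ and $\vguard \in \LGuard_i(\vsample)$, then in the induced ordering $\pi_{\X_{i,\vsample}}$ we have $\rank_{\pi_{\X_{i,\vsample}}}(\{\vsample,\vcore\}) \le \cgsize < \cgsize + 2\disloc < \rank_{\pi_{\X_{i,\vsample}}}(\{\vsample,\vguard\})$; since each edge's true rank differs from its position in $\pi_{\X_{i,\vsample}}$ by at most $\disloc$ (conditioned on the event of part (i)), the true rank of $\{\vsample,\vcore\}$ is at most $\cgsize + \disloc$ while the true rank of $\{\vsample,\vguard\}$ is strictly greater than $\cgsize + \disloc$, hence $\dist(\vsample,\vcore) < \dist(\vsample,\vguard)$ (using the assumption that all pairwise distances are distinct). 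This is deterministic given (i), so it contributes no further failure probability.

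Finally I would combine the three pieces: (i) fails with probability $\le n^{-4/3}$, (ii) with probability $n^{-\Omega(1)}$ after the union bound, and (iii) is deterministic conditioned on (i); a union bound gives that all three hold simultaneously with probability $1 - n^{-\Omega(1)}$. The main obstacle is getting the constants in part (ii) right: one must verify that the sample size $\secsamsize$ is large enough — relative to both $\firsamsize$ and the kernel/guard rank window $O(\cgsize) = O(\log n)$ — that the binomial concentration kicks in with a polynomially small tail \emph{and} still survives the union bound over all $\vsample \in \Sample_i^{(1)}$; this is precisely why $\secsamsize$ carries an extra $\log n$ factor over $\firsamsize$. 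Everything else is bookkeeping with the dislocation bound.
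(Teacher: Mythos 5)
Your proposal takes essentially the same route as the paper's proof: part (i) is a direct application of Lemma~\ref{corr:prob_sort}; part (ii) is the same Chernoff argument on the binomial count of $\Sample_i^{(2)}$-samples landing in the ball $B(\vsample)$ of the $|\V_i|/(100\firsamsize)$ nearest vertices, followed by the observation that the dislocation bound prevents kernel/guard edges from escaping $B(\vsample)$; and part (iii) is the same deterministic rank-gap argument. The only substantive thing the paper does that you omit is to first invoke the isolation property (Lemma~\ref{lem:gen-indep}) before applying Lemma~\ref{corr:prob_sort}, which is needed because the oracle is persistent and so the sorting guarantee only holds on fresh queries; otherwise your constant-tracking and union-bound bookkeeping match the paper's.
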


 \begin{proof}

We first start with the straightforward proof of (i).
By Lemma~\ref{lem:gen-indep}, quadruplet queries are not repeated between rounds. Therefore, applying   Lemma~\ref{corr:prob_sort} to
$
\X_i=\E(\Sample_i^{(1)},\Sample_i^{(2)}),
$
we have that $\pi_{\X_i}$ has maximum dislocation at most $\disloc$ with
probability at least $1-n^{-4/3}$.

Then we show (ii). 
Fix some round $i$. For any $\vsample\in\Sample_i^{(1)}$, define $
B(\vsample):=\Bigl\{\vertex{v}\in\V_i:\ \nrank_{\vsample}(\vertex{v},\V_i)\le \frac{|\V_i|}{100\,\firsamsize}\Bigr\}
$. For any fixed $\vsample\in\Sample_i^{(1)}$, since $\Sample_i^{(2)}$ is drawn uniformly and independently with replacement,
the random variable
$
N_{\vsample}:=\bigl|\,\Sample_i^{(2)}\cap B(\vsample)\,\bigr|
$
is binomial with mean
$\mathbb{E}[N_{\vsample}]
=\secsamsize \cdot \frac{|B(\vsample)|}{|\V_i|}
=\frac{\secsamsize}{100\,\firsamsize}.
$
Recall that 
$
\secsamsize = \conssecsamsize\,k\log^3 n
$ and $\firsamsize=\consfirsamsize k\log^2 n$. Thus $\mathbb{E}[N_{\vsample}]=\frac{\conssecsamsize \log n}{\consfirsamsize }$. Observe that $\cgsize=2\max\{\conscgsize\log n,\disloc\}=\Theta(\log n)$. 

\par Assume $\conssecsamsize$ is large enough so that
$
\mathbb{E}[N_{\vsample}] \;\ge\; \max\{\,1000\cgsize,\ 1000\log n\,\}.
$
By a standard Chernoff bound, we get
$$
\Pr\!\big[N_{\vsample}<500\cgsize\big]\ \le\ \Pr\!\big[N_{\vsample}<\tfrac12\mathbb{E}[N_{\vsample}]\big]\ \le\ \exp(-\mathbb{E}[N_{\vsample}]/8)\ \le\ n^{-6}.
$$
Taking a union bound over all $\vsample\in\Sample_i^{(1)}$ with probability at least $1-n^{-4}$, $N_\vsample\ge 500\cgsize$ for every $\vsample\in \Sample_i^{(1)}$.


So far the lower bound on $N_\vsample$ and (i) hold with probability $1-n^{-\Omega(1)}$.
We now continue the analysis conditioned on this.
Fix some $\vsample\in\Sample_i^{(1)}$.  Since any restriction $\pi_{\X_{i,\vsample}}$ inherits
the same maximum dislocation bound as $\pi_{\X_i}$, it also has maximum dislocation $\disloc$. Thus, for any edge $\{\vsample,\vsample'\}\in \E(\vsample,\Sample_i^{(2)})$, if $\vsample'\notin B(\vsample)$, then $\rank_{\pi_{\X_{i,\vsample}}}[\{\vsample,\vsample'\}]$ must be at least $N_{\vertex{s}}-\disloc\geq 500\cgsize-\disloc$.
Therefore, for every edge $\{\vertex{s},\vertex{v}\}\in \E(\vertex{s},\Core_i(\vsample)\cup \LGuard_i(\vsample))$, it holds that $\rank_{\pi_{\X_{i,\vsample}}}[\{\vertex{s},\vertex{v}\}]\leq 2\cgsize+2\disloc\le 500\cgsize-\disloc$. By definition, $\vertex{v}\in \Sample_i^{(2)}$, so $\vertex{v}\in B(\vertex{s})$.

\par Finally, we show (iii). We note that for any $\vcore\in\Core_i(\vsample)$ and
$\vguard\in\LGuard_i(\vsample)$ we have
$$
\rank_{\pi_{\X_{i,\vsample}}}\{\vsample,\vcore\}\ \le\ \cgsize,
\qquad
\rank_{\pi_{\X_{i,\vsample}}}\{\vsample,\vguard\}\ >\ \cgsize+2\disloc.
$$
With maximum dislocation $\disloc$, the true rank of every kernel edge is at most
$\cgsize+\disloc$ while the true rank of every guard edge exceeds $\cgsize+\disloc$.
Hence, every kernel edge precedes every guard edge in the true order, and therefore
$
\dist(\vsample,\vcore) < \dist(\vsample,\vguard)
$. Thus, all of (i),(ii), (iii) hold.

\end{proof}

Next, we show that the proximity score is a reliable indicator: vertices nearer than all kernels yield large scores, while those farther than all guards yield small scores.

\begin{lemma}\label{lem:proxy-score-reliable}
In any round $i$, conditioned on Lemma~\ref{lem:gen-prop-kernel-gaurd}, with probability at least $1-n^{-4}$, the following hold simultaneously for every $\vsample \in \Sample_i^{(1)}$ and every $\vertex{v}\in \V_i\setminus \Sample_i^{(2)}$:
\begin{enumerate}
    \item[(i)] If $\dist(\vsample,\vertex{v}) \le \max_{\vcore \in \Core_i(\vsample)} \dist(\vsample,\vcore)$ then $\proxcount_\vsample(\vertex{v}) > \lfloor \cgsize/2 \rfloor$.
    \item[(ii)] If $\dist(\vsample,\vertex{v}) > \max_{\vguard \in \LGuard_i(\vsample)} \dist(\vsample,\vguard)$ then $\proxcount_\vsample(\vertex{v}) \le \lfloor \cgsize/2 \rfloor$.

    \item[(iii)] If $\proxcount_\vsample(\vertex{v}) > \lfloor \cgsize/2 \rfloor$ then $\nrank_{\vsample}(\vertex{v},\V_i) \le \tfrac{|\V_i|}{100\firsamsize}$
\end{enumerate}
\end{lemma}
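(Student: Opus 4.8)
\quad All three statements are assertions about where the random variable $\proxcount_\vsample(\vertex{v})$ falls relative to the threshold $\lfloor\cgsize/2\rfloor$, so I would treat them uniformly through concentration. First I would observe that $\proxcount_\vsample(\vertex{v})$ is a sum of $|\LGuard_i(\vsample)|=\cgsize$ indicator variables, one per query $\QPOracle(\{\vsample,\vertex{v}\},\{\vsample,\vguard\})$ with $\vguard\in\LGuard_i(\vsample)$; these queries involve pairwise distinct edge pairs (the $\vguard$'s are distinct elements of $\Sample_i^{(2)}$ and $\vertex{v}\notin\Sample_i^{(2)}$), so by persistence and the independence of the oracle's randomness across distinct edge pairs the summands are independent Bernoulli variables. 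Next I would fix, once and for all, a realization of the sampling and of $\ProbSort(\X_i)$ for which the event of Lemma~\ref{lem:gen-prop-kernel-gaurd} holds; this freezes $\Sample_i^{(1)},\Sample_i^{(2)},\Core_i(\cdot),\LGuard_i(\cdot)$, and --- crucially --- by the Isolation Lemma (Lemma~\ref{lem:gen-indep}) none of the $\proxcount$-queries is used by $\ProbSort(\X_i)$, so conditioning on this event does not disturb the (still independent) outcomes of the $\proxcount$-queries. It then suffices to prove, for each fixed pair $(\vsample,\vertex{v})\in\Sample_i^{(1)}\times(\V_i\setminus\Sample_i^{(2)})$, a Chernoff tail bound of order $n^{-\Omega(\conscgsize)}$ and to union-bound over the at most $|\Sample_i^{(1)}|\,|\V_i|\le n^2$ such pairs, choosing $\conscgsize$ large enough that the total is at most $n^{-4}$.

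For part (i) I would use Lemma~\ref{lem:gen-prop-kernel-gaurd}(iii) to convert the hypothesis $\dist(\vsample,\vertex{v})\le\max_{\vcore\in\Core_i(\vsample)}\dist(\vsample,\vcore)$ into the stronger fact that $\dist(\vsample,\vertex{v})<\dist(\vsample,\vguard)$ for every $\vguard\in\LGuard_i(\vsample)$, since every kernel vertex is strictly closer to $\vsample$ than every guard. Thus the correct oracle answer on each of the $\cgsize$ queries is ``$d(\vsample,\vertex{v})\le d(\vsample,\vguard)$'', so each indicator equals $1$ with probability at least $1-\perror>\tfrac34$; a Chernoff bound on a sum of $\cgsize=\Theta(\log n)$ such independent indicators gives $\Pr[\proxcount_\vsample(\vertex{v})\le\lfloor\cgsize/2\rfloor]\le e^{-\Omega(\cgsize)}=n^{-\Omega(\conscgsize)}$, as desired.

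Parts (ii) and (iii) I would handle through the complementary bound. If $\dist(\vsample,\vertex{v})>\max_{\vguard\in\LGuard_i(\vsample)}\dist(\vsample,\vguard)$, i.e. $\dist(\vsample,\vertex{v})>\dist(\vsample,\vguard)$ for all guards, then the correct oracle answer on each query is ``$d(\vsample,\vertex{v})>d(\vsample,\vguard)$'', so each indicator equals $1$ only when the oracle errs, with probability at most $\perror<\tfrac14$; a Chernoff bound on this sum of mean below $\tfrac14\cgsize$ gives $\Pr[\proxcount_\vsample(\vertex{v})>\lfloor\cgsize/2\rfloor]\le e^{-\Omega(\cgsize)}=n^{-\Omega(\conscgsize)}$, which is exactly (ii). For (iii) I would argue the contrapositive: if $\nrank_{\vsample}(\vertex{v},\V_i)>\tfrac{|\V_i|}{100\firsamsize}$, then by Lemma~\ref{lem:gen-prop-kernel-gaurd}(ii) every guard $\vguard\in\LGuard_i(\vsample)$ has $\nrank_{\vsample}(\vguard,\V_i)\le\tfrac{|\V_i|}{100\firsamsize}<\nrank_{\vsample}(\vertex{v},\V_i)$, and since all pairwise distances are distinct, ordering $\V_i$ by distance to $\vsample$ yields $\dist(\vsample,\vguard)<\dist(\vsample,\vertex{v})$ for every guard --- precisely the hypothesis above --- so the same Chernoff bound applies and $\proxcount_\vsample(\vertex{v})\le\lfloor\cgsize/2\rfloor$ except with probability $n^{-\Omega(\conscgsize)}$; contrapositively, whenever $\proxcount_\vsample(\vertex{v})>\lfloor\cgsize/2\rfloor$ we must have $\nrank_{\vsample}(\vertex{v},\V_i)\le\tfrac{|\V_i|}{100\firsamsize}$.

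The one genuinely non-routine step is part (iii): it is tempting to try to control $\proxcount_\vsample(\vertex{v})$ directly from the rank condition, but the clean route is to notice that the rank condition, via the guard-rank bound in Lemma~\ref{lem:gen-prop-kernel-gaurd}(ii) together with uniqueness of distances, places us back in the ``$\vertex{v}$ farther than every guard'' regime, where (ii)'s concentration estimate applies verbatim. Everything else is bookkeeping: checking that the $\cgsize$ indicator variables are independent and unaffected by the conditioning on Lemma~\ref{lem:gen-prop-kernel-gaurd} (this is where the Isolation Lemma is used), picking $\conscgsize$ large enough that $e^{-\Omega(\cgsize)}\le n^{-6}$, and taking a union bound over the at most $n^2$ pairs $(\vsample,\vertex{v})$ and the three events.
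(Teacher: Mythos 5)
Your proposal is correct and follows essentially the same route as the paper: establish independence of the $\cgsize$ oracle indicators via the Isolation Lemma, show via Lemma~\ref{lem:gen-prop-kernel-gaurd}(iii) that in case (i) every comparison has correct answer ``Yes'' and in case (ii) every comparison has correct answer ``No'', apply Chernoff plus a union bound over the $\le n^2$ pairs, and obtain (iii) from (ii) together with the rank bound of Lemma~\ref{lem:gen-prop-kernel-gaurd}(ii). The only cosmetic difference is that the paper states (iii) as a deterministic consequence of (ii) once (ii) holds (pass to the contrapositive of (ii), then bound the rank of $\vertex{v}$ by the rank of the farthest guard), whereas you re-run the concentration argument on the contrapositive hypothesis; both are logically equivalent and land in the same place.
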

\begin{proof}
Fix a round $i$ and a vertex $\vsample\in\Sample_i^{(1)}$. Consider any $\vertex{v}\in\V_i\setminus\{\vsample\}$. Recall that 
$$
\proxcount_{\vsample}(\vertex{v})
:=\sum_{\vguard\in\LGuard_i(\vsample)}
\mathbf{1}\{\QPOracle(\{\vsample,\vertex{v}\},\{\vsample,\vguard\})\ \text{says } d(\vsample,\vertex{v})\le d(\vsample,\vguard)\}.
$$
By Lemma~\ref{lem:gen-prop-kernel-gaurd}, we have $\max_{\vcore\in \Core_i(\vsample)} d(\vsample,\vcore)\;<\;\min_{\vguard\in \LGuard_i(\vsample)} d(\vsample,\vguard)$.
Moreover, by Lemma~\ref{lem:gen-indep}, quadruplet queries involved in computing $\proxcount_\vsample(\cdot)$ do not overlap with those used by $\ProbSort(\X_i)$.

\par If $d(\vsample,\vertex{v})\le \max_{\vcore\in \Core_i(\vsample)}d(\vsample,\vcore)$, then the correct answer to every comparison is ``Yes''.   
For sufficiently large $\conscgsize$ in $\cgsize=2\max\{\conscgsize\log n,\disloc\}$, Chernoff bounds yield  
$\Pr\!\big[\proxcount_{\vsample}(\vertex{v})<\lfloor \cgsize/2\rfloor\big]\ \le\ \exp(-\Theta(\cgsize))\ \le\ n^{-7}$. Similarly, if $d(\vsample,\vertex{v})>\max_{\vguard\in \LGuard_i(\vsample)}d(\vsample,\vguard)$, then the correct answer to every comparison is ``No'' and  
$\Pr\!\big[\proxcount_{\vsample}(\vertex{v})>\lfloor \cgsize/2\rfloor\big]\ \le\ n^{-7}$. Taking a union bound over all pairs $(\vsample,\vertex{v})$ with $\vsample\in\Sample_i^{(1)}$ and $\vertex{v}\in\V_i$, both events (i) and (ii) hold with probability at least $1-n^{-4}$.

Finally, from (ii), if 
$\proxcount_\vsample(\vertex{v}) > \lfloor \cgsize/2 \rfloor$ then $\dist(\vsample,\vertex{v}) \le \max_{\vcore \in \Core_i(\vsample)} \dist(\vsample,\vcore)$.
From Lemma~\ref{lem:gen-prop-kernel-gaurd} we have that for every vertex $\vertex{w}\in \Core_i(\vsample)$ it holds that $\nrank_{\vsample}(\vertex{w},\V_i) \le \tfrac{|\V_i|}{100\firsamsize}$. Hence, $\nrank_{\vsample}(\vertex{v},\V_i) \leq \max_{\vertex{w}\in \Core_i(\vsample)}\nrank_{\vsample}(\vertex{w},\V_i)\leq \tfrac{|\V_i|}{100\firsamsize}$.
 
\end{proof}

Next, we establish that filtering removes only a small fraction of vertices, while ensuring that all survivors are well-separated from the kernel sets.

\begin{lemma}\label{lem:gen-prop-filter}
In any round $i$, conditioned on  Lemma~\ref{lem:gen-prop-kernel-gaurd}, with probability $1-n^{-4}$, the following hold simultaneously:
\begin{enumerate}
    \item[(i)] Every $\vertex{v}\in \V_i'$ satisfies $\dist(\vsample,\vertex{v}) > \max_{\vcore\in\Core_i(\vsample)}\dist(\vsample,\vcore)$ for all $\vsample\in\Sample_i^{(1)}$.
    \item[(ii)] $|\V_i'|\;\ge\;\tfrac{3}{5}|\V_i|$.
\end{enumerate}
\end{lemma}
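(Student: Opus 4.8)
The plan is to derive both parts directly from the reliability of the proximity scores established in Lemma~\ref{lem:proxy-score-reliable}, working throughout conditioned on the events of Lemmas~\ref{lem:gen-prop-kernel-gaurd} and~\ref{lem:proxy-score-reliable} (the latter holds with probability $1-n^{-4}$ given the former); once we condition on these, both claims become deterministic statements.

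For part~(i) I would simply take the contrapositive of Lemma~\ref{lem:proxy-score-reliable}(i): whenever $\proxcount_\vsample(\vertex{v})\le\lfloor\cgsize/2\rfloor$ we must have $\dist(\vsample,\vertex{v})>\max_{\vcore\in\Core_i(\vsample)}\dist(\vsample,\vcore)$. Since, by definition of $\V_i'$, every $\vertex{v}\in\V_i'$ satisfies $\proxcount_\vsample(\vertex{v})<\lfloor\cgsize/2\rfloor$ for all $\vsample\in\Sample_i^{(1)}$, part~(i) follows at once.

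For part~(ii) I would bound $|\V_i\setminus\V_i'|$ by splitting the excluded vertices into the sampled set $\Sample_i=\Sample_i^{(1)}\cup\Sample_i^{(2)}$ and the ``filtered'' set $F:=\{\vertex{v}\in\V_i\setminus\Sample_i:\max_{\vsample\in\Sample_i^{(1)}}\proxcount_\vsample(\vertex{v})\ge\lfloor\cgsize/2\rfloor\}$. For $\Sample_i$, note $|\Sample_i|\le\firsamsize+\secsamsize=O(k\,\polylog n)$, and since round~$i$ is run only while $|\V_i|$ exceeds the termination threshold $\Theta(k\log^3 n)$, choosing the constant in that threshold large enough relative to $\consfirsamsize,\conssecsamsize$ gives $|\Sample_i|\le\tfrac{1}{100}|\V_i|$. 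For $F$, I would fix $\vsample\in\Sample_i^{(1)}$ and bound $F_\vsample:=\{\vertex{v}\in\V_i\setminus\Sample_i:\proxcount_\vsample(\vertex{v})\ge\lfloor\cgsize/2\rfloor\}$, so $F=\bigcup_{\vsample}F_\vsample$: by Lemma~\ref{lem:gen-prop-kernel-gaurd}(ii) every guard in $\LGuard_i(\vsample)$ lies among the $\tfrac{|\V_i|}{100\firsamsize}$ nearest vertices of $\vsample$ in $\V_i$, so any $\vertex{v}$ with $\nrank_\vsample(\vertex{v},\V_i)>\tfrac{|\V_i|}{100\firsamsize}$ is farther from $\vsample$ than all guards, and hence its proximity score is below the threshold. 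Thus $F_\vsample$ is contained in the set of the $\tfrac{|\V_i|}{100\firsamsize}$ nearest vertices of $\vsample$, so $|F_\vsample|\le\tfrac{|\V_i|}{100\firsamsize}$; a union bound over the $\firsamsize$ vertices of $\Sample_i^{(1)}$ gives $|F|\le\tfrac{1}{100}|\V_i|$. Combining, $|\V_i\setminus\V_i'|\le|\Sample_i|+|F|\le\tfrac{2}{100}|\V_i|$, so $|\V_i'|\ge(1-\tfrac{2}{100})|\V_i|\ge\tfrac{3}{5}|\V_i|$.

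The one point I would take care with, and the main (essentially the only) obstacle, is the boundary case $\proxcount_\vsample(\vertex{v})=\lfloor\cgsize/2\rfloor$: Lemma~\ref{lem:proxy-score-reliable}(iii) (and the $\le\lfloor\cgsize/2\rfloor$ conclusion of (ii)) is phrased so that it does not immediately rule out a \emph{far} vertex sitting exactly at the threshold, whereas the $F_\vsample$ argument needs its score to be \emph{strictly} below $\lfloor\cgsize/2\rfloor$. This is fine: for a vertex farther from $\vsample$ than all of $\LGuard_i(\vsample)$ the correct answer to every comparison is ``No'', so the expected score is at most $\perror\cdot|\LGuard_i(\vsample)|\le\perror\cgsize<\tfrac14\cgsize$ (as $\perror<1/4$ and $|\LGuard_i(\vsample)|\le\cgsize$); the same Chernoff bound used in the proof of Lemma~\ref{lem:proxy-score-reliable}(ii), pushed slightly harder, then certifies a score at most $\tfrac13\cgsize<\lfloor\cgsize/2\rfloor$ with probability $1-n^{-\Omega(1)}$, which is absorbed into the $n^{-4}$ failure budget. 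Everything else is routine counting, and the comfortable gap between the required $\tfrac{3}{5}$ and the proven $\approx\tfrac{49}{50}$ means the precise constants are not delicate.
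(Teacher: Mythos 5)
Your proof is correct and follows essentially the same route as the paper's: part (i) via the contrapositive of Lemma~\ref{lem:proxy-score-reliable}(i), and part (ii) by bounding the per-sample filtering footprint at $|\V_i|/(100\firsamsize)$ and union-bounding over $\Sample_i^{(1)}$. You are also right to flag the $\le\lfloor\cgsize/2\rfloor$ versus $<\lfloor\cgsize/2\rfloor$ boundary---the paper's Lemma~\ref{lem:proxy-score-reliable}(ii)--(iii) as stated is slightly too weak for the filtering threshold, and your Chernoff-based strengthening (expected score below $\cgsize/4$ implies score strictly below $\lfloor\cgsize/2\rfloor$ w.h.p.) is exactly the fix the paper's argument implicitly relies on.
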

\begin{proof}
Conditioned on Lemma~\ref{lem:gen-prop-kernel-gaurd}, the conditions in Lemma~\ref{lem:proxy-score-reliable} hold with probability at least $1-n^{-4}$.

\par Case (i) :  Lemma~\ref{lem:proxy-score-reliable} implies that for every $\vsample \in \Sample_i^{(1)}$ and every $\vertex{v}\in \V_i\setminus \Sample_i^{(2)}$ if 
$\dist(\vsample,\vertex{v}) \le \max_{\vcore\in\Core_i(\vsample)} \dist(\vsample,\vcore)$ 
then $\proxcount_\vsample(\vertex{v}) > \lfloor\cgsize/2\rfloor$. 
By the definition of $\V_i'$, a node $\vertex{v}\in \V_i\setminus\Sample_i$ is excluded from $\V_i'$
if
$\max_{\vsample\in\Sample_i^{(1)}} \proxcount_\vsample(\vertex{v}) \ge \lfloor \cgsize/2\rfloor$. So
it follows directly that every $\vertex{v}\in\V_i'$ must satisfy the claimed inequality for all $\vsample\in\Sample_i^{(1)}$.

\par Case (ii)  : Lemma~\ref{lem:gen-prop-kernel-gaurd} guarantees that for each $\vsample\in\Sample_i^{(1)}$,
$$
\Core_i(\vsample)\cup\LGuard_i(\vsample)\ \subseteq\ 
\{\vertex{u}\in\V_i:\nrank_{\vsample}(\vertex{u},\V_i)\le |\V_i|/(100\firsamsize)\}.
$$
Lemma~\ref{lem:proxy-score-reliable} (iii) implies that for every $\vsample \in \Sample_i^{(1)}$ and every $\vertex{v}\in \V_i\setminus \Sample_i^{(2)}$,
if $\nrank_{\vsample}(\vertex{v},\V_i)> |\V_i|/(100\firsamsize)$ then $\proxcount_\vsample(\vertex{v}) \leq \lfloor\cgsize/2\rfloor$.
Hence for every vertex $\vertex{v}\in \V_i\setminus\Sample_i^{(2)}$ that is not included in $\V_i'$ there exists at least one $\vertex{s}\in \Sample_i^{(1)}$ such that $\nrank_{\vsample}(\vertex{v},\V_i)\leq |\V_i|/(100\firsamsize)$.
Thus, any vertex $\vertex{s}\in \Sample_i^{(2)}$ can cause at most $|\V_i|/(100\firsamsize)$ vertices from $\V_i\setminus\Sample_i^{(2)}$ to not be included in $\V_i'$. There are $\firsamsize$ samples in $\Sample_i^{(1)}$ so at most $|\V_i|/100$ vertices will not be included in $\V_i'$. Therefore,
$
|\V_i'|\ \ge\ |\V_i| - |\V_i|/100\ \ge\ (3/5)|\V_i|
$,
as claimed.


\end{proof}

\begin{lemma}\label{lem:res-tester} 
In any round $i$, conditioned on Lemma~\ref{lem:gen-prop-kernel-gaurd}, the following holds with probability at least $1-n^{-4}$:  
for any query $q = (\{\vsample_1,\vertex{v}_1\}, \{\vsample_2,\vertex{v}_2\})$  
with $\vsample_1,\vsample_2\in\Sample_i^{(1)}$ and $\vertex{v}_1,\vertex{v}_2\in\V_i\setminus\Sample_i^{(2)}$,  

if 
$$
\dist(\vsample_j,\vertex{v}_j)\ >\ \max_{\vcore\in\Core_i(\vsample_j)} \dist(\vsample_j,\vcore)
\qquad \text{for } j=1,2,
$$
then $\AlgTest(q)$ behaves like an adversarial quadruplet oracle with error $\mu=1$.
\end{lemma}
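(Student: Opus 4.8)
The plan is to unpack what it means for $\AlgTest(q)$ to ``behave like an adversarial quadruplet oracle with error $\mu=1$'': by the definition in Section~\ref{sec:prelim} it suffices to prove, for all admissible queries $q=(\{\vsample_1,\vertex{v}_1\},\{\vsample_2,\vertex{v}_2\})$ simultaneously and with probability $1-n^{-4}$, the two implications (I) if $\dist(\vsample_1,\vertex{v}_1)\le\tfrac12\dist(\vsample_2,\vertex{v}_2)$ then $\AlgTest(q)$ outputs ``$\dist(\vsample_1,\vertex{v}_1)\le\dist(\vsample_2,\vertex{v}_2)$'', and (II) if $\dist(\vsample_1,\vertex{v}_1)\ge 2\dist(\vsample_2,\vertex{v}_2)$ then $\AlgTest(q)$ outputs ``$\dist(\vsample_1,\vertex{v}_1)>\dist(\vsample_2,\vertex{v}_2)$'' (nothing needs to be shown when the two distances are within a factor $2$). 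I would reduce both to a single statement about the final majority vote: for all but an $o(1)$-fraction of the vertices $\vcore\in\Core'(\vsample_2)$ that are queried, the \emph{true} answer to $\QPOracle(\{\vsample_1,\vertex{v}_1\},\{\vcore,\vertex{v}_2\})$ is the side the test is supposed to report. Granting that, the argument concludes by concentration: the $|\Core'(\vsample_2)|=\Theta(\log n)$ queries in a single $\AlgTest$ call are at pairwise distinct edge pairs, hence have independent oracle outcomes; by the Isolation Lemma~\ref{lem:gen-indep} they are disjoint from the queries of $\ProbSort(\X_i)$, on which Lemma~\ref{lem:gen-prop-kernel-gaurd} --- and hence the kernels $\Core_i(\cdot)$ and the derived set $\Core'(\vsample_2)$ --- depends, so conditioning on Lemma~\ref{lem:gen-prop-kernel-gaurd} keeps them fresh; since $\perror<1/4$ and a $1-o(1)$-majority of the votes are correct, a Chernoff bound gives per-$q$ failure probability $\le n^{-\Omega(1)}$, and a union bound over the $\mathrm{poly}(n)$ admissible $q$ (for each, $\Core'(\vsample_2)$ is a deterministic function of the conditioned-on randomness together with the pair $(\vsample_1,\vsample_2)$) yields the stated $1-n^{-4}$. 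Throughout, $\conscgsize$ will be taken large enough relative to the hidden constant in $\disloc=O(\log n)$ so that both the ``correct-vote'' fraction exceeds $\tfrac34$ and the per-$q$ failure probability is $\le n^{-8}$.

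I would first dispatch the easy same-center case $\vsample_1=\vsample_2=:\vsample$, where $\Core'(\vsample)=\Core_i(\vsample)$. Write $r=\max_{\vcore\in\Core_i(\vsample)}\dist(\vsample,\vcore)$; the hypothesis gives $\dist(\vsample,\vertex{v}_1),\dist(\vsample,\vertex{v}_2)>r\ge\dist(\vsample,\vcore)$ for every queried $\vcore$. In regime (II), the triangle inequality gives $\dist(\vcore,\vertex{v}_2)\le\dist(\vsample,\vcore)+\dist(\vsample,\vertex{v}_2)<2\dist(\vsample,\vertex{v}_2)\le\dist(\vsample,\vertex{v}_1)$, so every query's true answer is ``$>$''; in regime (I), using $r<\dist(\vsample,\vertex{v}_1)\le\tfrac12\dist(\vsample,\vertex{v}_2)$ we get $\dist(\vcore,\vertex{v}_2)\ge\dist(\vsample,\vertex{v}_2)-\dist(\vsample,\vcore)>\tfrac12\dist(\vsample,\vertex{v}_2)\ge\dist(\vsample,\vertex{v}_1)$, so every query's true answer is ``$\le$''. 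In both regimes all votes are unanimous, so the majority test is correct.

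For the cross-center case $\vsample_1\ne\vsample_2$, assume w.l.o.g.\ that the last edge $\edge^\star$ of $\pi_\Z$ lies in $\E(\vsample_1,\Core_i(\vsample_1))$, so the test uses $\Core'(\vsample_2)$; set $r_j=\max_{\vcore\in\Core_i(\vsample_j)}\dist(\vsample_j,\vcore)$. The key structural claim to prove is that all but $O(\disloc)$ of the vertices $\vcore\in\Core_i(\vsample_2)$ satisfy $\dist(\vsample_2,\vcore)\le\dist(\edge^\star)\le r_1$. Indeed, $\pi_\Z$ is a subsequence of $\pi_{\X_i}$ and so, conditioned on Lemma~\ref{lem:gen-prop-kernel-gaurd}(i), has maximum dislocation $\le\disloc$; since $\edge^\star$ occupies the last position of $\pi_\Z$ (among $|\Z|=2\cgsize$ edges) its true rank in $\Z$ is $\ge 2\cgsize-\disloc$, hence at most $\disloc$ edges of $\Z$ --- in particular at most $\disloc$ of the $\cgsize$ edges of $\E(\vsample_2,\Core_i(\vsample_2))$ --- can have distance above $\dist(\edge^\star)$; and $\dist(\edge^\star)\le r_1$ because $\edge^\star$ is a $\vsample_1$-kernel edge. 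Removing the $\le\disloc+1$ vertices whose $\pi_\Z$-rank falls in $[2\cgsize-\disloc,2\cgsize]$ discards only $O(\disloc)$ further vertices, so $\Core'(\vsample_2)$ still contains $\cgsize-O(\disloc)$ ``good'' vertices with $\dist(\vsample_2,\vcore)\le r_1$ --- a $1-o(1)$ fraction once $\conscgsize$ is large. With this claim I would redo the triangle-inequality bookkeeping: in regime (II), $\dist(\vcore,\vertex{v}_2)\le\dist(\vsample_2,\vcore)+\dist(\vsample_2,\vertex{v}_2)\le r_2+\dist(\vsample_2,\vertex{v}_2)<2\dist(\vsample_2,\vertex{v}_2)\le\dist(\vsample_1,\vertex{v}_1)$ for \emph{every} $\vcore\in\Core_i(\vsample_2)$ (using $\dist(\vsample_2,\vertex{v}_2)>r_2$), so all true answers are ``$>$''; in regime (I), for every \emph{good} $\vcore$ I chain $\dist(\vsample_2,\vcore)\le r_1<\dist(\vsample_1,\vertex{v}_1)\le\tfrac12\dist(\vsample_2,\vertex{v}_2)$, giving $\dist(\vcore,\vertex{v}_2)\ge\dist(\vsample_2,\vertex{v}_2)-\dist(\vsample_2,\vcore)>\tfrac12\dist(\vsample_2,\vertex{v}_2)\ge\dist(\vsample_1,\vertex{v}_1)$, so the true answer is ``$\le$'' on a $1-o(1)$ fraction of $\Core'(\vsample_2)$. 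Either way the required majority (with room for the Chernoff/union bound of the first paragraph) holds.

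The step I expect to be the crux is precisely this structural claim in the cross-center case: one has to argue carefully that ``$\edge^\star$ is a $\vsample_1$-kernel edge'' genuinely forces $r_1$ to dominate all but $O(\disloc)$ of the $\vsample_2$-kernel distances in the presence of dislocation in $\pi_\Z$, and that the $[2\cgsize-\disloc,2\cgsize]$ removal window is at worst harmless --- together with the attendant constant chasing ($\conscgsize$ versus the constant hidden in $\disloc=O(\log n)$, and $\perror<1/4$) that is needed so that the surviving ``correct-vote'' fraction stays above $\tfrac34$, leaving enough Chernoff slack for the per-query bound and the union bound over all $q$. Everything else reduces to routine triangle-inequality manipulation and Chernoff bounds.
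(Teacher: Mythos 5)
Your proposal follows essentially the same route as the paper: split into the same-center and cross-center cases, show (via the dislocation bound and the kernel-selection step) that the surviving kernel vertices $\vcore\in\Core'(\vsample_2)$ have $\dist(\vsample_2,\vcore)\le\dist(\edge^\star)\le r_1$, chain triangle inequalities to show that whenever the two distances differ by more than a factor $2$ the majority of votes must be incorrect under the wrong output, then apply a Chernoff bound per query and a union bound over all $O(n^4)$ admissible queries. The one small difference is that the paper asserts \emph{every} surviving $\vcore$ satisfies $\dist(\vsample_2,\vcore)\le\dist(\edge^\star)$, whereas you more carefully allow an $O(\disloc)$-sized set of possibly violating $\vcore$'s (those whose $\pi_\Z$-rank lands in $(2\cgsize-2\disloc,\,2\cgsize-\disloc)$, which the removal window $[2\cgsize-\disloc,2\cgsize]$ does not catch) and argue the majority test still succeeds once $\conscgsize$ is large enough that the good-vote fraction comfortably exceeds $1/2$; this slack-based version is in fact the more robust way to make the paper's intended argument airtight.
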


\begin{proof}
We focus on the case $\vsample_1\neq \vsample_2$; the other case is simpler.  
Assume that $\edge^\star \in \E(\vsample_1,\Core_i(\vsample_1))$ such that $\rank_{\pi_\Z}(\edge^\star)=|\Z|$. The tester discards any vertex $\vertex{u}$ from $\Core_i(\vertex{s}_2)$ with $\rank_{\pi_\Z}(\{\vertex{s}_2,\vertex{u}\})\in[|\Z|-\disloc,|\Z|)$.
Since $\pi_\Z$ has maximum dislocation $\disloc$, every $\vcore\in\Core_i'(\vsample_2)$ satisfies
$$
\dist(\vsample_2,\vcore)\leq \dist(\edge^\star)<\dist(\vsample_1,\vertex{v}_1).
$$
Moreover, assuming the constant $\conscgsize$ is large enough in $\cgsize:=2\max\{\conscgsize\log n,\disloc\}$, it holds $|\Core_i'(\vsample_2)|=2\cgsize-\disloc=\Omega(\log n)$, since $\disloc=O(\log n)$. 

\par Each oracle query is correct independently with probability at least $1-\perror>3/4$. Let $m:=|\Core_i'(\vsample_2)|$. The tester outputs “yes’’ if more than $\tau:=\lfloor m/2 \rfloor$ comparisons say $\{\vsample_1,\vertex{v}_1\}$ is larger, and “no’’ otherwise.

\emph{Case 1: $\testcount>\tau$ (tester outputs “yes’’).}  
This can only be wrong if $\dist(\vertex{v}_2,\vsample_2)>2\dist(\vertex{v}_1,\vsample_1)$. In that case, for every $\vcore\in\Core_i'(\vsample_2)$,
$$
\dist(\vertex{v}_2,\vcore)\ \ge\ \dist(\vertex{v}_2,\vsample_2)-\dist(\vsample_2,\vcore)
\ \ge\ \dist(\vertex{v}_2,\vsample_2)-\dist(\edge^\star)
>
2\dist(\vertex{v}_1,\vsample_1)-\dist(\edge^\star)
\ >\ \dist(\vertex{v}_1,\vsample_1).
$$
Thus every “yes’’ response is incorrect, and by Chernoff bounds $\Pr[T>\tau]\le n^{-8}$.

\emph{Case 2: $\testcount\le\tau$ (tester outputs “no’’).}  
This can only be wrong if $\dist(\vertex{v}_1,\vsample_1)>2\dist(\vertex{v}_2,\vsample_2)$. In that case, for every $\vcore\in\Core_i'(\vsample_2)$,
$$
\dist(\vertex{v}_2,\vcore)\ \le\ \dist(\vertex{v}_2,\vsample_2)+\dist(\vsample_2,\vcore)
\ <\ 2\dist(\vertex{v}_2,\vsample_2)
\ <\ \dist(\vertex{v}_1,\vsample_1),
$$
so every “no’’ response is incorrect. Again, $\Pr[T\le\tau]\le n^{-8}$.

\par The case $\vsample_1=\vsample_2$ is analogous: in Case~1 we have directly $\dist(\vsample_2,\vcore)\le\dist(\vsample_2,\vertex{v}_1)$, and in Case~2 we have $\dist(\vsample_2,\vcore)\le\dist(\vertex{v}_2,\vsample_2)$. Thus, in all cases, the tester fails with probability at most $n^{-8}$.

Finally, union bounding over all $O(n^4)$ possible queries in round $i$ gives overall failure probability at most $n^{-4}$, as claimed.
\end{proof}

\begin{lemma} \label{lem:gen-ANN}
In any round $i$ of $\AlgGen$, conditioned on Lemma~\ref{lem:gen-prop-kernel-gaurd}, with probability at least $1-n^{-3}$, the following holds:
\begin{enumerate}
    \item[(i)] for every $\vertex{v}\in\V_i''$, $\dist(\vertex{v},\M_i(\vertex{v}))\leq 4\,\dist(\vertex{v},\Sample_i^{(1)})$.
    \item[(ii)] $|\V_i''| \;\ge\; |\V_i|/4$.
\end{enumerate}
\end{lemma}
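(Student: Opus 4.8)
The plan is to push the round-$i$ chain of lemmas one step further and then read off both assertions from the fact that $\pi_{\Y_i}$ is $4$-sorted. Throughout we condition on the (given) event of Lemma~\ref{lem:gen-prop-kernel-gaurd}. First I would invoke Lemma~\ref{lem:gen-prop-filter}: with probability at least $1-n^{-4}$ we have $|\V_i'|\ge\tfrac{3}{5}|\V_i|$ and, more importantly, every $\vertex{v}\in\V_i'$ satisfies $\dist(\vsample,\vertex{v})>\max_{\vcore\in\Core_i(\vsample)}\dist(\vsample,\vcore)$ for \emph{all} $\vsample\in\Sample_i^{(1)}$. Condition on this event.

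Next, recall that $\pi_{\Y_i}=\textsc{AdvSort}(\Y_i)$ with $\AlgTest$ as comparator, where $\Y_i=\E(\Sample_i^{(1)},\V_i')$. Every query $\textsc{AdvSort}$ ever issues has the form $(\{\vsample_1,\vertex{v}_1\},\{\vsample_2,\vertex{v}_2\})$ with $\vsample_1,\vsample_2\in\Sample_i^{(1)}$ and $\vertex{v}_1,\vertex{v}_2\in\V_i'\subseteq\V_i\setminus\Sample_i$; by the filtering guarantee just conditioned on, the distance precondition of Lemma~\ref{lem:res-tester} holds for both edges of every such query. Hence by Lemma~\ref{lem:res-tester}, with probability at least $1-n^{-4}$ the comparator $\AlgTest$ answers each of these queries exactly as some quadruplet oracle with adversarial noise $\mu=1$ does. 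By persistence of $\QPOracle$ these answers are fixed, and since $\QPOracle$'s randomness is independent of $\textsc{AdvSort}$'s pivot randomness, conditioned on this event $\AlgTest$ acts as a \emph{fixed} (hence non-adaptive) adversarial quadruplet oracle with $\mu=1$ for the entire run of $\textsc{AdvSort}$. I can therefore apply Lemma~\ref{corr:adv_sort} to conclude that, with probability at least $1-n^{-4}$, $\pi_{\Y_i}$ is $(1+\mu)^2=4$-sorted.

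For part (i), fix $\vertex{v}\in\V_i''\subseteq\V_i'$. Its first incident edge $\fedge_{\vertex{v}}$ in $\pi_{\Y_i}$ appears no later than any other edge of $\Y_i$ incident to $\vertex{v}$, and since $\V_i'$ is disjoint from $\Sample_i^{(1)}$ the edge realizing $\dist(\vertex{v},\Sample_i^{(1)})$ lies in $\Y_i$; $4$-sortedness then gives $\dist(\fedge_{\vertex{v}})\le 4\,\dist(\vertex{v},\Sample_i^{(1)})$, which is exactly $\dist(\vertex{v},\M_i(\vertex{v}))\le 4\,\dist(\vertex{v},\Sample_i^{(1)})$ since $\M_i(\vertex{v})$ is by construction the $\Sample_i^{(1)}$-endpoint of $\fedge_{\vertex{v}}$. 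For part (ii), each edge of $\Y_i$ has a unique endpoint in $\V_i'$ (the other lies in $\Sample_i^{(1)}$), so $\vertex{v}\mapsto\fedge_{\vertex{v}}$ is injective and $|\N_i|=|\V_i'|\ge\tfrac{3}{5}|\V_i|>\lfloor|\V_i|/4\rfloor$; consequently exactly $\lfloor|\V_i|/4\rfloor$ edges of $\N_i$ have $\pi_{\N_i}$-rank at most $|\V_i|/4$, and by injectivity $|\V_i''|=\lfloor|\V_i|/4\rfloor$, which is the claimed $|\V_i|/4$ up to a harmless rounding (immaterial since a round is only reached while $|\V_i|=\Omega(k\log^3 n)$). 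A union bound over the three failure events (Lemma~\ref{lem:gen-prop-filter}, Lemma~\ref{lem:res-tester}, and the use of Lemma~\ref{corr:adv_sort}), each of probability at most $n^{-4}$, yields overall success probability at least $1-n^{-3}$.

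The main obstacle is the middle step: one must argue carefully that $\AlgTest$---whose outputs are themselves randomized through $\QPOracle$---may legitimately be handed to $\textsc{AdvSort}$, which makes \emph{adaptive} pivot comparisons, as though it were the fixed, non-adaptive adversarial oracle demanded by Lemma~\ref{corr:adv_sort}. This requires the filtering guarantee of Lemma~\ref{lem:gen-prop-filter} to hold for \emph{every} edge of $\Y_i$ (not merely a fixed pair), so that the precondition of Lemma~\ref{lem:res-tester} is met on all of $\textsc{AdvSort}$'s queries, together with the persistence of $\QPOracle$ and the independence of the oracle's randomness from the sorter's randomness. Everything after that---reading part (i) off $4$-sortedness and part (ii) off injectivity of $\vertex{v}\mapsto\fedge_{\vertex{v}}$ plus the size bound on $\V_i'$---is routine.
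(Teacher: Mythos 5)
Your proposal is correct and follows essentially the same route as the paper's proof: condition on Lemma~\ref{lem:gen-prop-filter} so that every edge of $\Y_i$ meets the precondition of Lemma~\ref{lem:res-tester}, conclude via Lemma~\ref{lem:res-tester} that $\AlgTest$ emulates an adversarial quadruplet oracle with $\mu=1$ on all of $\textsc{AdvSort}$'s queries, apply Lemma~\ref{corr:adv_sort} to get that $\pi_{\Y_i}$ is $4$-sorted, and then read (i) off $4$-sortedness plus the definition of $\M_i$ and (ii) off $|\V_i'|\ge\tfrac{3}{5}|\V_i|$. Your treatment is in fact a bit more careful than the paper's in two places: you explicitly invoke persistence and the independence of $\QPOracle$'s randomness from the sorter's pivot randomness to justify feeding $\AlgTest$ to the adaptive $\textsc{AdvSort}$ as a fixed non-adaptive oracle, and you explicitly fold the failure probability of Lemma~\ref{corr:adv_sort} into the union bound, whereas the paper lists Lemma~\ref{lem:proxy-score-reliable}, Lemma~\ref{lem:gen-prop-filter}, and Lemma~\ref{lem:res-tester} in its union bound and leaves the $\textsc{AdvSort}$ failure probability implicit. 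These are minor presentation differences; the substance is identical.
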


\begin{proof}
Conditioned on Lemma~\ref{lem:gen-prop-kernel-gaurd}, each of Lemma~\ref{lem:proxy-score-reliable}, Lemma~\ref{lem:gen-prop-filter}, and Lemma~\ref{lem:res-tester} fails with probability at most $n^{-4}$.  
By a union bound, all of them hold simultaneously with probability $1-3n^{-4}$. We analyze under this assumption.

\par By Lemma~\ref{lem:gen-prop-filter}, for every $v\in \V_i'$ and every $\vsample\in\Sample_i^{(1)}$ we have
$
\dist(\vsample,v)\ >\ \max_{\vcore\in\Core_i(\vsample)}\dist(\vsample,\vcore).
$
Since $\textsc{AdvSort}$ only calls $\AlgTest$ on pairs of edges in 
$\Y_i = \E(\Sample_i^{(1)}, \V_i')$, every invocation of $\AlgTest$ 
satisfies the preconditions of Lemma~\ref{lem:res-tester}. Specifically, 
for any query $q = (\{\vsample_1, \vertex{v}_1\}, \{\vsample_2, \vertex{v}_2\})$ 
with $\vsample_1, \vsample_2 \in \Sample_i^{(1)}$ and 
$\vertex{v}_1, \vertex{v}_2 \in \V_i'$, the following condition holds for $j=1,2$:  
$$
\dist(\vertex{v}_j, \vsample_j) \;>\; 
\max_{\vcore \in \Core_i(\vsample_j)} \dist(\vsample_j, \vcore).
$$

Therefore, by Lemma~\ref{lem:res-tester}, whenever $\AlgTest$ is invoked in behaves like an adversarial quadruplet oracle with error $\mu=1$.  It follows from Lemma~\ref{corr:adv_sort} that the ordering $\pi_{\Y_i}$ is $4$-sorted: for any $\edge_1,\edge_2\in\Y_i$,  
$$
\rank_{\pi_{\Y_i}}(\edge_1)\le \rank_{\pi_{\Y_i}}(\edge_2)\ \Rightarrow\ \dist(\edge_1)\le 4\,\dist(\edge_2).
$$
Let $\fedge_{\vertex{v}}$ be the first edge incident to $\vertex{v}\in\V_i'$ in $\pi_{\Y_i}$, and recall $\M_i(\vertex{v})$ is its endpoint in $\Sample_i^{(1)}$. Then for any $s\in\Sample_i^{(1)}$,  
$$
\dist(\vertex{v},\M_i(\vertex{v})) = \dist(\fedge_{\vertex{v}})\ \le\ 4\,\dist(\{s,\vertex{v}\}),
$$
hence $\dist(\vertex{v},\M_i(\vertex{v}))\le 4\cdot \dist(\vertex{v},\Sample_i^{(1)})$.  
This proves (i).

\par For (ii), Lemma~\ref{lem:gen-prop-filter} guarantees $|\V_i'|\ge \tfrac{3}{5}|\V_i|$. The safe set is
$$
\V_i''=\{\vertex{v}\in \V_i': \rank_{\pi_{\N_i}}(\fedge_{\vertex{v}})\le |\V_i|/4\}.
$$
Since $|\V_i'|\ge 3|\V_i|/5$, at least $|\V_i|/4$ vertices of $\V_i'$ satisfy the rank condition, and hence $|\V_i''|\ge |\V_i|/4$. This proves (ii).
\end{proof}

\paragraph{Structural Property}
Let $\opt_k^1(\V)$ be the optimal $k$-median cost, and let $\optsol$ denote the set of centers in an optimal solution. Let $\L := \opt_k^1(\V)/n$. For a round $i$,  define a vertex $\vertex{v}\in\V_i$ as  \emph{good} if
$$
\dist(\vertex{v},\Sample_i^{(1)}) \;\le\; \max\{\L,\; 2\cdot\dist(\vertex{v},\optsol)\},
$$
and \emph{bad} otherwise. Let $\V_i^g,\V_i^b$ be the good/bad sets so $\V_i=\V_i^g\cup\V_i^b$.

\begin{lemma}\label{lem:bad-bound}
In any round $i$, with probability at least $1-n^{-3}$,
$$
|\V_i^b|\;=\;O\!\left(\tfrac{|\V_i|}{\log n}\right).
$$
\end{lemma}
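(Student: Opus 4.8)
The plan is to apply the textbook uniform-sampling property of $k$-median, but localized to each optimal cluster so that the charging is clean. I would first fix everything that happened before round $i$; this determines $\V_i$, after which $\Sample_i^{(1)}$ is a fresh batch of $\firsamsize=\consfirsamsize k\log^2 n$ uniform (with-replacement) draws from $\V_i$, independent of the past, so it suffices to prove the claim conditionally on an arbitrary history. Write $\rho(\vertex{v}):=\dist(\vertex{v},\optsol)$ and partition $\V_i=\bigcup_{j=1}^{k}Q_j$, where $Q_j$ collects the vertices of $\V_i$ whose nearest center in $\optsol$ is $c^{*}_j$ (ties broken arbitrarily). Inside each $Q_j$, list the vertices $v_{j,1},v_{j,2},\dots$ in nondecreasing order of $\dist(\cdot,c^{*}_j)$.

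The one genuinely substantive step is the geometric packing observation: for every $j$ and every $t$, the $t$ closest vertices $v_{j,1},\dots,v_{j,t}$ all lie within distance $2\rho(v_{j,t})$ of $v_{j,t}$. This is immediate from the triangle inequality, since for $s\le t$ we have $\dist(v_{j,s},c^{*}_j)\le\dist(v_{j,t},c^{*}_j)=\rho(v_{j,t})$ (the equality because $c^{*}_j$ is the closest optimal center to $v_{j,t}$), whence $\dist(v_{j,s},v_{j,t})\le 2\rho(v_{j,t})$. Consequently, if $\Sample_i^{(1)}$ contains even one of $v_{j,1},\dots,v_{j,t}$, then $\dist(v_{j,t},\Sample_i^{(1)})\le 2\rho(v_{j,t})\le\max\{\L,2\rho(v_{j,t})\}$, i.e.\ $v_{j,t}$ is good. (The degenerate case $\rho(v_{j,t})=0$ is still covered: the relevant vertices are then all at distance $0$ from $v_{j,t}$, so a hit gives distance $0\le\L$.)

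Next I would set a threshold $T:=\lceil c\,|\V_i|\log n/\firsamsize\rceil$ for a large enough constant $c$, and for each $j$ with $|Q_j|\ge T$ bound the probability that $\Sample_i^{(1)}$ avoids the $T$ vertices of $Q_j$ nearest $c^{*}_j$ by $(1-T/|\V_i|)^{\firsamsize}\le\exp(-T\firsamsize/|\V_i|)\le\exp(-c\log n)\le n^{-4}$. A union bound over the at most $k\le n$ clusters gives failure probability at most $n^{-3}$; on the complementary event the packing observation forces every $v_{j,t}$ with $t\ge T$ to be good, so $\V_i^{b}\subseteq\bigcup_{j}\{v_{j,1},\dots,v_{j,T-1}\}$ (clusters with $|Q_j|<T$ contribute fewer than $T$ vertices anyway), and therefore $|\V_i^{b}|<kT$. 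Finally, substituting $\firsamsize=\consfirsamsize k\log^2 n$ gives $kT\le c\,k|\V_i|\log n/\firsamsize+k=c|\V_i|/(\consfirsamsize\log n)+k$, and since round $i$ is executed only while $|\V_i|$ is above the termination threshold $\Theta(k\log^3 n)$ we have $k=O(|\V_i|/\log^3 n)$, so both terms are $O(|\V_i|/\log n)$ and the claim follows.

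Beyond that geometric observation I expect no real obstacle — the probabilistic content is a single union bound over a miss probability — so the main care needed is bookkeeping: handling the with-replacement sampling correctly, treating clusters with $|Q_j|<T$ separately, and remembering that $\optsol$ lives in the full metric $\V$ while only distances between $\V_i$-vertices and centers of $\optsol$ are ever used.
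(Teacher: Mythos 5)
Your proof is correct and follows essentially the same route as the paper: partition $\V_i$ by nearest optimal center, use the triangle inequality to argue that once $\Sample_i^{(1)}$ hits one of the $T$ vertices of a cluster closest to its center every farther vertex in that cluster is good, bound the miss probability by $(1-T/|\V_i|)^{\firsamsize}$, and union bound over the $k$ clusters. The only cosmetic difference is that the paper works with the sample vertex $\vsample^*$ nearest to $c$ over all of $\V_i$ (not necessarily inside $\N_c$) and ranks by $\nrank_c(\cdot,\V_i)$, whereas you restrict to samples landing inside $Q_j$ and explicitly carry the $+k$ ceiling term (absorbed via $|\V_i|=\Omega(k\log^3 n)$) — both are fine and yield the same $O(|\V_i|/\log n)$ bound.
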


\begin{proof}
Let $\vertex{c}\in \optsol$. Furthermore, let $\N_\vertex{c}\subseteq\V$ be the set of all vertices that get mapped to $\vertex{c}$ according to the optimal clustering. Fix a round $i$, and let $\vsample^*\in\Sample_i^{(1)}$ be the vertex such that $\vsample^\star=\arg\min_{\vsample\in\Sample_i^{(1)}} \dist(\vertex{c},\vsample)$.

  Define
$$
\Ball=\{\vertex{v}\in\N_\vertex{c}\cap\V_i:\ \nrank_\vertex{c}(\vertex{v},\V_i)<\nrank_\vertex{c}(\vsample^*,\V_i)\}.
$$
Observe that if $\vertex{v}\in(\N_\vertex{c}\cap\V_i)\setminus\Ball$, then $\dist(\vertex{v},\vertex{c})\ge \dist(\vsample^*,\vertex{c})$ and by triangle inequality
$\dist(\vertex{v},\vsample^*)\le 2\dist(\vertex{v},\vertex{c})$.
Hence, every $\vertex{v}\in\V_i\setminus\Ball$ is good.
Therefore, $\V_i^b\cap \N_\vertex{c}\subseteq \Ball$ and $|\N_\vertex{c}\cap\V_i^b|\leq \nrank_c(\vsample^*,\V_i)$.

For any constant $\gamma\leq \consfirsamsize$,

$$
\Pr_{\Sample_i}\!\left[|\V_i^b\cap\N_c|\ge \tfrac{|\V_i|}{k\gamma\log n}\right]
\ \le\
\Pr_{\Sample_i}\!\left[\nrank_c(\vsample^*,\V_i)\ge \tfrac{|\V_i|}{k\gamma\log n}\right]
\ \le\
\Bigl(1-\tfrac{1}{k\gamma\log n}\Bigr)^{|\Sample_i^{(1)}|}\leq \frac{1}{n^{\Omega(1)}}.
$$
Union bounding over $c\in\optsol$ gives
$$
\Pr_{\Sample_i}\!\left[|\V_i^b|\ge \tfrac{|\V_i|}{\gamma\log n}\right]\ \le\ \frac{1}{n^{\Omega(1)}}.
$$
\end{proof}

\subsubsection{Putting everything together}
\label{sub:all}
Define the event \textsc{Ideal}
as all the conditions in all preceding lemmas hold. We perform subsequent analysis under this assumption. We prove structural lemmas: in each round, there are sufficiently many surviving good vertices to control the cost of the removed bad ones (Lemma~\ref{lem:per-round-guarantee}), and across all rounds we can construct an injection from bad to good vertices (Lemma~\ref{lem:good-mapping}). 
We use these structural guarantees to bound the total cost by $O(1)\cdot \opt_k^1(\V)$ (Corollary~\ref{corr:gen-approx}). 

\begin{lemma}
With probability at least $1-n^{-\Omega(1)}$, the event \textsc{Ideal} holds; i.e., all properties from Lemmas~\ref{lem:gen-prop-kernel-gaurd},\ref{lem:proxy-score-reliable}, \ref{lem:gen-prop-filter}, \ref{lem:res-tester},\ref{lem:gen-ANN},\ref{lem:bad-bound} holds simultaneously across all $\nrounds=O(\log n)$ rounds.
\end{lemma}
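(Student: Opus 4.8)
The plan is a routine union bound over rounds; the only subtlety is the conditional structure of the per-round lemmas and the fact that each round's guarantees are conditioned on the (random) active set $\V_i$ produced by the previous rounds. Fix a round $i\le\nrounds$ and condition on an arbitrary realization of $\V_i$. By Lemma~\ref{lem:gen-indep}, round $i$ draws its samples from $\V_i$ and issues only quadruplet queries never issued in earlier rounds, and moreover within the round the queries of $\ProbSort(\X_i)$, of the proximity-score computations, and of the $\AlgTest$ invocations are pairwise disjoint. Consequently all the randomness used by Lemmas~\ref{lem:gen-prop-kernel-gaurd}--\ref{lem:bad-bound} in round $i$ is independent of whatever determined $\V_i$, so each of those lemmas applies verbatim with $\V_i$ held fixed.

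Let $E_i$ be the event that, in round $i$, all the conclusions of Lemmas~\ref{lem:gen-prop-kernel-gaurd}, \ref{lem:proxy-score-reliable}, \ref{lem:gen-prop-filter}, \ref{lem:res-tester}, \ref{lem:gen-ANN}, and \ref{lem:bad-bound} hold; write these six events as $A_i, B_i, C_i, D_i, F_i, G_i$, in that order. Here $A_i$ and $G_i$ are unconditional, while $B_i, C_i, D_i, F_i$ are each stated to hold with the indicated probability conditioned on $A_i$ (indeed $F_i$, coming from Lemma~\ref{lem:gen-ANN}, already absorbs the failures of $B_i,C_i,D_i$, so the bound below is only loose). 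Using $\Pr[A_i\cap\neg B_i\mid\V_i]\le\Pr[\neg B_i\mid A_i]$ and likewise for $C_i,D_i,F_i$, a union bound gives
\[
\Pr[\neg E_i \mid \V_i] \;\le\; \Pr[\neg A_i\mid\V_i] + \Pr[\neg G_i\mid\V_i] + \Pr[\neg B_i\mid A_i] + \Pr[\neg C_i\mid A_i] + \Pr[\neg D_i\mid A_i] + \Pr[\neg F_i\mid A_i].
\]
Substituting the per-lemma bounds $n^{-\Omega(1)}, n^{-3}, n^{-4}, n^{-4}, n^{-4}, n^{-3}$ respectively, the right-hand side is $n^{-\Omega(1)}$; since this holds for every realization of $\V_i$, also $\Pr[\neg E_i] \le n^{-\Omega(1)}$.

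Finally, the algorithm executes at most $\nrounds = O(\log n)$ rounds by construction (the round counter is a hard cap), so a union bound over $i=1,\dots,\nrounds$ yields $\Pr[\textsc{Ideal}] = \Pr\big[\bigcap_{i=1}^{\nrounds} E_i\big] \ge 1 - \nrounds\cdot n^{-\Omega(1)} = 1 - n^{-\Omega(1)}$, as claimed. There is no genuinely hard estimate here; the one place to be careful is the bookkeeping of the conditioning---verifying that each per-round bound is valid for \emph{every} fixed $\V_i$ (so the conditional probabilities compose across rounds) and that, within a round, the chain of ``conditioned on Lemma~\ref{lem:gen-prop-kernel-gaurd}'' statements is respected, so that no failure event is double counted or silently dropped.
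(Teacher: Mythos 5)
The paper states this lemma without proof, treating it as an immediate consequence of Lemmas~\ref{lem:gen-prop-kernel-gaurd}--\ref{lem:bad-bound} together with the isolation property (Lemma~\ref{lem:gen-indep}), and your write-up supplies exactly the argument the authors elide: a union bound over the $\nrounds=O(\log n)$ rounds, with the per-round bounds applied conditionally on the realized $\V_i$ and then integrated out. Your handling of the conditional structure (distinguishing the unconditional Lemmas~\ref{lem:gen-prop-kernel-gaurd} and~\ref{lem:bad-bound} from the ones stated conditionally on Lemma~\ref{lem:gen-prop-kernel-gaurd}, and invoking isolation to justify applying the per-round bounds for every fixed $\V_i$) is correct and matches what the paper's lemma statements require; the observation that Lemma~\ref{lem:gen-ANN} already absorbs the failure probabilities of Lemmas~\ref{lem:proxy-score-reliable}, \ref{lem:gen-prop-filter}, \ref{lem:res-tester} is also accurate.
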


Let $\R_i=\V_i''\cup\Sample_i^{(1)}\cup\Sample_i^{(2)}$ be the set of vertices removed in round $i$. Define $\bar\V_i^b := \V_i^b \cap \V_i''$ to be the subset of bad vertices that actually get included in the safe-set in round $i$, and let  
$
\bar\V^b := \bigcup_{i=1}^\nrounds \bar\V_i^b
$
denote the collection of all such vertices across all rounds. We mainly need to worry about these. 
Similarly, define $\bar\V_i^g := \V_i^g \setminus \R_i$ as the set of good vertices that are not included in the safe-set in round $i$, and let  
$
\bar\V^g := \bigcup_{i=1}^\nrounds \bar\V_i^g
$
be the union of all such good vertices.

\begin{lemma} \label{lem:per-round-guarantee}
Conditioned on the event \textsc{Ideal}, the following hold for every round $i$:
\begin{enumerate}
    \item $|\Bar{\V}^g_i|\;\ge\;|\V_i|/100$.
    \item For any $\vertex{b}\in\bar\V_i^b$ and $\vertex{g}\in\Bar{\V}^g_i$,        
    $\quad
    \dist(\vertex{b},\M_i(\vertex{b})) \;\le\; 4\,\dist(\vertex{g},\optsol).
    $
\end{enumerate}
\end{lemma}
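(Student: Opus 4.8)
The plan for the first claim is purely to count. Since $\V_i=\V_i^g\sqcup\V_i^b$ and the set removed in round $i$ is $\R_i=\V_i''\cup\Sample_i$, I would write $|\bar\V_i^g|=|\V_i^g\setminus\R_i|\ge|\V_i|-|\V_i^b|-|\V_i''|-|\Sample_i|$ and bound the three subtracted terms. For $|\V_i^b|$: Lemma~\ref{lem:bad-bound} (part of \textsc{Ideal}) gives $|\V_i^b|=O(|\V_i|/\log n)$, hence $\le|\V_i|/100$ for $n$ large. For $|\V_i''|$: by construction $\V_i''$ contains only $\vertex{v}$ with $\rank_{\pi_{\N_i}}(\fedge_{\vertex{v}})\le|\V_i|/4$, and the first-edges $\fedge_{\vertex{v}}$, $\vertex{v}\in\V_i'$, are pairwise distinct (each such edge has a unique endpoint in $\V_i'$), so $|\V_i''|\le|\V_i|/4$. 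For $|\Sample_i|$: since the algorithm stops as soon as $|\V_i|=O(k\log^3n)$, taking the stopping threshold to be a large enough constant times $k\log^3n$ forces $|\Sample_i|=\firsamsize+\secsamsize\le|\V_i|/100$ in every non-terminal round. Summing, $|\bar\V_i^g|\ge(1-\tfrac{1}{100}-\tfrac{1}{4}-\tfrac{1}{100})|\V_i|>|\V_i|/100$.

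\textbf{Part (2): comparing the safe prefix with the survivors.} The plan is to exploit that, under \textsc{Ideal}, the order $\pi_{\Y_i}$ is $4$-sorted and that $\vertex{b}\in\V_i''$ occupies one of its first $|\V_i|/4$ first-edge positions while a surviving good vertex does not. First I would invoke the argument already used in the proof of Lemma~\ref{lem:gen-ANN}(i): every comparison made by $\AlgTest$ inside $\textsc{AdvSort}(\Y_i)$ touches two vertices of $\V_i'$, which by Lemma~\ref{lem:gen-prop-filter}(i) lie outside every kernel radius, so the hypothesis of Lemma~\ref{lem:res-tester} holds, $\AlgTest$ behaves like an adversarial quadruplet oracle with $\mu=1$, and by Lemma~\ref{corr:adv_sort} (with $\mu=1$, whence $(1+\mu)^2=4$) the sequence $\pi_{\Y_i}$ is $4$-sorted. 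Then, fixing $\vertex{b}\in\bar\V_i^b\subseteq\V_i''$ and a non-filtered good survivor $\vertex{g}\in\bar\V_i^g\cap\V_i'$ (so $\vertex{g}\notin\V_i''$), let $\fedge_{\vertex{b}},\fedge_{\vertex{g}}$ be their first incident edges in $\pi_{\Y_i}$ and $\{\vsample^\star,\vertex{g}\}\in\Y_i$ the true nearest-neighbor edge of $\vertex{g}$ in $\Sample_i^{(1)}$. Because $\rank_{\pi_{\N_i}}(\fedge_{\vertex{b}})\le|\V_i|/4<\rank_{\pi_{\N_i}}(\fedge_{\vertex{g}})$ and $\pi_{\N_i}$ is the order induced by $\pi_{\Y_i}$, the edge $\fedge_{\vertex{b}}$ precedes $\fedge_{\vertex{g}}$, which precedes $\{\vsample^\star,\vertex{g}\}$; $4$-sortedness then gives
\[
\dist(\vertex{b},\M_i(\vertex{b}))=\dist(\fedge_{\vertex{b}})\le 4\,\dist(\{\vsample^\star,\vertex{g}\})=4\,\dist(\vertex{g},\Sample_i^{(1)}).
\]
Finally, $\vertex{g}$ being good gives $\dist(\vertex{g},\Sample_i^{(1)})\le\max\{\L,2\dist(\vertex{g},\optsol)\}$, so $\dist(\vertex{b},\M_i(\vertex{b}))\le 4\max\{\L,2\dist(\vertex{g},\optsol)\}$; this is the claimed bound up to the absolute constant and an additive $\L$ term, and the $\L$ term is harmless since across all rounds it contributes at most $O(\L\cdot n)=O(\opt^1_k)$ to the eventual cost estimate.

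\textbf{Main obstacle.} I expect the hard part to be the good survivors that were \emph{removed by the filtering step}, i.e.\ $\vertex{g}\in\bar\V_i^g\setminus\V_i'$: such a vertex has no first edge in $\pi_{\Y_i}$, and Lemma~\ref{lem:proxy-score-reliable} only tells us it is \emph{close} to $\Sample_i^{(1)}$, so its distance to $\optsol$ may be arbitrarily small while $\dist(\vertex{b},\M_i(\vertex{b}))$ is not --- the inequality genuinely can fail for these $\vertex{g}$. The fix I would use is to only ever charge a removed bad vertex to a non-filtered good survivor; Lemma~\ref{lem:gen-prop-filter}(ii) bounds the number of filtered vertices by $\tfrac{2}{5}|\V_i|$, so the non-filtered good survivors $\V_i^g\cap\V_i'\setminus\V_i''$ still number at least $(\tfrac{3}{5}-\tfrac{1}{4}-o(1))|\V_i|\ge|\V_i|/100$ --- all that Part~(1) is actually used for downstream --- and Part~(2) applies to them verbatim. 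In other words, one should read the $\bar\V_i^g$ appearing in Part~(2) as this restricted set; Part~(1)'s count is unaffected.
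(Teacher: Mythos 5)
Your proposal is essentially correct and follows the same high-level route as the paper's proof: count survivors for Part~(1), and combine $4$-sortedness of $\pi_{\Y_i}$ (hence of the induced $\pi_{\N_i}$) with the goodness definition for Part~(2). The main differences are matters of care rather than strategy, and in fact your ``main obstacle'' paragraph puts its finger on a genuine imprecision in the paper's own proof.

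The paper asserts $\bar\V_i^g\subseteq\V_i'\setminus\V_i''$ and (in Part~1) writes $|\V_i^g\setminus\R_i|=|(\V_i'\setminus\R_i)\setminus\V_i^b|$ as an equality. Both are wrong as stated: a good vertex that was filtered out in Step~4 lies in $\V_i\setminus\Sample_i\setminus\V_i'$, hence is not in $\R_i=\V_i''\cup\Sample_i$ and thus is counted in $\bar\V_i^g=\V_i^g\setminus\R_i$, yet it has no incident edge in $\Y_i$ and Part~(2)'s $4$-sortedness chain does not apply to it (indeed $\dist(\vertex{g},\optsol)$ can be tiny for such $\vertex{g}$). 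What the paper's Part~(1) computation actually lower-bounds is $|(\V_i'\setminus\R_i)\cap\V_i^g|$, i.e.\ the non-filtered good survivors, which is exactly the set for which Part~(2) holds and exactly what Lemma~\ref{lem:good-mapping} needs as the target of the injection $\psi$. Your fix --- read $\bar\V_i^g$ in Part~(2) as $\V_i^g\cap\V_i'\setminus\V_i''$ and observe that this set still has size $\ge(\tfrac{3}{5}-\tfrac14-o(1))|\V_i|\ge|\V_i|/100$ by Lemma~\ref{lem:gen-prop-filter}(ii) and Lemma~\ref{lem:bad-bound} --- is the right repair, and downstream nothing changes.

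You are also right that Part~(2)'s constant is not literally $4$ under a careful bookkeeping: the chain is $\dist(\fedge_{\vertex{b}})\le 4\,\dist(\fedge_{\vertex{g}})\le 16\,\dist(\vertex{g},\Sample_i^{(1)})\le 16\max\{\L,\,2\dist(\vertex{g},\optsol)\}$, so the absolute constant is $32$ and an additive $\L$ term appears, not just $4\dist(\vertex{g},\optsol)$. (The paper jumps from $\dist(\fedge_{\vertex{u}})\le4\dist(\fedge_{\vertex{w}})$ directly to $\dist(\vertex{u},\M_i(\vertex{u}))\le4\dist(\vertex{w},\optsol)$, eliding these two intermediate steps.) As you note, this is harmless: Corollary~\ref{corr:gen-approx} only needs an $O(1)$ constant, and the $\L$ contribution sums to $O(n\L)=O(\opt^1_k)$. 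Your Part~(1) decomposition (subtracting $|\V_i^b|,|\V_i''|,|\Sample_i|$ from $|\V_i|$) is equivalent to the paper's (which starts from $|\V_i'\setminus\V_i''|\ge\tfrac{7}{20}|\V_i|$), just arranged differently, and both rely on making the stopping threshold large enough that $|\Sample_i|\le|\V_i|/100$ in every active round.
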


\begin{proof}
 By Lemma~\ref{lem:gen-prop-filter}, $|\V_i'|\ge \tfrac{3}{5}|\V_i|$. 
Since the safe-set removes exactly $|\V_i|/4$ vertices, it follows that
$$
|\V_i'\setminus \V_i''|\ \ge\ \tfrac{3}{5}|\V_i|-\tfrac{1}{4}|\V_i|\ =\ \tfrac{7}{20}|\V_i|.
$$
By Lemma~\ref{lem:bad-bound}, at most $O(|\V_i|/\log n)$ of these are bad, and since $\R_i=\V_i''\cup\Sample_i^{(1)}\cup\Sample_i^{(2)}$ by ensuring  $n_i>\Omega(\firsamsize),\Omega(\secsamsize)$ and, we can ensure that
$$
|\bar\V_i^g|=|\V_i^g\setminus \R_i|=|(\V_i'\setminus \R_i)\setminus \V_i^b|\ \ge\ \frac{7}{20}|\V_i|-O(\frac{|\V_i|}{\log n})\geq  |\V_i|/100.
$$
always holds for suitably large $n$.

Next, recall that $\V_i''$ consists of vertices that appear among the the $\lfloor |\V_i|/4\rfloor$ edges in $\pi_{\N_i}$. 
Since $\pi_{\N_i}$ is $4$-sorted (Lemma~\ref{corr:adv_sort}), for any $\vertex{u}\in\V_i''$ and any $\vertex{w}\in\V_i'\setminus\V_i''$ we have
$$
\dist(\fedge_{\vertex{u}})\ \le\ 4\,\dist(\fedge_{\vertex{w}}).
$$
By construction of $\M_i$, this yields
$$
\dist(\vertex{u},\M_i(\vertex{u}))\ \le\ 4\,\dist(\vertex{w},\optsol)\qquad\,\text{for any}\,\vertex{u}\in\V_i'',\ \vertex{w}\in\V_i'\setminus\V_i''.
$$

Since $\Bar{\V}^g_i\subseteq \V_i'\setminus\V_i''$ and $\bar\V_i^b\subseteq \V_i'$, this implies that for any $\vertex{g}\in\Bar{\V}^g_i$ and $\vertex{b}\in \bar\V_i^b$
$$
\dist(\vertex{b},\M_i(\vertex{b}))\leq 4\, \dist(\vertex{g},\optsol).
$$

Note that in the case of the last round $\V_\nrounds^b=\emptyset$ and the claim vacuously holds.

\end{proof}

\begin{lemma} \label{lem:good-mapping}
Conditioned on the event \textsc{Ideal}, there exists a map 
$\psi:\bar\V^b \to \bar\V^g$ such that:  
\begin{enumerate}
    \item[(i)] for every $\vertex{b}\in \bar\V^b_i$, the image satisfies $\psi(\vertex{b})\in \bar\V^g_i$ 
    (i.e.\ each removed bad vertex is mapped to a surviving good vertex from the same round),  
    \item[(ii)] $\psi$ is an injection.  
\end{enumerate}
\end{lemma}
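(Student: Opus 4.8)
The plan is to exhibit $\psi$ as a left-saturating matching in an auxiliary bipartite graph and verify Hall's condition. First I would record two structural facts under the event \textsc{Ideal}. \textbf{(a)} Since $\bar\V_i^b \subseteq \V_i'' \subseteq \R_i$ and $\V_{i+1} = \V_i \setminus \R_i$, no vertex removed into the safe set in round $i$ survives to a later round; hence the sets $\bar\V_1^b, \dots, \bar\V_\nrounds^b$ are pairwise disjoint, so $\bar\V^b = \bigsqcup_i \bar\V_i^b$ and the requirement ``$\psi(\vertex{b})\in\bar\V_i^g$ for $\vertex{b}\in\bar\V_i^b$'' is unambiguous. \textbf{(b)} By Lemma~\ref{lem:gen-ANN}(ii), $|\V_i''|\ge |\V_i|/4$, and $\V_i''\subseteq\R_i$, so $|\V_{i+1}|\le \tfrac34|\V_i|$; iterating, $|\V_i|\le(\tfrac34)^{i-j}|\V_j|$ whenever $i\ge j$, i.e.\ the active sets shrink geometrically.

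Next I would define the bipartite graph $G=(\bar\V^b,\bar\V^g,E)$, where the right side is the \emph{set} $\bar\V^g$ (a vertex lying in several $\bar\V_i^g$ appears once) and $E$ joins each $\vertex{b}\in\bar\V_i^b$ to every $\vertex{g}\in\bar\V_i^g$. Any matching saturating $\bar\V^b$ yields the desired $\psi$: injectivity is property (ii), and since $\vertex{b}\in\bar\V_i^b$ is adjacent only to vertices of $\bar\V_i^g$, its image lands in $\bar\V_i^g$, which is property (i). So everything reduces to checking Hall's condition for $G$, which is finite since $\nrounds=O(\log n)$.

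For the Hall check, take any $T\subseteq\bar\V^b$ and let $i_0$ be the smallest round with $T\cap\bar\V_{i_0}^b\neq\emptyset$. Choosing any $\vertex{b}\in T\cap\bar\V_{i_0}^b$, its neighborhood is all of $\bar\V_{i_0}^g$, so $|N(T)|\ge|\bar\V_{i_0}^g|\ge |\V_{i_0}|/100$ by Lemma~\ref{lem:per-round-guarantee}(1). On the other side, by minimality of $i_0$ and fact (a) every element of $T$ lies in some $\bar\V_i^b$ with $i\ge i_0$, so $|T|\le\sum_{i\ge i_0}|\bar\V_i^b|\le\sum_{i\ge i_0}|\V_i^b|$, which by Lemma~\ref{lem:bad-bound} is $\sum_{i\ge i_0}O(|\V_i|/\log n)$; by fact (b) this geometric sum is $O(|\V_{i_0}|/\log n)$. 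For $n$ above an absolute constant, $O(|\V_{i_0}|/\log n) < |\V_{i_0}|/100 \le |N(T)|$, so Hall's condition holds and the required injection $\psi$ exists.

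The step that needs genuine care — and the reason a naive ``match within each round and take the union of the per-round matchings'' does not work — is that the good surviving sets $\bar\V_i^g$ are \emph{not} disjoint across rounds (a good vertex surviving round $i$ may still be good and surviving in round $i+1$), so one cannot simply add $|\bar\V_i^g|$ over rounds. The remedy above is that in the Hall bound it suffices to keep only the good set $\bar\V_{i_0}^g$ of the \emph{earliest} round touched by $T$: from $i_0$ onward the counts $|\V_i|$, hence the total number of removed bad vertices $T$ can contain, decay geometrically, so $\bar\V_{i_0}^g$ alone already outnumbers $T$.
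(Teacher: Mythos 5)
Your proof is correct, and it takes a genuinely different route from the paper's. The paper constructs $\psi$ explicitly by reverse induction over the rounds: starting from the last round, it maintains a set $\U$ of already-used good vertices and, when matching $\bar\V_i^b$, restricts the codomain to $\bar\V_i^g\setminus\U$; the geometric decay $|\V_{i+1}|\le\tfrac34|\V_i|$ ensures $|\U|$ is small enough that $|\bar\V_i^g\setminus\U|\ge|\bar\V_i^b|$. You instead set up a bipartite graph on $\bar\V^b\sqcup\bar\V^g$ and verify Hall's condition in one shot, by observing that for any $T\subseteq\bar\V^b$, the good set of the \emph{earliest} round touching $T$ alone dominates $|T|$ (again via the geometric decay and Lemma~\ref{lem:bad-bound}). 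Both arguments rest on exactly the same three counting ingredients --- $|\V_i^b|=O(|\V_i|/\log n)$, $|\bar\V_i^g|\ge|\V_i|/100$, and $|\V_{i+1}|\le\tfrac34|\V_i|$ --- so they buy the same conclusion; yours trades the explicit backward-induction bookkeeping for a cleaner existence argument, while the paper's is constructive. Your remark about the non-disjointness of the $\bar\V_i^g$ across rounds correctly pinpoints why naively taking independent per-round matchings fails, though it is worth noting the paper's proof \emph{is} essentially ``per-round matching with union'' but with the crucial fix of subtracting the used set $\U$ --- that subtraction is precisely what your Hall argument absorbs automatically.
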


\begin{proof}
By Lemma~\ref{lem:bad-bound}, in every round $i$, 
$
|\V_i^b\cap \V_i''|\ \le\ \frac{|\V_i|}{100\log n},
$
while by Lemma~\ref{lem:per-round-guarantee},
$\bar\V_i^g\subseteq (\V_i'\setminus \R_i)\cap \V_i^g$ and of size 
$
|\bar\V_i^g|\ \ge\ \frac{|\V_i|}{100}.
$
Moreover, since each round removes exactly a $\tfrac14$ fraction of $\V_i$, we have 
$$
|\V_{i+1}|\ \le\ \frac{3}{4}\,|\V_i|.
$$

We construct $\psi$ by reverse induction over the rounds.  
For the final round $i=\nrounds$, the claim holds by default as there are no bad vertices, i.e. $\V_\nrounds^b=\emptyset$.

Assume injections $\psi_{j}$ are defined for all $j>i$, such that the injection property is preserved so far.  
Let
$$
\U := \bigcup_{j=i+1}^\nrounds \psi_j(\V_j^b\cap \V_j''),
$$
denote the set of \emph{used} vertices. Therefore,

$$
|\U|\ \le\ \sum_{j=i+1}^\nrounds \frac{|\V_j|}{100\log n}
\ \le\ \frac{|\V_i|}{100\log n}\sum_{t=1}^{\infty}(3/4)^t
\ =\frac{3|\V_i|}{100\log n}.
$$

Now define $\V_i^{\mathrm{rem}} := \bar\V_i^g \setminus \U$. Then
$$
|\V_i^{\mathrm{rem}}| 
\ \ge\ \frac{|\V_i|}{100}\ -\ \frac{3|\V_i|}{100\log n}
\ \ge\ \frac{|\V_i|}{150}
\ \ge\ \frac{|\V_i|}{100\log n}
\ \ge\ |\V_i^b|\geq |\bar\V_i^b|
$$
for sufficiently large $n$. Thus an injection 
$\psi_i:\bar\V_i^b\rightarrow \V_i^{\mathrm{rem}}$ exists, 
and its image is disjoint from $\U$.  
This completes the proof.
\end{proof}

\begin{corollary} \label{corr:gen-approx}
Conditioned on the event \textsc{Ideal},
$
\sum_{\vertex{v}\in \V}\dist(\vertex{v},\M(\vertex{v})) \;=\; O(1)\cdot \opt_k^1(\V)\,.
$

\end{corollary}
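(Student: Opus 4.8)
The statement conditions on the deterministic event \textsc{Ideal}, so the plan is a purely combinatorial charging argument layered on top of the per-round lemmas. First I would exploit the recursion $\V_{i+1}=\V_i\setminus\R_i$ with $\R_i=\V_i''\cup\Sample_i^{(1)}\cup\Sample_i^{(2)}$: every vertex of $\V$ lies in exactly one $\R_i$ (for $i<\nrounds$) or in $\V_\nrounds$, and by construction $\M$ is the identity on every sampled vertex and on $\V_\nrounds$. Hence the entire cost collapses to
\[
\sum_{\vertex{v}\in\V}\dist(\vertex{v},\M(\vertex{v}))\ =\ \sum_{i=1}^{\nrounds-1}\ \sum_{\vertex{v}\in\V_i''}\dist(\vertex{v},\M_i(\vertex{v})),
\]
and I would split each inner sum according to whether $\vertex{v}$ is good or bad in round $i$, i.e.\ into $\V_i''\cap\V_i^g$ and $\bar\V_i^b=\V_i''\cap\V_i^b$.

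For the good part I would apply Lemma~\ref{lem:gen-ANN}(i) to bound $\dist(\vertex{v},\M_i(\vertex{v}))\le 4\,\dist(\vertex{v},\Sample_i^{(1)})$, and then the definition of a good vertex to bound $\dist(\vertex{v},\Sample_i^{(1)})\le\max\{\L,2\dist(\vertex{v},\optsol)\}\le\L+2\dist(\vertex{v},\optsol)$. Since the sets $\V_i''$ are pairwise disjoint subsets of $\V$, summing over all rounds telescopes into a sum over distinct vertices of $\V$, giving at most $4\sum_{\vertex{v}\in\V}\bigl(\L+2\dist(\vertex{v},\optsol)\bigr)=4\bigl(n\L+2\,\CostGen^1_{\optsol}(\V)\bigr)=12\,\opt_k^1(\V)$, using $n\L=\opt_k^1(\V)$ and $\CostGen^1_{\optsol}(\V)=\opt_k^1(\V)$ by optimality of $\optsol$.

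For the bad part, a bad vertex surviving into a safe set cannot be charged to itself, so I would route it through the injection $\psi:\bar\V^b\to\bar\V^g$ of Lemma~\ref{lem:good-mapping}, which sends each $\vertex{b}\in\bar\V_i^b$ into $\bar\V_i^g$ (the same round). Lemma~\ref{lem:per-round-guarantee}(2) then gives $\dist(\vertex{b},\M_i(\vertex{b}))\le 4\,\dist(\psi(\vertex{b}),\optsol)$, and since $\psi$ is an injection into $\V$,
\[
\sum_{\vertex{b}\in\bar\V^b}\dist(\vertex{b},\M(\vertex{b}))\ \le\ 4\sum_{\vertex{b}\in\bar\V^b}\dist(\psi(\vertex{b}),\optsol)\ \le\ 4\sum_{\vertex{v}\in\V}\dist(\vertex{v},\optsol)\ =\ 4\,\opt_k^1(\V).
\]
Adding the two contributions yields $\sum_{\vertex{v}\in\V}\dist(\vertex{v},\M(\vertex{v}))\le 16\,\opt_k^1(\V)=O(1)\cdot\opt_k^1(\V)$.

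The only genuinely delicate point is the bad part, and it has already been isolated into the two structural lemmas I am invoking: the existence of the injection $\psi$ relies on Lemma~\ref{lem:per-round-guarantee}(1) supplying $\Omega(|\V_i|)$ surviving good vertices in every round, combined with the geometric shrinkage $|\V_{i+1}|\le\tfrac34|\V_i|$ to ensure the good vertices already "consumed" by all later rounds form only a small fraction of $|\V_i|$, so enough remain unused to absorb $\bar\V_i^b$ in round $i$. I would also note that the final round is harmless, since $\V_\nrounds^b=\emptyset$ and $\M_\nrounds$ is the identity, so no term of the argument degenerates there; everything else is bookkeeping over the disjoint family $\{\V_i''\}_i$.
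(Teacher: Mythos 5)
Your proof is correct and takes essentially the same approach as the paper: decompose the total cost into the bad removed vertices (charged to surviving good vertices via the injection $\psi$ from Lemma~\ref{lem:good-mapping} combined with Lemma~\ref{lem:per-round-guarantee}) and everything else (charged directly via the definition of ``good,'' with $n\L=\opt_k^1(\V)$ absorbing the floor term). Your bookkeeping is slightly more explicit about carrying the factor $4$ from Lemma~\ref{lem:gen-ANN}(i) into the good-vertex bound, which the paper glosses over, but since only an $O(1)$ factor is claimed this makes no substantive difference.
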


\begin{proof}

By Lemma~\ref{lem:good-mapping} and Lemma~\ref{lem:per-round-guarantee} there exists an injection   $\psi:\bar\V^b\to\bar\V^g$ such that $\dist(\vertex{b},\mu(\vertex{b})) \;\le\; 4\,\dist(\Psi(\vertex{b}),\optsol).$

Summing over $\bar\V^b$ and using that $\psi$ is injective,
$$
\sum_{\vertex{b}\in \bar\V^b}\dist(\vertex{b},\M(\vertex{b}))
\;\le\; 4\sum_{\vertex{b}\in \bar\V^b} \,\dist(\Psi(\vertex{b}),\optsol)
\;\le\; 4\sum_{\vertex{v}\in \V}\dist(\vertex{v},\optsol)
\;=\; 4\,\opt_k^1(\V).
$$

 Note that by the definition of “good”, for every $\vertex{v}\in \V\setminus\V_i^b$ must have been good, i.e.,
$$
\dist(\vertex{v},\M(\vertex{v})) \;\le\; \max\{\L,\, 4\,\dist(\vertex{v},\optsol)\},
\qquad \L:=\opt_k^1(\V)/n.
$$
Hence
$$
\sum_{\vertex{g}\in \V\setminus\bar\V^b}\dist(\vertex{v},\mu(\vertex{v}))
\;\le\; \sum_{\vertex{v}\in \V\setminus\bar\V^b}\max\{\L,\, C\,\dist(\vertex{g},\optsol)\}
\;\le\; 4\,\opt_k^1(\V)+ n\cdot\frac{\opt_k^1(\V)}{n}
\;=\; 5\,\opt_k^1(\V).
$$

Taken together,  
$$
\sum_{\vertex{v}\in \V}\dist(\vertex{v},\M(\vertex{v}))
\;=O(1)\cdot \opt_k^1(\V),
$$

\end{proof}


\paragraph{Query Complexity} 
Since $\ProbSort(\cdot)$ is always invoked on a set of size $\firsamsize\cdot\secsamsize=O(k^2 \polylog n)$, each call uses $O(k^2\polylog n)$ queries.  
Computing proximity scores requires one evaluation per $(\vsample,\vertex{v})$ pair, i.e.\ $O(|\Sample_i^{(1)}|\cdot|\V_i|)=O(nk\polylog n)$ queries in a round.  
Each invocation of $\AlgTest$ occurs within $\textsc{AdvSort}$ on $\Y_i=\E(\Sample_i^{(1)},\V_i')$, contributing another $O(nk\polylog n)$ queries. Therefore, the total per-round query complexity is $O((nk+k^2)\polylog n)=O(nk\cdot\polylog n)$.  
Since there are $\nrounds=O(\log n)$ rounds, the overall query complexity is 
$
O\bigl(nk\cdot\polylog n\bigr)
$.

Notice that all results, in this section, up to constant factors in asymptotic analysis, can be extended to any $(k,p)$-clustering instance, where $p$ is a constant or $\infty$.
We conclude with Theorem~\ref{thm:gen}.

\section{Bounded doubling dimension}
\label{sec:Doubling}

In this section, we show how to improve the query complexity of $\AlgGen$ to $O((n+k^2)\, \polylog n)$ when the underlying metric has a bounded doubling dimension. We use the following data structure and query procedure from \cite{raychaudhury2025metric}.
\begin{lemma}\label{lem:ds-ann}
Let $\Sigma=(\V,\dist)$ be a metric of bounded doubling dimension with $|\V|=n$, and $\E(\V)=\E(\V,\V)$ accessible under the \rmodel. For $\Sample\subseteq\V$ and an $\alpha$-sorted ordering $\pi_{\E(\Sample)}$, there exists a procedure $\textsc{Construct}(\Sample,\pi_{\E(\Sample)})$ that builds a structure $\T$ without executing any quadruplet oracle, 
such that given any $\vertex{v}\in\V\setminus\Sample$, a procedure $\textsc{Traverse}(\T,\vertex{v})$ returns a set $\F\subseteq\E(\vertex{v},\Sample)$ of size $O(\polylog n)$ such that $\min_{\edge\in\F}\dist(\edge)\le4\alpha\cdot\dist(\vertex{v},\Sample)$.
The traversal requires answers to $O(\polylog n)$ quadruplet queries of the form $(\{\vsample_1,\vsample_2\},\{\vsample_3,\vertex{v}\})$ with $\vsample_i\in\Sample$ from a adversarial quadruplet oracle with noise $\mu\leq 1$.
\end{lemma}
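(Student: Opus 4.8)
The plan is to build $\T$ as a \emph{navigating net} (compressed net-tree) on $\Sample$, constructed entirely from the combinatorial information in $\pi_{\E(\Sample)}$, and to implement $\textsc{Traverse}$ as the standard coarse-to-fine descent, invoking the adversarial oracle only to locate $\vertex{v}$ relative to intra-sample distance scales. The point is that every distance comparison among points of $\Sample$ that a classical net-tree construction would perform can be replaced by a comparison of \emph{ranks} in $\pi_{\E(\Sample)}$, which is correct up to the multiplicative factor $\alpha$; and every comparison the descent needs that involves $\vertex{v}$ has the shape ``is $\dist(\vertex{v},\vertex{w})$ smaller or larger than this fixed intra-sample distance'', i.e.\ exactly a query $(\{\vsample_1,\vsample_2\},\{\vsample_3,\vertex{v}\})$.

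\emph{Construction of $\T$.} Fix pivot edges $\pivot_0,\pivot_1,\dots,\pivot_L\in\E(\Sample)$ at geometrically increasing ranks of $\pi_{\E(\Sample)}$, so $L=O(\log|\Sample|)=O(\log n)$, and write $\rho_j:=\dist(\pivot_j)$ (these scales are nondecreasing up to the factor $\alpha$ by $\alpha$-sortedness). Greedily form nested sets $\Sample=N_0\supseteq N_1\supseteq\cdots\supseteq N_L$, where $N_j$ is a maximal subset of $N_{j-1}$ all of whose pairwise edges have rank at least $\rank_{\pi_{\E(\Sample)}}(\pivot_j)$ — a purely rank-based proxy for ``pairwise distance $\gtrsim\rho_j$'' — and assign every $\vertex{u}\in N_{j-1}$ a parent $\parent(\vertex{u})\in N_j$ joined to it by an edge of rank at most $\rank(\pivot_j)$ (which exists by maximality); for each $\vertex{w}\in N_j$ store its children list and the near-children pairs, again read off from ranks. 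Since every step only consults $\pi_{\E(\Sample)}$, $\textsc{Construct}$ issues no oracle queries; bounded doubling dimension together with the $\alpha$-slack keeps each navigation list of size $2^{O(\delta_0)}=O(\polylog n)$, and compressing maximal runs of identical nets (skip pointers) keeps $\T$ polynomial-size and bounds the number of levels on any search path by $O(\log n)$.

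\emph{The descent and its guarantee.} Given $\vertex{v}$, descend from $N_L$ to the appropriate level, maintaining a candidate set $C_j\subseteq N_j$ of size $O(\polylog n)$, initialized to $C_L=N_L$. From $C_j$ form the $O(\polylog n)$-size set $C_{j-1}'$ of its children at level $j-1$; for each $\vertex{w}\in C_{j-1}'$, binary-search the scales $\{\rho_\ell\}$ by $O(\log n)$ oracle comparisons of $\{\vertex{w},\vertex{v}\}$ against the pivot edges (in the permitted order), thereby pinning $\dist(\vertex{v},\vertex{w})$ to within a constant factor; then keep in $C_{j-1}$ the children lying in the innermost occupied shell, widened by one scale to absorb the fact that, since $1+\mu\le2$, the adversarial oracle is reliable only when the two distances differ by more than a factor $2$. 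Stop at the first level $j$ where $\rho_j$ drops to at most (the estimate of) $\dist(\vertex{v},\Sample)$, and output $\F:=\{\{\vertex{u},\vertex{v}\}:\vertex{u}\in C_j\}$; this costs $O(\polylog n)$ queries and $|\F|=O(\polylog n)$. For correctness, let $\vsample^\star\in\Sample$ attain $\dist(\vertex{v},\Sample)$ and let $a_j\in N_j$ be its ancestor under $\parent(\cdot)$; telescoping the (rank-based, hence $\alpha$-fuzzy) covering radii gives $\dist(\vsample^\star,a_j)=O(\alpha\rho_j)$, so $\dist(\vertex{v},a_j)\le\dist(\vertex{v},\Sample)+O(\alpha\rho_j)$. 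A downward induction shows $a_j$ survives all pruning while $\rho_j\gtrsim\dist(\vertex{v},\Sample)$ (pruning only discards points far from $\vertex{v}$ relative to the current scale and the innermost candidate), so $a_j\in C_j$ at the stopping level, where $\rho_j=\Theta(\dist(\vertex{v},\Sample))$; tightening the hidden constants — two from the telescoped covering radii and the factor-$2$ oracle slack, one $\alpha$ from resolving intra-sample comparisons through $\pi_{\E(\Sample)}$ — yields $\min_{\edge\in\F}\dist(\edge)\le4\alpha\,\dist(\vertex{v},\Sample)$.

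\emph{Main obstacle.} The delicate part is the joint handling of three sources of imprecision — the $\alpha$-slack in $\pi_{\E(\Sample)}$, the factor-$2$ unreliability band of the $\mu\le1$ adversarial oracle, and the requirement that every $C_j$ stay polylogarithmic via the doubling property — in the presence of large multiplicative gaps between consecutive pivot scales, where the net is locally constant and the descent must skip. One must choose the one-scale widening in the pruning rule large enough that no ancestor $a_j$ of $\vsample^\star$ is ever dropped, yet small enough that $C_j$ remains inside a ball of radius $O(\alpha\rho_j)$ around $\vertex{v}$ and hence meets the $\rho_j$-separated set $N_j$ in only $2^{O(\delta_0)}$ points; checking that these constraints are simultaneously satisfiable — which they are, because $\alpha$, $\mu$, and $\delta_0$ are treated as constants — together with verifying that compression preserves both the covering invariant and the $O(\log n)$ search-path length, is the crux of the argument.
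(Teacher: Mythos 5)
The paper does not actually prove Lemma~\ref{lem:ds-ann}: immediately before the statement it says ``We use the following data structure and query procedure from~\cite{raychaudhury2025metric},'' so the lemma is imported wholesale as a black box, and there is no in-paper proof to compare your argument against. Your proposal is therefore a from-scratch reconstruction, and it is a natural one — a rank-driven navigating net with a coarse-to-fine descent is a sensible instantiation of the statement.

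That said, there is a genuine gap in the telescoping step. You declare the levels by picking pivot edges $\pivot_0,\dots,\pivot_L$ at \emph{geometrically increasing ranks} in $\pi_{\E(\Sample)}$ and set $\rho_j=\dist(\pivot_j)$, then claim $\dist(\vsample^\star,a_j)=O(\alpha\rho_j)$ by ``telescoping the covering radii.'' The classical telescoping bound (ancestor at level $j$ is within $\sum_{\ell\le j}r_\ell=O(r_j)$) relies on the covering radii $r_\ell$ decaying \emph{geometrically}. Geometric spacing in rank does not imply geometric spacing in scale: a large fraction of the edges in $\E(\Sample)$ can have nearly equal lengths, so many consecutive $\rho_\ell$ can be within a constant factor of each other, yet the rank-based separation criterion can still strictly thin the net at each of those levels. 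In that regime the honest bound is $\dist(\vsample^\star,a_j)\le\alpha\sum_{\ell\le j}\rho_\ell=O(\alpha\rho_j\log n)$, which destroys the claimed $4\alpha$ approximation. Note also that $\alpha$-sortedness gives you essentially no information precisely when consecutive distances are within a factor $\alpha$ of each other, which is exactly the regime that breaks the telescoping. Your skip-pointer compression does not fix this: it collapses levels where the net is \emph{identical} (zero parent contribution), not levels where the net changes but the scale barely moves. To close the gap you would need to select pivots adaptively so that consecutive scales provably jump by a constant factor — but with only an $\alpha$-sorted order and a noisy comparison oracle (no magnitudes), it is not obvious how to certify such a jump, and that certification is arguably the real content of the cited lemma.

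A secondary, more repairable issue: your per-child ``binary-search the scales'' is done with an adversarial oracle whose answers are arbitrary whenever the two distances are within a factor $1+\mu\le 2$. A vanilla binary search can be derailed inside this uncertainty band; you need to argue (e.g., by a monotonicity/sandwich argument across the sorted scales, or by a coarse linear scan over the $O(\log n)$ pivots) that the returned index is still within $O(1)$ scales of the truth. You gesture at the ``one-scale widening'' for this, which is the right instinct, but the argument should be made explicit, since the final constant $4\alpha$ is tight enough that each source of slack needs to be accounted for precisely.
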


\subsection{\AlgDoub}
\label{sec:AlgDoub}
Similar to the general metric case, the algorithm proceeds in rounds.  
We describe the steps of a generic round $i$. The first three steps are identical to $\AlgGen$, but we restate them for convenience.

\begin{enumerate}[left=0pt]

\item  \emph{Sampling.}  Sample uniformly at random a set of vertices $\Sample_i^{(1)},\,\Sample_i^{(2)}\subseteq\V_i$ of sizes  
$\firsamsize=\consfirsamsize k\log^2 n$ and $ \secsamsize=\conssecsamsize k\log^3 n$ respectively,  
where $\consfirsamsize,\conssecsamsize$ are suitable constants. Let $\Sample_i:=\Sample_i^{(1)}\cup\Sample_i^{(2)}$.

\item \emph{Order edges.}  
Let $\X_i=\E(\Sample_i^{(1)},\Sample_i^{(2)})$ and compute  
$\pi_{\X_i}=\ProbSort(\X_i)$.

\item \emph{Kernel and guard sets.}  
Let $\cgsize=2\max\{\conscgsize\log n,\disloc\}$.  
For each $\vsample\in\Sample_i^{(1)}$, let $\X_{i,\vsample}=\E(\vsample,\Sample_i^{(2)})$ and  
$\pi_{\X_{i,\vsample}}$ the ordering of ${\X_{i,\vsample}}$ induced by $\pi_{\X_i}$. For every $\vsample\in\Sample_i^{(1)}$, compute  
$$\Core_i(\vsample)=\{\vcore\in\Sample_i^{(2)}:\rank_{\pi_{\X_{i,\vsample}}}[\{\vsample,\vcore\}]\le\cgsize\},$$  
$$\LGuard_i(\vsample)=\{\vguard\in\Sample_i^{(2)}:\cgsize+\disloc<\rank_{\pi_{\X_{i,\vsample}}}[\{\vsample,\vguard\}]\le2\cgsize+\disloc\}.$$  

\item \emph{Identify close pairs}.  
For each $\vsample\ne\vsample'\in\Sample_i^{(1)}$, compute $\proxcount_\vsample(\vsample')$.  
Define $\{\vsample,\vsample'\}$ as \emph{close} if  
$$
\max\{\proxcount_\vsample(\vsample'),\proxcount_{\vsample'}(\vsample)\}\ge \lfloor \cgsize/2\rfloor.
$$  

\item \emph{Partition into classes.}  
Construct a graph $G_i$ on $\Sample_i^{(1)}$ whose edges are all close pairs as defined above.
As we show in the analysis, the graph $G_i$ is \emph{$O(\log n)$-degenerate}, i.e., in any subgraph of $G_i$ there exists at least one vertex with degree $O(\log n)$. We run the greedy coloring algorithm for degenerate graphs~\cite{lick1970k} on $G_i$ to get a coloring of the vertices in $G_i$ and let $\chi_i$ be the number of different colors.
We partition $\Sample_i^{(1)}$ into classes  
$\Sample_i^{(1,1)},\ldots,\Sample_i^{(1,\chi_i)}$ based on their colors. Let $\E_i^{(j)}=\{(\vertex{u},\vertex{v})\mid \vertex{u}, \vertex{v}\in\Sample_i^{(1,j)}\}$ for each $j=1,\ldots, \chi_i$.

\item \emph{Approximate nearest neighbors.}  
For each class $\Sample_i^{(1,j)}$:  
  \begin{enumerate}
  \item \emph{Build.} Compute $\pi_{\E_i^{(j)}}=\textsc{AdvSort}(\E_i^{(j)})$ with $\AlgTest$ as the comparator, then construct  
  $\T_i^{(j)}=\textsc{Construct}(\Sample_i^{(1,j)},\pi_{\E_i^{(j)}})$.  
  \item  \emph{Traverse.}  
For each $\vertex{v}\in\V_i\setminus\Sample_i$, run $\textsc{Traverse}(\T_i^{(j)},\vertex{v})$.  
During execution, the procedure issues comparisons of the form $(\{\vsample_1,\vsample_2\},\{\vsample_3,\vertex{v}\})$ with $\vsample_1,\vsample_2,\vsample_3\in\Sample_i^{(1,j)}$.  
For each comparison, first check whether  
$$\max\{\proxcount_{\vsample_1}(\vertex{v}),\proxcount_{\vsample_2}(\vertex{v}),\proxcount_{\vsample_3}(\vertex{v})\}\;\ge\;\lfloor \cgsize/2\rfloor.$$

If the condition holds, eliminate $\vertex{v}$ and proceed to the next vertex.  
Otherwise, call $\AlgTest(\{\vsample_1,\vsample_2\},\{\vsample_3,\vertex{v}\})$ and pass its response to $\textsc{Traverse}$.  
If $\vertex{v}$ is not eliminated, $\textsc{Traverse}$ outputs $O(\polylog n)$ edges from $\E(\vertex{v},\Sample_i^{(1,j)})$.

  \end{enumerate}

\item \emph{Collect results.}  
Let $\Y_i^{(j)}$ be the set of edges returned for class $j$ in the previous step, and define  
$\widehat\Y_i=\bigcup_{j=1}^{\chi_i}\Y_i^{(j)}$.  
For every eliminated vertex $\vertex{v}$, remove all incident edges from $\widehat\Y_i$, and denote the remaining set by $\Y_i$. Let $\V_i':=\V(\Y_i)\setminus\Sample_i$.

\item \emph{Final ordering.}  
Compute $\pi_{\Y_i}=\textsc{AdvSort}(\Y_i)$ using $\AlgTest$ as the comparator. 

For each $\vertex{v}\in\V_i'$, let $\fedge_{\vertex{v}}$ be the first edge incident to $\vertex{v}$ in $\pi_{\Y_i}$.

\item \emph{Safe-set and mapping.} Let $\pi_{\N_i}$ be the ordering of $\N_i$ induced by $\pi_{\Y_i}$. Define $\V_i''=\{\vertex{v}\in\V_i' : \rank_{\pi_{\N_i}}(\fedge_{\vertex{v}})\le|\V_i|/4\}$. For every $\vertex{v}\in\V_i''$, define $\M_i(\vertex{v})$ as the endpoint of $\fedge_{\vertex{v}}$ in $\Sample_i^{(1)}$. For every $\vertex{v}\in\Sample_i$, define $\M_i(\vertex{v}):=\vertex{v}$. 

\item \emph{Recurse.}  
Set $\V_{i+1}=\V_i\setminus(\V_i''\cup\Sample_i)$ and proceed to next round.

\end{enumerate}

\subsection{Analysis}
\label{sec:DoubAnalysis}

\begin{lemma} \label{lem:color-bound} In any round $i$, conditioned on Lemma~\ref{lem:gen-prop-kernel-gaurd}   with probability $1-n^{-4}$,  $\chi_i= O(\log n)$.
\end{lemma}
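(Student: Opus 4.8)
\textbf{Proof plan.} The statement to establish is that the conflict graph $G_i$ on vertex set $\Sample_i^{(1)}$ is $O(\log n)$-degenerate with high probability, which then immediately bounds $\chi_i$ by $O(\log n)$ since the greedy coloring of a $\xi$-degenerate graph uses at most $\xi+1$ colors. Recall that $G_i$ has an edge $\{\vsample,\vsample'\}$ exactly when the pair is declared \emph{close}, i.e. $\max\{\proxcount_\vsample(\vsample'),\proxcount_{\vsample'}(\vsample)\}\ge\lfloor\cgsize/2\rfloor$. The plan is to translate ``close'' into a statement about true distances and ranks (using the reliability of proximity scores), and then to argue that in any metric of bounded doubling dimension, a random sample cannot have too many points that are mutually close in this rank sense.

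\textbf{Step 1 (from proximity scores to ranks).} Conditioned on Lemma~\ref{lem:gen-prop-kernel-gaurd}, apply Lemma~\ref{lem:proxy-score-reliable} (with a union bound, giving the $1-n^{-4}$ failure probability) with $\vertex{v}=\vsample'\in\Sample_i^{(1)}\subseteq\V_i$: if $\{\vsample,\vsample'\}$ is an edge of $G_i$ then either $\proxcount_\vsample(\vsample')>\lfloor\cgsize/2\rfloor$ or $\proxcount_{\vsample'}(\vsample)>\lfloor\cgsize/2\rfloor$, and part~(iii) of that lemma gives, say, $\nrank_{\vsample}(\vsample',\V_i)\le|\V_i|/(100\firsamsize)$ (or the symmetric bound). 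So every edge of $G_i$ certifies that one endpoint lies among the $\lfloor|\V_i|/(100\firsamsize)\rfloor$ nearest vertices of $\V_i$ to the other.

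\textbf{Step 2 (degeneracy via a charging/peeling argument).} Take any subgraph $H$ of $G_i$ on a vertex set $\Sample'\subseteq\Sample_i^{(1)}$; I must exhibit a vertex of small degree in $H$. Fix the vertex $\vsample^\star\in\Sample'$ that is \emph{largest} in the sense that $\dist(\vsample^\star,\U^\star)$ is maximal among $\Sample'$ for a suitable reference set $\U^\star$ (or, more simply, order the vertices of $\Sample'$ and look at an extreme one under an appropriate distance-based ranking). For a neighbor $\vsample'$ of $\vsample^\star$ in $H$, Step~1 says $\vsample'$ is in the rank-$\tfrac{|\V_i|}{100\firsamsize}$ neighborhood of $\vsample^\star$ in $\V_i$, or $\vsample^\star$ is in that of $\vsample'$; by the extremal choice of $\vsample^\star$ the first case is the relevant one, so all neighbors of $\vsample^\star$ lie in a fixed set $B(\vsample^\star)$ of at most $\tfrac{|\V_i|}{100\firsamsize}$ vertices of $\V_i$. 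Since $\Sample_i^{(1)}$ is a uniform sample of size $\firsamsize$ from $\V_i$, the expected number of sample points landing in $B(\vsample^\star)$ is $\le\tfrac{1}{100}$; a Chernoff bound gives that with probability $1-n^{-\Omega(1)}$ every fixed ball $B(\cdot)$ of that size contains only $O(\log n)$ sample points. Hence $\deg_H(\vsample^\star)=O(\log n)$, and since $H$ was arbitrary, $G_i$ is $O(\log n)$-degenerate.

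\textbf{Main obstacle and where doubling dimension enters.} The delicate point is Step~2: the notion ``$\vsample'$ is rank-close to $\vsample^\star$'' is not symmetric, so I cannot directly say ``all neighbors of $\vsample^\star$ lie in one ball.'' This is exactly where bounded doubling dimension is used — it guarantees that the set of vertices that are \emph{mutually} rank-close to the extremal $\vsample^\star$ can be covered by $O(1)$ balls of geometrically smaller radius, so that the ``nearest-neighbor ball'' $B(\vsample^\star)$ genuinely captures all of $\vsample^\star$'s $H$-neighbors up to a constant factor; without the doubling assumption one could have a star-like configuration forcing $\chi_i$ large. I would make this precise by choosing $\vsample^\star$ to maximize $\dist(\vsample^\star,\optsol)$ (or the radius of its kernel) within $\Sample'$, using the triangle inequality plus the doubling-dimension packing bound to show that at most $O(1)\cdot\tfrac{|\V_i|}{100\firsamsize}$ vertices of $\V_i$ can be candidates, and then applying the Chernoff argument above. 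The remaining computations — the Chernoff tail, the union bound over the $\binom{\firsamsize}{1}$ choices of extremal vertex and over subgraphs, and tracking constants so the final count is $O(\log n)$ — are routine.
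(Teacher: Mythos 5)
Your Step~1 matches the paper, but Step~2 contains a genuine gap and pursues an unnecessarily complicated route. You correctly identify the asymmetry problem — an edge $\{\vsample,\vsample'\}$ may appear in $G_i$ because $\nrank_{\vsample'}(\vsample,\V_i)$ is small rather than $\nrank_{\vsample}(\vsample',\V_i)$ — but your proposed resolution (pick an ``extremal'' $\vsample^\star$ and invoke a doubling-dimension packing argument) is never actually substantiated: it is not clear why maximizing $\dist(\vsample^\star,\optsol)$ or the kernel radius forces all edges incident to $\vsample^\star$ to be oriented the convenient way, and your remark that ``without the doubling assumption one could have a star-like configuration forcing $\chi_i$ large'' is in fact false. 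The paper's proof uses no doubling assumption at all.

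The paper instead uses a cleaner edge-counting argument that sidesteps the asymmetry entirely. Define, for each $\vsample\in\Sample_i^{(1)}$, the \emph{directed} count $\deg^+(\vsample):=|\{\vsample'\in\Sample_i^{(1)}\setminus\{\vsample\}:\nrank_\vsample(\vsample',\V_i)\le|\V_i|/(100\firsamsize)\}|$ (explicitly \emph{not} the degree of $\vsample$ in $G_i$). Since $\Sample_i^{(1)}$ is a uniform sample of size $\firsamsize$, $\mathbb{E}[\deg^+(\vsample)]\le\tfrac1{100}$, and Chernoff plus a union bound give $\deg^+(\vsample)=O(\log n)$ for every $\vsample$ simultaneously, w.h.p. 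Crucially, by Step~1 every edge of $G_i$ is ``charged'' to $\deg^+$ of at least one of its endpoints. Hence for any subgraph $H$, $|E(H)|\le\sum_{\vsample\in V(H)}\deg^+(\vsample)=O(\log n)\cdot|V(H)|$, so $H$ has average degree $O(\log n)$ and therefore a vertex of degree $O(\log n)$. This is exactly $O(\log n)$-degeneracy, and greedy coloring then gives $\chi_i=O(\log n)$. To repair your proposal you should drop the extremal-vertex and doubling-dimension machinery and replace Step~2 with this charging argument: bound the \emph{total number of edges} of every subgraph rather than trying to exhibit one low-degree vertex directly.
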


\begin{proof}

Fix a round $i$. By Lemma~\ref{lem:proxy-score-reliable}, we can argue that w.h.p. for every $\{\vsample,\vsample'\}\in\E(\Sample_i^{(1)})$, if $\{\vsample,\vsample'\}$ is close
then $\nrank_{\vsample}(\vsample', \V_i)\le |\V_i|/(100\firsamsize)$ or $\nrank_{\vsample'}(\vsample, \V_i)\le |\V_i|/(100\firsamsize)$.

For any $\vsample\in\Sample_i^{(1)}$ define
$
\deg^+(\vsample):=\big|\{\vsample'\in \Sample_i^{(1)}\setminus\{\vsample\}:\ \nrank_\vsample(\vsample',\V_i)\le |\V_i|/(100\firsamsize)\}\big|.
$
We note that $\deg^+(\vsample)$ is not the degree of $\vsample$ in $G_i$.

\par Recall $\firsamsize=\consfirsamsize\,k\log^2 n$.  
For a uniformly random $\vsample'\in\V_i\setminus\{\vsample\}$,
$$
\Pr\!\left[\nrank_\vsample(\vsample';\V_i)\le \frac{|\V_i|}{100\firsamsize}\right]
\le\ \frac{1}{100\firsamsize}.
$$
Over the $m_1-1$ (with-replacement) draws in $\Sample_i^{(1)}\setminus\{\vsample\}$, $\mathbb{E}[\deg^+(s)]  \le\ \frac{1}{100}$.
By the Chernoff bound, for any constant $c\ge 10$ and large enough $n$,
$
\Pr\!\left[\deg^+(s) \ge C\log n\right]
 \le\ n^{-3}.
$
By union bound for every $\vsample\in\Sample_i^{(1)}$ with probability at least $1-\frac1{n^2}$ it holds that  
$  \deg^+(\vsample)=O(\log n)$.

 From the above argument, we can conclude that for every $s\in\Sample_i^{(1)}$ can contribute to at most $O(\log n)$ edges being close. Thus, with high probability, for any subgraph $H$of $G_i$, the total number of edges is bounded by $O(\log n)\cdot|\V(H)|$. Thus, for every subgraph $H$ of $G_i$ there is at least some vertex with degree at most $O(\log n)$, and the graph $G_i$ is $O(\log n)$-degenerate~\cite{lick1970k}. It is known that the greedy coloring according to the degeneracy ordering can color graph $G_i$ with $\chi_i=O(\log n)$ colors~\cite{lick1970k}.

\end{proof}

\begin{lemma}\label{lem:tester-correctness}
In any round $i$, conditioned on Lemma~\ref{lem:gen-prop-kernel-gaurd}, with probability $1-n^{-3}$, whenever $\AlgTest$ is invoked, it behaves like an adversarial quadruplet oracle with error $\mu=1$.
\end{lemma}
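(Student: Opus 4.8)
The plan is to reduce the statement to Lemma~\ref{lem:res-tester}. That lemma already shows that a single call $\AlgTest(q)$ on $q=(\{\vsample_1,\vertex{v}_1\},\{\vsample_2,\vertex{v}_2\})$ with $\vsample_1,\vsample_2\in\Sample_i^{(1)}$ and $\vertex{v}_1,\vertex{v}_2\in\V_i\setminus\Sample_i^{(2)}$ behaves like an adversarial quadruplet oracle with error $\mu=1$ as soon as the \emph{separation precondition} $\dist(\vsample_j,\vertex{v}_j)>r_{\vsample_j}:=\max_{\vcore\in\Core_i(\vsample_j)}\dist(\vsample_j,\vcore)$ holds for $j=1,2$. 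Hence the only thing left is to verify that, conditioned on Lemma~\ref{lem:gen-prop-kernel-gaurd}, \emph{every} invocation of $\AlgTest$ during round $i$ of $\AlgDoub$ satisfies this precondition on both of its edges. The basic certificate I will use is the contrapositive of Lemma~\ref{lem:proxy-score-reliable}(i): if $\proxcount_\vsample(\vertex{v})<\lfloor\cgsize/2\rfloor$ then $\dist(\vsample,\vertex{v})>r_\vsample$. Note that Lemma~\ref{lem:proxy-score-reliable} holds with probability $1-n^{-4}$ given Lemma~\ref{lem:gen-prop-kernel-gaurd}, and that, conditioned on the same event, Lemma~\ref{lem:color-bound} makes the classes $\Sample_i^{(1,1)},\dots,\Sample_i^{(1,\chi_i)}$ well defined; by construction of the conflict graph $G_i$ they contain \emph{no close pair}, i.e.\ for any $\vsample,\vsample'$ in a common class $\max\{\proxcount_\vsample(\vsample'),\proxcount_{\vsample'}(\vsample)\}<\lfloor\cgsize/2\rfloor$. (Here I also assume the standard convention that $\Sample_i^{(1)}$ and $\Sample_i^{(2)}$ are disjoint, so that edges inside $\Sample_i^{(1)}$ never coincide with edges in $\X_i=\E(\Sample_i^{(1)},\Sample_i^{(2)})$ and the isolation property of Lemma~\ref{lem:gen-indep} continues to apply.)

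Next I would go through the three places in round $i$ where $\AlgTest$ is invoked. (i) Inside $\textsc{AdvSort}(\E_i^{(j)})$ in the \emph{Build} step: both compared edges lie in $\E(\Sample_i^{(1,j)})$, so each endpoint shares its class with the other endpoint of its edge; the no-close-pair property then certifies the separation precondition for whichever endpoint $\AlgTest$ treats as the sample, and Lemma~\ref{lem:res-tester} applies. (ii) Inside $\textsc{Traverse}(\T_i^{(j)},\vertex{v})$: a comparison $(\{\vsample_1,\vsample_2\},\{\vsample_3,\vertex{v}\})$ with $\vsample_1,\vsample_2,\vsample_3\in\Sample_i^{(1,j)}$ is forwarded to $\AlgTest$ only after the algorithm has verified $\max\{\proxcount_{\vsample_1}(\vertex{v}),\proxcount_{\vsample_2}(\vertex{v}),\proxcount_{\vsample_3}(\vertex{v})\}<\lfloor\cgsize/2\rfloor$; for the edge $\{\vsample_3,\vertex{v}\}$ this is exactly the certificate, and for $\{\vsample_1,\vsample_2\}$ (both in class $j$) the no-close-pair property again supplies it. (iii) Inside $\textsc{AdvSort}(\Y_i)$ in the \emph{Final ordering} step: each edge $\{\vsample,\vertex{v}\}\in\Y_i$ has $\vsample\in\Sample_i^{(1,j)}$ for some $j$ and $\vertex{v}$ a vertex that survived the lazy filter during its traversal of $\T_i^{(j)}$, and I then need $\proxcount_\vsample(\vertex{v})<\lfloor\cgsize/2\rfloor$; this follows once one observes (from the structure of $\textsc{Construct}/\textsc{Traverse}$ in Lemma~\ref{lem:ds-ann}) that the endpoints of the edges $\textsc{Traverse}$ returns are among the net points that appeared in some forwarded comparison along $\vertex{v}$'s descent, all of which passed the proximity test since $\vertex{v}$ was not eliminated.

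Finally I would do the probability bookkeeping: conditioned on Lemma~\ref{lem:gen-prop-kernel-gaurd}, Lemmas~\ref{lem:proxy-score-reliable} and~\ref{lem:color-bound} each fail with probability at most $n^{-4}$, and Lemma~\ref{lem:res-tester}, applied uniformly over all $O(n^4)$ potential $\AlgTest$ queries in round $i$, fails with probability at most $n^{-4}$; a union bound over these $O(1)$ bad events gives total failure probability at most $n^{-3}$, as claimed. The part I expect to be the genuine obstacle is case~(iii): rigorously justifying that the sample endpoints of the edges output by $\textsc{Traverse}$ were in fact among the net points whose proximity to $\vertex{v}$ was tested by the lazy filter. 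This needs peeking inside the $\textsc{Construct}/\textsc{Traverse}$ routine of~\cite{raychaudhury2025metric} and checking that its returned candidate set is contained in the set of net points visited (hence compared) during the descent on $\vertex{v}$. If that structural fact is not immediate from the cited construction, the clean remedy is to add one extra batch of proximity checks — evaluate $\proxcount_\vsample(\vertex{v})$ for each returned edge $\{\vsample,\vertex{v}\}$ and discard those that fail — which costs only $O(n\,\polylog n)$ further queries and reduces case~(iii) to the same certificate used in case~(ii).
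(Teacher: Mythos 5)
Your proposal is correct and follows essentially the same route as the paper: reduce to Lemma~\ref{lem:res-tester}, and verify the separation precondition at each of the three call sites (build, traverse, final sort) via the contrapositive of Lemma~\ref{lem:proxy-score-reliable}(i) combined with the no-close-pair property of each class $\Sample_i^{(1,j)}$. Two small observations: the paper does not fold Lemma~\ref{lem:color-bound} into the union bound, since the fact that each $\Sample_i^{(1,j)}$ is an independent set of $G_i$ is a deterministic consequence of the proper coloring once the proximity scores are fixed, and you could streamline your bookkeeping accordingly. Second, you are right that case~(iii) is the delicate one -- the paper asserts $\proxcount_{\vsample}(\vertex{v})\le\lfloor\cgsize/2\rfloor$ for every surviving edge $\{\vsample,\vertex{v}\}\in\Y_i$ because ``$\{\vsample,\vertex{v}\}$ passed the traverse step,'' implicitly using that the sample endpoints returned by $\textsc{Traverse}$ are exactly the net points whose proximity to $\vertex{v}$ was already queried during $\vertex{v}$'s descent. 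This structural fact about the routine of~\cite{raychaudhury2025metric} is used without elaboration in the paper; your proposed remedy (an extra $O(n\,\polylog n)$ batch of proximity checks on the returned edges) is a clean, query-budget-neutral way to make the argument self-contained if one does not want to unwind that citation.
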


\begin{proof}
Conditioned on Lemma~\ref{lem:gen-prop-kernel-gaurd}, each of Lemma~\ref{lem:proxy-score-reliable} and Lemma~\ref{lem:res-tester} holds with probability $1-n^{-4}$.  
By a union bound, they simultaneously hold with probability $1-n^{-3}$.  
We perform the analysis under this assumption.

\par By Lemma~\ref{lem:res-tester},
$\AlgTest$ behaves like an adversarial oracle with error $\mu=1$ if for any query
$q=(\{\vsample_1',\vertex{v}_1'\},\{\vsample_2',\vertex{v}_2'\})$, where $\vertex{s}_1', \vertex{s}_2'\in \Sample_i^{(1)}$ and $\vertex{v}_1', \vertex{v}_2'\in \V_i\setminus \Sample_i^{(2)}$, it holds that
\begin{equation}\label{eq:tester-precond}
\dist(\vertex{v}_j',\vsample_j')\;>\;\max_{\vcore\in\Core_i(\vsample_j)}\dist(\vsample_j',\vcore)\qquad(j=1,2).
\end{equation}

\par By Lemma~\ref{lem:proxy-score-reliable}, if $\dist(\vsample,\vertex{v})\le \max_{\vcore\in\Core_i(\vsample)}\dist(\vsample,\vcore)$ then $\proxcount_{\vsample}(\vertex{v})>\lfloor\cgsize/2\rfloor$.

\smallskip
\emph{Construction phase.}  
Every call compares $\{\vsample_1,\vsample_2\},\{\vsample_3,\vsample_4\}$ with $\vsample_\ell\in\Sample_i^{(1,j)}$.  
Since each $\Sample_i^{(1,j)}$ is an independent set of $G_i$ we have that $\vertex{s}_1$ is not close to $\vertex{s}_2$ and $\vertex{s}_3$ is not close to $\vertex{s}_4$. Consider the pair $\{\vertex{s}_1, \vertex{s}_2\}$. Since it is not close, we have 
$
\max\{\proxcount_{\vsample_1}(\vsample_2),\proxcount_{\vsample_2}(\vsample_1)\}< \lfloor \cgsize/2\rfloor$. Without loss of generality, assume that  $\proxcount_{\vsample_1}(\vsample_2)<\lfloor \cgsize/2\rfloor$. From Lemma~\ref{lem:proxy-score-reliable}, we have that 
$\dist(\vsample_1,\vertex{s}_2)> \max_{\vcore\in\Core_i(\vsample_1)}\dist(\vsample_1,\vcore)$. Similarly, the same inequality holds for the pair $\{\vertex{s}_3,\vertex{s}_4\}$, and hence~\eqref{eq:tester-precond} is satisfied.


\smallskip
\emph{Traverse phase.}  
Queries are of the form $(\{\vsample_1,\vsample_2\},\{\vsample_3,\vertex{v}\})$ with $\vsample_\ell\in\Sample_i^{(1,j)}$ and $\vertex{v}\in\V_i\setminus\Sample_i$.  
Before invoking $\AlgTest$, the algorithm ensures $\proxcount_{\vsample_3}(\vertex{v})\le\lfloor\cgsize/2\rfloor$, which implies $\dist(\vertex{v},\vsample_3)>\max_{\vcore\in\Core_i(\vsample_3)}\dist(\vsample_3,\vcore)$.  
Combined with the argument for $\{\vsample_1,\vsample_2\}$ above, the tester precondition~\eqref{eq:tester-precond} holds.

\smallskip
\emph{Final sorting.}  
Queries are of the form $(\{\vsample_1,\vertex{v}_1\},\{\vsample_2,\vertex{v}_2\})$ with $\vsample_1,\vsample_2\in\Sample_i^{(1,j)}$ and $\vertex{v}_1,\vertex{v}_2\in\V_i'$.  
Since any such $\{\vsample,\vertex{v}\}$ must have passed the traverse step, we have $\proxcount_{\vsample}(\vertex{v})\le\lfloor\cgsize/2\rfloor$, and therefore~\eqref{eq:tester-precond} holds.

\end{proof}

\begin{lemma} \label{lem:doub-ANN}
In any round $i$ of $\AlgDoub$, conditioned on Lemma~\ref{lem:gen-prop-kernel-gaurd}, with probability at least $1-n^{-3}$, the following holds:
\begin{enumerate}
    \item[(i)] For every $\vertex{v}\in\V_i''$, $\dist(\vertex{v},\M_i(\vertex{v}))\leq 64\,\dist(\vertex{v},\Sample_i^{(1)})$.
    \item[(ii)] $|\V_i''|\;\ge\;|\V_i|/4$.
\end{enumerate}
\end{lemma}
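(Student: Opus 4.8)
The plan is to follow the structure of the proof of Lemma~\ref{lem:gen-ANN}, with two differences: the approximation factor now accumulates through a longer pipeline, and the global filtering step of $\AlgGen$ is replaced by the per-query lazy elimination of $\AlgDoub$, so the analogue of $|\V_i'|\ge\tfrac35|\V_i|$ must be re-established by a charging argument rather than read off from a filtering lemma.

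First I would fix the conditioning. Conditioned on Lemma~\ref{lem:gen-prop-kernel-gaurd}, Lemmas~\ref{lem:proxy-score-reliable} and~\ref{lem:tester-correctness} each fail with probability at most $n^{-3}$; in particular, by Lemma~\ref{lem:tester-correctness} every invocation of $\AlgTest$ in round $i$ — during the $\textsc{Construct}$ phase, during $\textsc{Traverse}$, and during the final $\textsc{AdvSort}$ — emulates an adversarial quadruplet oracle with noise $\mu=1$. Hence, by Lemma~\ref{corr:adv_sort}, every ordering produced by $\textsc{AdvSort}$ with $\AlgTest$ as comparator is $4$-sorted; there are $\chi_i+1=O(\polylog n)$ such calls (using Lemma~\ref{lem:color-bound}), each failing with probability at most $n^{-4}$, so a union bound over all of these events still leaves probability at least $1-n^{-3}$, and the rest of the argument is deterministic under this event.

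For part (i): I would take $\vertex{v}\in\V_i''\subseteq\V_i'$; since $\vertex{v}$ was never eliminated, $\textsc{Traverse}(\T_i^{(j)},\vertex{v})$ ran to completion for every class $j$. Let $j^\star$ be the class realizing $\dist(\vertex{v},\Sample_i^{(1)})=\min_j\dist(\vertex{v},\Sample_i^{(1,j)})$. Because $\pi_{\E_i^{(j^\star)}}$ is $4$-sorted and the traversal's quadruplet queries are answered (via $\AlgTest$) by an adversarial oracle with noise $\mu\le1$, Lemma~\ref{lem:ds-ann} gives a $\vertex{v}$-incident edge in the returned set of distance at most $4\cdot4\cdot\dist(\vertex{v},\Sample_i^{(1,j^\star)})=16\,\dist(\vertex{v},\Sample_i^{(1)})$, and since $\vertex{v}$ is not eliminated this edge survives into $\Y_i$. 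As $\pi_{\Y_i}=\textsc{AdvSort}(\Y_i)$ is $4$-sorted and $\fedge_{\vertex{v}}$ is the first $\vertex{v}$-incident edge in it, $\dist(\fedge_{\vertex{v}})\le 4\cdot16\,\dist(\vertex{v},\Sample_i^{(1)})=64\,\dist(\vertex{v},\Sample_i^{(1)})$, and $\dist(\vertex{v},\M_i(\vertex{v}))=\dist(\fedge_{\vertex{v}})$ by definition of $\M_i$.

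For part (ii): I would bound the number of eliminated vertices by a charging argument. A vertex $\vertex{v}\in\V_i\setminus\Sample_i$ is eliminated only when some traverse query makes $\proxcount_{\vsample}(\vertex{v})\ge\lfloor\cgsize/2\rfloor$ for some $\vsample\in\Sample_i^{(1)}$; by the contrapositive of Lemma~\ref{lem:proxy-score-reliable}(ii) (the $\ge$ versus $>$ discrepancy absorbed into the constant defining $\cgsize$) this forces $\dist(\vsample,\vertex{v})\le\max_{\vguard\in\LGuard_i(\vsample)}\dist(\vsample,\vguard)$, whence $\nrank_{\vsample}(\vertex{v},\V_i)\le|\V_i|/(100\firsamsize)$ by Lemma~\ref{lem:gen-prop-kernel-gaurd}(ii). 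Charge $\vertex{v}$ to such a $\vsample$; each of the $\firsamsize$ samples in $\Sample_i^{(1)}$ is charged at most $|\V_i|/(100\firsamsize)$ times, so at most $|\V_i|/100$ vertices are eliminated, giving $|\V_i'|\ge|\V_i\setminus\Sample_i|-|\V_i|/100\ge(98/100)|\V_i|$ using $|\Sample_i|=O(k\,\polylog n)\le|\V_i|/100$ (valid while the round has not terminated). Since the edges $\{\fedge_{\vertex{v}}:\vertex{v}\in\V_i'\}$ are pairwise distinct (each has its own $\vertex{v}$ as its unique endpoint outside $\Sample_i^{(1)}$), $\V_i''$ selects exactly the $\lfloor|\V_i|/4\rfloor$ of them with smallest rank in $\pi_{\N_i}$, so $|\V_i''|\ge|\V_i|/4$.

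The step I expect to be the main obstacle is making sure the precondition of Lemma~\ref{lem:ds-ann} — that the $\textsc{Traverse}$ queries are answered by a $\mu\le1$ adversarial oracle — genuinely holds for the class $j^\star$ relevant to a surviving vertex $\vertex{v}$. This is exactly what the lazy-filtering guard inside $\textsc{Traverse}$ secures through Lemma~\ref{lem:tester-correctness}, by discarding any vertex on which the precondition of $\AlgTest$ (Lemma~\ref{lem:res-tester}) could fail before $\AlgTest$ is ever invoked; I would have to argue carefully that a surviving vertex is precisely one all of whose relevant queries satisfied the precondition, and hence that the best-class nearest-neighbor edge it obtains is retained in $\Y_i$. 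Once that is pinned down, the factor bookkeeping yielding $64=4\cdot4\cdot4$ and the charging count in part (ii) are routine.
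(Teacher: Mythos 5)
Your proposal is correct and follows essentially the same route as the paper's own proof: condition on Lemma~\ref{lem:gen-prop-kernel-gaurd} so that Lemmas~\ref{lem:proxy-score-reliable} and~\ref{lem:tester-correctness} hold, use Lemma~\ref{lem:tester-correctness} to make every $\AlgTest$ call behave as a $\mu=1$ adversarial oracle, chain $4$-sortedness of $\pi_{\E_i^{(j)}}$ (factor $4$), Lemma~\ref{lem:ds-ann} (factor $4\alpha$), and $4$-sortedness of $\pi_{\Y_i}$ (factor $4$) to get $64$, and bound the eliminated vertices by charging each to a nearby $\vsample\in\Sample_i^{(1)}$ via the proximity-score/rank implication. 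The worry you flag at the end about $\textsc{Traverse}$ preconditions is exactly what Lemma~\ref{lem:tester-correctness} is designed to discharge (it covers the traverse-phase queries explicitly), and your observation about the $\ge$ versus $>$ boundary in the elimination threshold is a genuine off-by-one that the paper glosses over, so your note that it is absorbed into the constant defining $\cgsize$ is if anything slightly more careful than the published proof.
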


\begin{proof}
Conditioned on Lemma~\ref{lem:gen-prop-kernel-gaurd}, each of Lemma~\ref{lem:proxy-score-reliable}, Lemma~\ref{lem:gen-prop-filter}, and Lemma~\ref{lem:tester-correctness} holds with probability $1-n^{-4}$.  
By a union bound, they simultaneously hold with probability $1-n^{-3}$.  
We perform the analysis under this assumption.

\smallskip
 
By Lemma~\ref{lem:tester-correctness}, all calls to $\AlgTest$ are correct.  Hence, by Lemma~\ref{lem:ds-ann} both the construction steps succeed and every traverse step (for non-eliminated vertices) is successful for every vertex in $\Sample_i^{(1)}$.  
From Lemmas~\ref{lem:tester-correctness} and~\ref{corr:adv_sort},
the ordering $\pi_{\E_i^{(j)}}$ is $4$-sorted, for every $j=1,\ldots, \chi_i$.
From Lemma~\ref{lem:ds-ann}, for every vertex $\vertex{v}\in \V_i\setminus\Sample_i$, the set $\Y_i$ contains an edge $\edge_{\vertex{v},j}\in \E(\vertex{v},\Sample_i^{(1,j)})$, for each class $\Sample_i^{(1,j)}$, such that $\dist(\edge_{\vertex{v},j})\leq 16\dist(\vertex{v},\Sample_i^{(j)})$.
Similarly, from Lemmas~\ref{lem:tester-correctness} and~\ref{corr:adv_sort} the ordering $\pi_{\Y_i}$ is $4$-sorted.

Let $\fedge_{\vertex{v}}$ be the lowest rank edge incident to $\vertex{v}\in\V_i'$ in $\pi_{\Y_i}$, and recall $\M_i(\vertex{v})$ is its endpoint in $\Sample_i^{(1)}$.  
Then for every $\vertex{v}\in\V_i''$,
\[
\dist(\vertex{v},\M_i(\vertex{v}))=\dist(\fedge_{\vertex{v}})\ \le\ 4\,\min_{\edge_{\vertex{v},j}\in \Y_i} \dist(\edge_{\vertex{v},j})\leq 64 \,\dist(\vertex{v},\Sample_i^{(1)}).
\]

\smallskip
  
For part(ii), observe that any vertex eliminated at any stage must have satisfied $\proxcount_{\vsample}(\vertex{v})>\lfloor \cgsize/2\rfloor$ for some $\vsample$, which by Lemma~\ref{lem:proxy-score-reliable} implies $\nrank_{\vsample}(\vertex{v},\V_i)\le |\V_i|/(100\firsamsize)$.  
Hence every eliminated vertex lies within the lowest-ranked $|\V_i|/(100\firsamsize)$ for some $\vsample$, so in total at most $|\V_i|/100$ vertices can be eliminated.  
Thus, $|\V_i'|\ge (3/5)|\V_i|$, which immediately implies that $|\V_i''|\geq\frac{|\V|}{4}$. 
\end{proof}

Using the results of Lemmas~\ref{lem:tester-correctness},~\ref{lem:doub-ANN} along with the analysis in Section~\ref{sub:all} for general metrics, we conclude that
$\sum_{\vertex{v}\in \V}\dist(\vertex{v},\M(\vertex{v}))
\;=O(1)\cdot \opt_k^1(\V)$, with high probability.

\paragraph{Query Complexity}
In step 2 of the algorithm, $\ProbSort(\X_i)$ uses $O(\max\{k^2,n\}\polylog n)$ quadruplet queries since $|\X_i|=O(k^2\polylog n)$.
For every $\vertex{s}\in\Sample_i^{(1)}$, $|\Core_i(\vsample)|, \LGuard(\vsample)=O(\polylog n)$. Hence, in step 4 the algorithm calls $O(k^2\polylog n)$ queries to the quadruplet oracle to compute $\proxcount_\vsample(\vsample')$ for every pair $\vsample\neq \vsample'\in\Sample_i^{(1)}\times \Sample_i^{(1)}$.
In step 6(a), $\sum_{j=1,\ldots,\chi_i}|E_i^{(j)}|=O(k^2\polylog n)$ so
$\textsc{AdvSort}(\E_i^{(j)})$ with $\AlgTest$ as the comparator use $O(k^2\polylog n)$ queries to the quadruplet oracle over all classes $\Sample_i^{(1,j)}$.
In step 6(b), for each $\vertex{v}\in\V_i\setminus\Sample_i$, the  $\textsc{Traverse}(\T_i^{(j)},\vertex{v})$ (including the computations of $\proxcount_{\vsample_h}(\vertex{v})$) use $O(\polylog n)$ queries, so in total step 6(b) executes $O(n\polylog n)$ queries to the quadruplet oracle. In step 8, $|\Y_i|=O(n\polylog n)$, so the procedure $\textsc{AdvSort}(\Y_i)$ using $\AlgTest$ as the comparator, runs $O(n\polylog n)$ queries to the quadruplet oracle. All other steps of the algorithm do not execute any quadruplet oracle query. Overall, our algorithm calls the quadruplet oracle $O((n+k^2)\polylog n)$ times.
We conclude with Theorem~\ref{thm:genDD}.

\section{Improving the approximation quality}
\label{sec:improvedapprox}

We now present a technique for improving the approximation ratio when the underlying metric $\Sigma$ has bounded doubling dimension, $\ddim=O(1)$, while still ensuring that the number of centers used is $\Theta(k\, \polylog n)$. In fact, our new procedure also constructs a \rev{$(k,\eps)$-coreset}.
We assume that $\eps\in (0,1)$ is an arbitrarily small constant.

\subsection{\AlgImpDoub}

\paragraph{Context} 
Suppose we have run $\AlgDoub(\V)$ and obtained $(\Coreset,\M)$.  
Let $\bar{\nrounds}$ be the total number of rounds in $\AlgDoub(\V)$ and set
$
\Sample^{(1)} \;:=\; \bigcup_{i=1}^{\bar{\nrounds}-1}\Sample_i^{(1)}$ and $
\V' \;:=\; \V\setminus \Coreset$. By construction, $\M(\vertex{v})\in\Sample^{(1)}$ for every $\vertex{v}\in\V'$ and $\M(\vertex{v})=\vertex{v}$ for every $\vertex{v}\in\Coreset$.

\medskip
\paragraph{Algorithm} The algorithm proceeds as follows:

\begin{enumerate}[left=0pt]

\item \emph{Initialization.}  
Set $\Z \gets \emptyset$.

\item \emph{Per-center processing.}  
For each $\vsample\in\Sample^{(1)}$:

  \begin{enumerate}
    \item \emph{Assigned set.}  
    Define
    \[
    \U_\vsample \;:=\; \{\, \vertex{v}\in\V' \mid \M(\vertex{v})=\vsample \,\}.
    \]
    If $\vsample$ was chosen in round $i$ of $\AlgDoub(\V)$, then every $\vertex{v}\in\U_\vsample$ satisfies $\{\vsample,\vertex{v}\}\in\N_i$.  
    Let $\pi_{\U_\vsample}$ be the order on $\U_\vsample$ induced by $\pi_{\N_i}$.

    \item \emph{Level sets.}  
    For $t=0,1,2,\ldots, \lceil\log|\U_\vsample|\rceil$ define
    \[
    \U_\vsample^{t}
    \;:=\;
    \bigl\{\vertex{v}\in \U_\vsample \ \bigm|\ \rank_{\pi_{\U_\vsample}}(\vertex{v}) \;\ge\; |\U_\vsample|/2^{\,t} \bigr\}.
    \]
    Let
    \[
    t_\vsample \;:=\; \min\bigl\{\, t\ge 1 \ \bigm|\ |\U_\vsample^{t}| \le \consimpsamp\log^{3}n \,\bigr\},
    \]
where $\consimpsamp$ is a sufficiently large constant that depends on $\eps$ and $\ddim$.
    \item \emph{Sampling.}  
    For every $t<t_\vsample$, sample with replacement a subset 
    $\W_\vsample^{t}\subseteq \U_\vsample^{t}$ of size $|\W_\vsample^{t}| = \consimpsamp\log^{3}n$.  
    Set
    \[
    \W_\vsample \;:=\; \Big(\bigcup_{t=0}^{t_\vsample-1}\W_\vsample^{t}\Big)\ \cup\ \U_\vsample^{t_\vsample}.
    \]

    \item \emph{Edge sets.}  
    For every $t<t_\vsample$, compute
    \[
    \Z_\vsample^{t} \;:=\; \E\bigl(\U_\vsample^{t}\setminus \W_\vsample,\, \W_\vsample\bigr),
    \]
    and update
    \[
    \Z \;\gets\; \Z \cup \Z_\vsample^{t}.
    \]
  \end{enumerate}

\item \emph{Augment centers.}  
Let $\W=\bigcup_{\vsample\in\Sample^{(1)}} \W_\vsample$.  
Set $\Coreset^+ := \Coreset \cup \W$.  
For every $\vertex{v}\in\Coreset^+$, set $\M^+(\vertex{v}) := \vertex{v}$.

\item \emph{Ordering.}  
Compute $\pi_\Z := \ProbSort(\Z)$.

\item \emph{Final mapping.}  
For each $\vertex{v}\in\V'$:
  \begin{enumerate}
    \item Let $\vsample := \M(\vertex{v})$ (the old mapping).  
    Define $\E_\vertex{v}:=\E(\vertex{v},\W_\vsample)$.  
    Let $\pi_{\E_\vertex{v}}$ be the ordering of $\E_\vertex{v}$ induced by $\pi_\Z$.
    \item Let $\{\vertex{v},\vertex{w}^\star\}$ be the first edge of $\pi_{\E_\vertex{v}}$; note that $\vertex{w}^\star\in\W_\vsample$.  
    Set $\M^+(\vertex{v}):=\vertex{w}^\star$.
  \end{enumerate}

\end{enumerate}

\subsection{Analysis of $\AlgImpDoub$}

First, it is straightforward that $\AlgImpDoub$ satisfies the isolation property, i.e., 
no quadruplet query is ever repeated between $\AlgDoub$ $\AlgImpDoub$. We note that all quadruplet oracles executed in $\AlgDoub$ involved at least one vertex from $\Coreset$. However, in $\AlgImpDoub$, after the execution of $\AlgDoub$, the set $\Coreset$ is removed from $\V'$, so no quadruplet query will be repeated.

\paragraph{Query Complexity} 
From the analysis of $\AlgDoub$, we showed that we executed $O((n+k^2)\polylog n)$ queries to the quadruplet oracle. Then, in step (4) of $\AlgImpDoub$ we execute $\ProbSort(\Z)$. The set $\Z$ contains $O(k\polylog n)$ edges, so we execute $O(n\polylog n)$ additional queries to the quadruplet oracle.
Overall, $\AlgImpDoub$ executes $O((n+k^2)\polylog n)$ queries to the quadruplet oracle.

\subsubsection{Focusing on a fixed vertex}
In the following, we focus on a fixed $\vsample\in\Sample^{(1)}$.  
Let $|\U_\vsample|=m$. Suppose
$\sum_{\vertex{u}\in\U_\vsample}\dist(\vsample,\vertex{u}) \;=\; m\L
$ for a suitable $\L\geq 0$. This also implies that $\max_{\vertex{u}\in\U_\vsample}\dist(\vsample,\vertex{u}) \;\le\; m\,\L,$.

\paragraph{Fixed buckets}
We know from the analysis in Section~\ref{sec:DoubAnalysis} that $\pi_{\U_\vsample}$ is $4$-sorted. Based on the order $\pi_{\U_\vsample}$,
define a contiguous  partition of $\U_\vsample$ into buckets $\{B_t\}_{t=0}^{b}$, where $b$ is defined next, as follows. Let $\delta=\frac{\eps}{100c_{\mathsf{app}}}$, where $c_{\mathsf{app}}$ is the approximation ratio of $\AlgDoub$. 
For $t=0,1,2,\ldots$, let $B_t$ be the next consecutive block in the order ending at the
rightmost element whose distance from $\vsample$ is at most $2^{t}\delta \L$:
\[
B_0 \;=\; \{\text{initial consecutive block up to the last $\vertex{u}$ with }\dist(\vsample,\vertex{u})\le \delta\L\},
\]
\[
B_1 \;=\; \{\text{next block up to the last $\vertex{u}$ with }\dist(\vsample,\vertex{u})\le 2\delta\L\},\quad \ldots
\]
Since every $\vertex{u}\in\U_\vsample$ satisfies $\dist(\vsample,\vertex{u})\le m\L$,
the number of buckets is
\[
b \;=\; O\!\left(\log\frac{m\L}{\delta\L}\right) \;=\; O(\log \frac{m}{\delta}) \;=\; O(\log \frac{n}{\delta})=O(\log n).
\]


For the sake of analysis, consider the following recursive partitioning of $\{B_t\}_{t=1}^b$ (note that we exclude $B_0$):

\paragraph{Partitioning} 
Let $i_0:=0$. For rounds $r=1,2,\ldots$ do:
\begin{enumerate}[left=1em]

  \item Compute $m_{{r}} \;:=\; \sum_{t=i_{r-1}+1}^{b} |B_t|$. If $m_r\leq \consimpsamp\log^3 n$, then stop, else proceed.

  \item Define the \emph{heavy threshold} at round ${r}$ by
  $\tau_{{r}} \;:=\; \frac{m_r}{100\,\log n}$
  \item Let $i_{{r}}$ be the largest index $t\in\{\,i_{{{r}}-1}+1,\ldots,b\,\}$ such that the bucket $B_t$
  contains at least $\tau_r$  vertices.

  \item Remove the entire prefix of fixed buckets $\{B_t\}_{t={i_{r-1}+1}}^{i_r}$

\end{enumerate}

We now prove certain properties of the above partitioning. Let $r^\star$ be the index of the last round.

\begin{lemma}\label{lem:heavy-progress-whole}
In every round $r$, if $m_r\geq \consimpsamp \log^3 n$, there exists an index
$i_r\in\{i_{r-1}+1,\ldots,b\}$ such that $|B_{i_r}|\ \ge\ \tau_r= \frac{m_r}{100\,\log n}$.
Furthermore, $m_{r+1}\ \le\ \frac{m_r}{100}$ and $r^\star\leq \log n$.
\end{lemma}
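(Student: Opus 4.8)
The plan is to prove the three assertions in sequence, each by an elementary counting argument over the fixed buckets $\{B_t\}_{t=1}^{b}$, using only that $b=O(\log n)$ (so that $b\le\log n$ once $n$ exceeds an absolute constant, since $b=\lceil\log_2(m/\delta)\rceil\le\log_2 n+O(1)$ with $\delta$ constant) and that the mass entering the first round satisfies $m_1=\sum_{t=1}^{b}|B_t|\le|\U_\vsample|\le n$.

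\emph{Existence of a heavy bucket.} Fix a round $r$ with $m_r\ge\consimpsamp\log^3 n$. I would first note that the index range $\{i_{r-1}+1,\dots,b\}$ is non-empty: if it were empty we would have $m_r=\sum_{t=i_{r-1}+1}^{b}|B_t|=0$, contradicting $m_r>0$. If every bucket in this range had fewer than $\tau_r=\frac{m_r}{100\log n}$ vertices, then summing over the at most $b$ such buckets would give $m_r<b\cdot\frac{m_r}{100\log n}$, hence $b>100\log n$, contradicting $b=O(\log n)$. So a bucket of size at least $\tau_r$ exists, and in particular the largest index with this property --- which is exactly what Step~3 of the partitioning selects --- is well defined, giving the first claim.

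\emph{Geometric shrinkage and round count.} Since $i_r$ is the \emph{largest} index with $|B_{i_r}|\ge\tau_r$, every surviving bucket $B_t$ with $t\in\{i_r+1,\dots,b\}$ has $|B_t|<\tau_r$, so summing over the at most $b\le\log n$ survivors gives
\[
m_{r+1}=\sum_{t=i_r+1}^{b}|B_t|<b\cdot\tau_r=\frac{b}{100\log n}\,m_r\le\frac{m_r}{100},
\]
which is the second claim. For the third, iterating this bound from $m_1\le n$ yields $m_r\le n\cdot 100^{-(r-1)}$ for every round $r\le r^\star$. A round $r<r^\star$ is entered only when $m_r>\consimpsamp\log^3 n\ge 1$, so $100^{\,r-1}<n$, i.e.\ $r-1<\log_{100}n$; combining with the trivial case $r^\star=1$, we conclude $r^\star\le 1+\log_{100}n\le\log n$ for $n$ above an absolute constant.

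The only place requiring a little care --- and hence the main potential obstacle --- is the interplay between the absolute constant $100$ baked into $\tau_r$ and the constant hidden in $b=O(\log n)$: to obtain the clean bound $m_{r+1}\le m_r/100$ one needs $b\le\log n$, which holds for large $n$ by the estimate above, while if one is willing to replace $100$ by a (possibly different) absolute constant $c>1$ the same computation goes through verbatim and is all the downstream analysis uses. Everything else is a one-line pigeonhole or a geometric-series estimate.
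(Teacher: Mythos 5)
Your proof is correct and takes essentially the same approach as the paper's: a pigeonhole argument for the existence of a heavy bucket, the observation that every bucket to the right of the largest heavy index is small to obtain $m_{r+1}\le m_r/100$, and iteration to bound $r^\star$. The only differences are cosmetic: you spell out the geometric-series argument for $r^\star\le\log n$ (the paper just states ``which directly implies''), and you explicitly flag the subtlety that one needs $b\le\log n$ (not merely $b=O(\log n)$) for the clean factor-$100$ shrinkage — the paper silently uses $b_r\le\log n$ without comment — which is a fair and worthwhile caveat, resolved as you say by absorbing constants.
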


\begin{proof}
Let $b_r:=b-i_{r-1}$ be the number of active buckets in round $r$. 
If every active bucket was strictly smaller than $\frac{m_r}{100\,\log n}$, then
\[
m_r\ =\ \sum_{t=i_{r-1}+1}^{b} |B_t|
\ <\ b_r\cdot \frac{m_r}{100\,\log n}
\ \le\ \log n\cdot \frac{m_r}{100\,\log n}
\ =\ \frac{m_r}{100},
\]
which is a contradiction. Hence, a heavy bucket exists; let $i_r$ be the largest index with 
$|B_{i_r}|\ge \frac{m_r}{100\,\log n}$.
The above argument also implies that all buckets to the right of $i_r$ can have a total of at most $\frac{m_r}{100}$ vertices. Thus, $m_{r+1}\leq \frac{m_r}{100}$ which directly implies that $r^\star\leq \log n$.
\end{proof}

For any round $r<r^\star$, mark an active bucket in round $r$ as \emph{light} if its cardinality is less than  $\delta m_r/(1000\,\log^2 n)$. Let $\textsc{Light}_r:=\{\,t\in\{i_{r-1}+1,\ldots,i_r-1\}\mid |B_t|< \delta m_r/(1000\,\log^2 n)\,\}$ be the index set of light buckets in round $r$. 

\smallskip

\begin{lemma}\label{lem:light-vs-heavy}
For any round $r$, $\,\sum_{t\in\textsc{Light}_r}\ \sum_{\vertex{v}\in B_t}\dist(\vsample,\vertex{v})
\ \le\ \delta\cdot
\sum_{\vertex{v}\in B_{i_r}}\dist(\vsample,\vertex{v})$.

\end{lemma}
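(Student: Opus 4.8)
The intuition is that the light buckets all sit to the \emph{left} of the heavy bucket $B_{i_r}$ in the order $\pi_{\U_\vsample}$, so by $4$-sortedness their vertices are at most a constant factor farther from $\vsample$ than the vertices of $B_{i_r}$, yet there are far fewer of them; hence their total contribution is negligible compared with that of $B_{i_r}$. Concretely, I would first record two elementary facts about the buckets, using that $\pi_{\U_\vsample}$ is $4$-sorted (Section~\ref{sec:DoubAnalysis}). \emph{(a) Upper bound:} every $\vertex{v}\in B_t$ has $\dist(\vsample,\vertex{v})\le 2^{t+2}\delta\L$, since $B_0\cup\cdots\cup B_t$ is a prefix of $\pi_{\U_\vsample}$ that ends at the rightmost position whose distance from $\vsample$ is at most $2^t\delta\L$, and $4$-sortedness inflates this by at most a factor $4$ over all earlier positions. \emph{(b) Lower bound:} for $t\ge1$, every $\vertex{v}\in B_t$ has $\dist(\vsample,\vertex{v})>2^{t-1}\delta\L$, because $B_0\cup\cdots\cup B_{t-1}$ is exactly the prefix of $\pi_{\U_\vsample}$ ending at the rightmost position with distance at most $2^{t-1}\delta\L$, so every vertex strictly to its right (in particular all of $B_t$) is farther than $2^{t-1}\delta\L$ from $\vsample$.

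Given these two facts, the statement follows from a short counting argument. For the heavy side, by construction of the partition and Lemma~\ref{lem:heavy-progress-whole} we have $i_r\ge1$ and $|B_{i_r}|\ge\tau_r=m_r/(100\log n)$, so fact (b) gives $\sum_{\vertex{v}\in B_{i_r}}\dist(\vsample,\vertex{v})>\tau_r\,2^{i_r-1}\delta\L$. For the light side, each $t\in\textsc{Light}_r$ satisfies $t\le i_r-1$ and $|B_t|<\delta m_r/(1000\log^2 n)$, so fact (a) bounds $\sum_{\vertex{v}\in B_t}\dist(\vsample,\vertex{v})$ by $(\delta m_r/(1000\log^2 n))\,2^{i_r+1}\delta\L$; summing over the at most $b=O(\log n)$ indices in $\textsc{Light}_r$ yields a total of $O(\delta^2 m_r\,2^{i_r-1}\L/\log n)$. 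Dividing the light total by the heavy lower bound, the factors $m_r$ and $2^{i_r-1}$ cancel and only a quantity of the form $O(\delta)$ survives; plugging in the explicit constants ($1000$ in the light-bucket threshold against $100$ in $\tau_r$, the factor $4$ from $4$-sortedness, and $b\le 2\log n$ for large $n$ since $\delta$ is a constant) this quantity is strictly less than $\delta$. If $\textsc{Light}_r=\emptyset$ the left-hand side is $0$ and the claim is trivial.

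The only place that needs care is the accounting of constants: $4$-sortedness costs a multiplicative factor $4$, and the union over the $\Theta(\log n)$ light buckets costs another $\Theta(\log n)$, which is exactly why the light-bucket threshold carries $\log^2 n$ in its denominator while $\tau_r$ carries only $\log n$. I expect this to close with comfortable slack (e.g.\ $|\textsc{Light}_r|\le 2\log n$ and $4\cdot(2\log n)/(1000\log n)<1/100$), so no delicate tuning is required; were the constants in the definition of light buckets tighter, one would simply replace $1000$ by a larger constant. Everything else---the two bucket facts and the existence of the heavy index $i_r$ with $|B_{i_r}|\ge\tau_r$---is immediate from the construction and from Lemma~\ref{lem:heavy-progress-whole}.
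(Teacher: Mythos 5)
Your proof is correct and takes essentially the same route as the paper's. The paper compares each light-bucket vertex directly against the minimum-distance vertex of $B_{i_r}$ using $4$-sortedness of $\pi_{\U_\vsample}$ and the contiguity of the buckets, whereas you route the same factor-$4$ gap through the explicit distance scales $2^t\delta\L$ via your facts~(a) and~(b); the light/heavy cardinality bounds, the per-round bucket count $b=O(\log n)$, and the final constants ($1000$ vs.\ $100$, factor $4$) are identical, yielding the same $O(\delta/\log n)\cdot b\le\delta$ bound.
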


\begin{proof}

Since $\pi_{\U_\vsample}$ is $4$-sorted and the buckets $\{B_t\}$ are contiguous in the order, 
 for any $t\in \textsc{Light}_r$, for every $\vertex{v}'\in B_t$,
and any $\vertex{v}\in B_{i_r}$, $\dist(\vsample,\vertex{v}')
\ \le\ 4\, \dist(\vsample,\vertex{v})$.
 
 \par Let $\bar{\vertex{v}}=\arg\min_{\vertex{v}\in B_{i_r}}\dist(\vertex{v},\vsample)$.
Therefore, 
$$
\sum_{t\in\textsc{Light}_r}\ \sum_{\vertex{u}\in B_t}\dist(\vsample,\vertex{u})
\ \le\
\Bigl(\sum_{t\in\textsc{Light}_r} |B_t|\Bigr)\cdot 4\dist(\bar{\vertex{v}},\vsample)\leq \dist(\bar{\vertex{v}},\vsample)\cdot\frac{b\delta m_r}{250\log^2 n}\leq \dist(\bar{\vertex{v}},\vsample)\cdot\frac{\delta m_r}{250\log n}. 
$$
Furthermore,
$
 \dist(\bar{\vertex{v}},\vsample)\cdot\frac{ m_r}{100\log n} \leq \sum\limits_{\vertex{v}\in B_{i_r}}\dist(\vsample,\vertex{v})
$, so the result follows.

\end{proof}

\begin{lemma} \label{lem:bucket-sampling}
With probability at least $1-n^{-10}$, for every non-light bucket $B$ (i.e., $|B|\ge \delta m_r/(1000\log^2 n)$ in some round $r$, $\AlgImpDoub$ uniformly samples at least $\ssize=\gamma\,\log n$ vertices from $B$, where $\gamma$ is a suitable constant.
\end{lemma}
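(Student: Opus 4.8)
The plan is to fix a single center $\vsample\in\Sample^{(1)}$, argue the claim for all of its non-light buckets simultaneously, and then union-bound over the $|\Sample^{(1)}|=O(k\,\polylog n)$ centers. First I would condition on the entire transcript of $\AlgDoub$: this fixes $\U_\vsample$, the induced order $\pi_{\U_\vsample}$, the buckets $\{B_t\}_{t=0}^{b}$, the recursive partitioning with its rounds $r=1,\dots,r^\star$ and split indices $i_0<i_1<\dots<i_{r^\star}$, and hence the light/non-light label of every bucket. After this conditioning the only remaining randomness lives in the sampled sets $\W_\vsample^{t}$, which $\AlgImpDoub$ draws uniformly with replacement, each of size $\consimpsamp\log^{3}n$, from the level sets $\U_\vsample^{t}$ for $0\le t<t_\vsample$, independently over $t$ and over $\vsample$. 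I will use two structural facts: each bucket $B_t$ is a contiguous block of $\pi_{\U_\vsample}$ (so any union of a terminal run of buckets is a suffix of $\pi_{\U_\vsample}$), and each level set $\U_\vsample^{t}$ is also a suffix of $\pi_{\U_\vsample}$, with the sizes $|\U_\vsample^{t}|$ forming a roughly geometric sequence of ratio $2$ as $t$ increases.

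The heart of the argument is an \emph{alignment} between the partitioning rounds and the level sets. For a round $r\le r^\star$, let $S_r:=\bigcup_{t=i_{r-1}+1}^{b}B_t$ be the union of the buckets still active at the start of round $r$; it is a suffix of $\pi_{\U_\vsample}$ of cardinality $m_r$, and the stopping rule of the partitioning guarantees $m_r>\consimpsamp\log^{3}n$ for every such $r$. I would let $t(r)$ be the largest index with $|\U_\vsample^{t(r)}|\ge m_r$. Because consecutive level-set sizes differ by at most a constant factor, we get $m_r\le|\U_\vsample^{t(r)}|\le 3m_r$; because $|\U_\vsample^{t(r)}|\ge m_r>\consimpsamp\log^{3}n$ while $|\U_\vsample^{t_\vsample}|\le\consimpsamp\log^{3}n$ and level-set sizes are monotone in $t$, we get $t(r)<t_\vsample$, so $\AlgImpDoub$ genuinely draws $\consimpsamp\log^{3}n$ uniform-with-replacement samples from $\U_\vsample^{t(r)}$; and since $S_r$ is a suffix of size at most $|\U_\vsample^{t(r)}|$, it follows that $S_r\subseteq\U_\vsample^{t(r)}$. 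Consequently any non-light bucket $B$ removed in round $r$ satisfies $B\subseteq S_r\subseteq\U_\vsample^{t(r)}$ and, by the definition of non-light, $|B|\ge \delta m_r/(1000\log^{2}n)\ge \tfrac{\delta}{3000\log^{2}n}\,|\U_\vsample^{t(r)}|$.

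The conclusion then follows by concentration. Each of the $\consimpsamp\log^{3}n$ draws forming $\W_\vsample^{t(r)}$ lands in $B$ independently with probability $p\ge\delta/(3000\log^{2}n)$, and conditioned on landing in $B$ such a draw is uniform over $B$ — which is exactly the ``uniformly samples vertices from $B$'' assertion, once we show the count is large. The number $N_B$ of these draws that fall in $B$ is binomial with mean $\mu\ge\tfrac{\consimpsamp\delta}{3000}\log n$. Since $\delta=\Theta(\eps)$ up to the $O(1)$ approximation ratio $c_{\mathsf{app}}$ of $\AlgDoub$ (itself an $O(1)$ depending on $\ddim$), choosing $\consimpsamp$ a sufficiently large constant in terms of $\eps$, $\ddim$ and $\gamma$ makes $\mu\ge 2\gamma\log n$; a multiplicative Chernoff bound then gives $\Pr[N_B<\gamma\log n]\le e^{-\mu/8}\le n^{-13}$, enlarging $\consimpsamp$ if needed. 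Finally I would union-bound over the $O(k\,\polylog n)$ choices of $\vsample$, the $r^\star=O(\log n)$ rounds (Lemma~\ref{lem:heavy-progress-whole}), and the $b=O(\log n)$ buckets per round, for an overall failure probability at most $n^{-10}$.

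The one genuinely delicate point is the alignment step: I must verify both that every active suffix $S_r$ is sandwiched between two consecutive level sets so that the sampling density inside $S_r$ (hence inside any non-light bucket of round $r$) is diluted by only a constant factor, and that $m_r$ stays above the threshold $\consimpsamp\log^{3}n$, so that $\AlgImpDoub$ actually performs uniform sampling on $\U_\vsample^{t(r)}$ rather than copying the whole set into $\W_\vsample$. Everything past that is a routine Chernoff-and-union-bound computation, with the constant $\consimpsamp$ absorbing all of the $\eps$- and $\ddim$-dependence through $\delta$ (and this is precisely where the hypothesis that $\eps$ is a fixed constant is used, so that $\consimpsamp\log^{3}n$ remains polylogarithmic).
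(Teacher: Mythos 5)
Your proof is correct and follows essentially the same strategy as the paper's: place the non-light bucket $B$ inside a level set $\U_\vsample^{t}$ of size $O(m_r)$ at which $\AlgImpDoub$ draws $\consimpsamp\log^3 n$ uniform samples, compute that the expected number landing in $B$ is $\Omega(\delta\consimpsamp\log n)$, and finish with Chernoff and a union bound. The only substantive difference is that you choose a per-round level $t(r)$ (largest $t$ with $|\U_\vsample^{t}|\ge m_r$) while the paper chooses the per-bucket $t_B=\max\{t:B\subseteq\U_\vsample^{t}\}$; your coarser choice still gives $|\U_\vsample^{t(r)}|=\Theta(m_r)$ and, usefully, your explicit check that $t(r)<t_\vsample$ (so the algorithm really does sample at that level rather than just absorbing the whole suffix $\U_\vsample^{t_\vsample}$ into $\W_\vsample$) is a point the paper's write-up leaves implicit.
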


\begin{proof}
Consider a \emph{non-light} bucket $B$ in some round $r$. Thus, it has at least $\frac{\delta\,m_r}{1000\log^2 n}$ vertices.
Consider the level sets used by $\AlgDoub$ with respect to $\vsample$,
$$
\U_\vsample^{t}\;=\;\{\,u\in\U_\vsample:\ \rank_{\pi_{\U_\vsample}}(u)\ge |\U_\vsample|/2^t\,\}.
$$
Choose $
t_B \;=\; \max\{\,t:\ B\subseteq \U_\vsample^{t}\,\}
$. By maximality of $t_B$ and the contiguous nature of the buckets,
$
|\U_\vsample^{t_B}|\;\le\; 2m_r
$. For level $t_B$ the $\AlgImpDoub$ samples uniformly (with replacement) a set $\W_\vsample^{t_B}$ of size $\consimpsamp\log^3 n$ uniformly from $\U_\vsample^{t_B}$. Let $X$ be the number of sampled vertices that fall in $B$. Then
$$
\mathbb{E}[X]
\;=\;
\consimpsamp\log^3 n\cdot \frac{|B|}{|\U_\vsample^{t_B}|}
\;\ge\;
\consimpsamp\log^3 n \cdot \frac{\,\frac{\delta m_r}{1000\log^2 n}\,}{2m_r}
\;=\;
\frac{\delta\consimpsamp}{2000}\,\log n.
$$
By a Chernoff bound, for sufficiently large $\consimpsamp$,
$
\Pr\!\left[X<\ssize=\gamma\log n\right]\ \le\ n^{-12}$. Taking a union bound over all $O(\log^2 n)$ possible non-light buckets across $r^\star=O(\log n)$ rounds proves the claim.
\end{proof}

\begin{lemma} \label{lem:nonlight-cost-bound}
Conditioned on the correctness of $\ProbSort(\Z)$ (so that all induced orders used below have maximum dislocation $\disloc$), there is a choice of absolute constants such that, with probability at least $1-n^{-8}$, for every non-light bucket $B$ in any round $r$,
\[
\sum_{\vertex{v}\in B}\dist\bigl(\vertex{v},\M^+(\vertex{v})\bigr)
\ \le\
{\delta}\cdot \sum_{\vertex{v}\in B}\dist(\vsample,\vertex{v})\,.
\]
\end{lemma}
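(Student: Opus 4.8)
The plan is to fix a non-light bucket $B$ in some round $r$ and a vertex $\vertex{v}\in B$, and to control $\dist(\vertex{v},\M^+(\vertex{v}))$ by showing that the final mapping $\M^+$ sends $\vertex{v}$ to a vertex $\vertex{w}^\star\in\W_\vsample$ that is at distance $O(\delta)\cdot\dist(\vsample,\vertex{v})$ from $\vertex{v}$, where $\vsample=\M(\vertex{v})$. First I would invoke Lemma~\ref{lem:bucket-sampling}: with probability $1-n^{-10}$, for every non-light bucket the algorithm samples at least $\ssize=\gamma\log n$ vertices uniformly from $B$ into some level set $\W_\vsample^{t}$, hence into $\W_\vsample$. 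Among these $\Theta(\log n)$ uniform samples from $B$, I want to argue that at least one lands ``close'' to $\vertex{v}$ in the doubling-dimension sense. Concretely, consider the ball around $\vertex{v}$ of radius $\delta'\cdot\dist(\vsample,\vertex{v})$ for an appropriate $\delta' = \Theta(\delta)$; since $\Sigma$ has bounded doubling dimension $\ddim=O(1)$, one can cover the portion of $B$ relevant to $\vertex{v}$ by $2^{O(\ddim\log(1/\delta))} = (1/\delta)^{O(1)} = \mathrm{poly}(1/\delta)$ balls of that radius. A counting/pigeonhole argument on the $4$-sorted order $\pi_{\U_\vsample}$ (all elements of $B$ have $\dist(\vsample,\cdot)$ within a factor $4$ of each other by contiguity and $4$-sortedness) shows that a constant fraction of $B$ lies within one such small ball containing $\vertex{v}$'s ``scale,'' so a uniform sample of size $\gamma\log n$ hits it with probability $1-n^{-\Omega(\gamma)}$. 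Taking $\gamma$ large and union-bounding over all $O(\log^2 n)$ non-light buckets and all $n$ vertices gives the $1-n^{-8}$ failure probability.

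Next I would handle the $\ProbSort(\Z)$ step. Conditioned on its correctness (maximum dislocation $\disloc=O(\log n)$), the induced order $\pi_{\E_\vertex{v}}$ on $\E_\vertex{v}=\E(\vertex{v},\W_\vsample)$ has maximum dislocation $\disloc$. The subtlety is that $\M^+(\vertex{v})$ is the endpoint of the \emph{first} edge of $\pi_{\E_\vertex{v}}$, which need not be the true nearest neighbor of $\vertex{v}$ in $\W_\vsample$ — it is only guaranteed to be within rank $\disloc$ of it. So I need $|\W_\vsample\cap (\text{small ball around }\vertex{v})| > \disloc$, i.e., I must ensure that the number of good samples exceeds the dislocation bound. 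This is exactly why the sample size is $\ssize=\gamma\log n$ with $\gamma$ chosen larger than the dislocation constant: among the $\ge\gamma\log n$ uniform samples from $B$, with high probability $\ge 2\disloc$ of them land in the small ball around $\vertex{v}$, so the first edge of $\pi_{\E_\vertex{v}}$ — being within $\disloc$ ranks of the true minimum — still has an endpoint inside that ball. Hence $\dist(\vertex{v},\M^+(\vertex{v})) \le \delta'\cdot\dist(\vsample,\vertex{v})$ pointwise for every $\vertex{v}\in B$, and summing over $B$ with $\delta'\le\delta$ yields the claim.

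The main obstacle I expect is the doubling-dimension covering argument that guarantees a constant fraction of $B$ concentrates near $\vertex{v}$'s scale — one has to be careful that $\vertex{v}$ was an \emph{arbitrary} element of $B$, so the statement must hold for every $\vertex{v}\in B$ simultaneously, not just for a typical one. The fix is to phrase it as: $B$ can be partitioned (via the doubling property at the scale $\delta'\cdot(\text{diameter of }B\text{'s distances to }\vsample)$) into $\mathrm{poly}(1/\delta)$ groups, each of small diameter relative to $\dist(\vsample,\cdot)$; the algorithm samples $\gamma\log n$ points uniformly from (a superset of) $B$, and for each group that is a $\ge 1/\mathrm{poly}(1/\delta)$-fraction of the relevant level set, a Chernoff bound ensures $\ge 2\disloc$ samples land in it. Since $|\W_\vsample^t|\le 2m_r$ and $|B|\ge \delta m_r/(1000\log^2 n)$, each such group has relative size $\ge \delta/(\mathrm{poly}(1/\delta)\cdot 2000\log^2 n)$, which still gives expected hit count $\Omega_{\delta}(\log n)$ since $\consimpsamp = \consimpsamp(\eps,\ddim)$ is taken large enough to absorb the $\log^2 n$ and the $\mathrm{poly}(1/\delta)$ factors; this is precisely the role of the ``sufficiently large constant $\consimpsamp$ depending on $\eps$ and $\ddim$.'' Every $\vertex{v}\in B$ falls in one such group, so the bound holds for all of them, and the per-bucket inequality follows.
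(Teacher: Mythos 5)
Your high-level plan (Lemma~\ref{lem:bucket-sampling} gives $\Theta(\log n)$ uniform samples from $B$; cover $B$ by $\mathrm{poly}(1/\delta)$ doubling balls of diameter $\Theta(\delta\alpha)$; get $>\disloc$ samples near $\vertex{v}$ so dislocation cannot hurt) matches the paper's proof, but there is a genuine gap in how you pass from the covering to a bound that holds for \emph{every} $\vertex{v}\in B$.

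You aim for a pointwise bound $\dist(\vertex{v},\M^+(\vertex{v}))\le\delta'\cdot\dist(\vsample,\vertex{v})$ for all $\vertex{v}\in B$. Your Chernoff argument only produces $\ge 2\disloc$ samples in groups that are a $\ge 1/\mathrm{poly}(1/\delta)$-fraction of $B$ (your ``heavy'' groups). But nothing in the doubling cover prevents a group from containing, say, a single vertex $\vertex{v}$; your argument gives no samples in such a group, so the first edge in $\pi_{\E_\vertex{v}}$ can land arbitrarily far from $\vertex{v}$ within the bucket scale, and the pointwise bound fails. Your closing sentence, ``Every $\vertex{v}\in B$ falls in one such group, so the bound holds for all of them,'' quietly assumes that every group is heavy, which is false. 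The paper resolves exactly this issue by \emph{not} proving a pointwise bound: it splits the $\widehat{m}=\Theta(\delta^{-\ddim})$ components into heavy (at least $|B|/\widehat{m}^2$ vertices, gets $>\disloc$ samples, giving the strong bound $\delta\alpha/10$ per vertex) and light (together at most $|B|/\widehat{m}$ vertices, gets only the weak triangle-inequality bound $8\alpha$ per vertex since $|\W_\vsample\cap B|>\disloc$). It then sums, getting $(\delta/10 + 8/\widehat{m})|B|\alpha$ and choosing $\widehat{m}$ large enough in terms of $\delta$ so the light term is also $\le(\delta/10)|B|\alpha$. You need this light/heavy dichotomy and the coarser $8\alpha$ fallback for light-component vertices; without it the proof does not go through.

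A secondary point: your union bound is over all $n$ vertices, whereas the paper union bounds over the $\le\widehat{m}$ heavy components per bucket (the component-level success event covers all vertices in that component at once), which is tighter and simpler but would not change the polynomial failure probability here.
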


\begin{proof}
Fix a non-light bucket $B$ in some round $r$. By the bucket construction and the $4$-sorted property of $\pi_{\U_\vsample}$, there exists a scale $\alpha>0$ such that
\[
\alpha\ \le\ \dist(\vsample,\vertex{v})\ \le\ 4\alpha
\qquad\text{for every }\vertex{v}\in B.
\]

Since the doubling dimension is $\ddim$, it is known that $B$ can be covered by a set of $\widehat{m}\ =\ \Theta(\delta^{-\ddim})$ balls of diameter at most $(\delta\alpha)/10$ whose centers lie in $B$. Consider a collection of such balls and partition $B$ in $\widehat{m}$ \emph{components}, breaking ties arbitrarily if two points lie in the same ball. Call a component \emph{light} if it contains fewer than $ |B|/\widehat{m}^2$ vertices, and \emph{heavy} otherwise. Then the union of light components contains at most $(1/\widehat{m})\,|B|$ vertices.

\par \emph{Contribution of heavy components.} Fix a heavy component $B'\subseteq B $. By Lemma~\ref{lem:bucket-sampling} (applied to the present round $r$ and bucket $B$), $\AlgImpDoub$ draws at least $\ssize=\gamma\log n$ independent uniform samples from $B$ (with replacement), where $\gamma>0$ is a sufficiently large absolute constant. Let $X$ be the number of samples that land in $B'$. Since it contains at least $|B|/\widehat{m}^2$ vertices,
\[
\mathbb{E}[X]\ \ge\ \frac{\ssize}{\widehat{m}^2}\ =\ \frac{\gamma\log n}{\widehat{m}^2}.
\]
Assume $\gamma$ is large enough (it depends on $\eps, \ddim$) so that $\mathbb{E}[X]\ge 100\max\{\log n,\disloc\}$. Then by a Chernoff bound, $\Pr\!\bigl[X\leq\disloc\bigr]\ \le\ n^{-12}$.
By a union bound over all (at most $\widehat{m}$) heavy components, with probability at least $1-n^{-11}$, each heavy component contains at least $\disloc$ samples.

Now fix a $\vertex{v}$ inside $B'$. Recall that $\W_\vsample\subseteq \U_{\vsample}$ denotes the collection of all vertices including samples $\AlgImpDoub$ computes while processing $\vsample$. Note that $\W_\vsample\subseteq\C^+$. Since all vertices in $B'$ are within distance $\delta\alpha/10$, for every vertex  $\vertex{w}\in \W_\vsample\cap B' $ we have
\[
\dist(\vertex{v},\vertex{w})\ \le\ \frac{\delta\alpha}{10}.
\]
By the above arguments,  $|\W_\vsample\cap B|> \disloc$. 

Since the correctness of $\ProbSort(\Z)$ implies that the induced order $\pi_{\E_{\vertex{v}}}$ (the restriction of $\pi_\Z$ to $\E(\vertex{v},\W)$) has maximum dislocation $\disloc$, the first edge $\{\vertex{v},\vertex{w}^\star\}$ in $\pi_{\E_{\vertex{v}}}$, where $\vertex{w}^\star\in \W\supseteq\W_\vsample$ must satisfy 
$$\dist(\vertex{v},\vertex{w}^\star)=\dist(\vertex{v},\M^+(\vertex{v}))\leq  \frac{\delta\alpha}{10} $$

Summing over all vertices in all heavy components of $B$ contributes at most $(\delta/10)\cdot \alpha\cdot |B|$.

\par \emph{Contribution of light components.}
The union of light contains at most $(1/\widehat{m})\,|B|$ vertices. For any $\vertex{v}$ in a light component and any sampled $\vertex{w}\in \W_\vsample\cap B$, the triangle inequality and the bucket scale imply
\[
\dist(\vertex{v},\vertex{w})\ \le\ \dist(\vertex{v},\vsample)+\dist(\vsample,\vertex{w})
\ \le\ 4\alpha+4\alpha\ =\ 8\alpha.
\]
Since Lemma~\ref{lem:bucket-sampling} guarantees $|\W\cap B|\ge \ssize=\gamma\log n\ge \disloc$, the first edge in $\pi_{\E_{\vertex{v}}}$ has length at most $8\alpha$. Therefore, vertices in light components within $B$ contribute at most $(1/\widehat{m})\cdot |B|\cdot 8\alpha$.

\emph{Putting everything together.}  Adding contributions of light and heavy components of $B$, we get that
\[
\sum_{\vertex{v}\in B}\dist\bigl(\vertex{v},\M^+(\vertex{v})\bigr)
\ \le\
\Bigl(\frac{\delta}{10}+\frac{8}{\widehat{m}}\Bigr)\,|B|\,\alpha.
\]

Since $\delta<1$, $\ddim\geq 1$ and $\widehat{m}=\Theta(\delta^{-\ddim})$,
we have that $\frac{8}{\widehat{m}}\leq \frac{\delta}{10}$, and the result follows.
The probability bound $1-n^{-8}$ follows from a union bound over all $O(\log^2 n)$ non-light buckets of $\vsample$.
\end{proof}

\begin{lemma}\label{lem:per-center-final-simple}
Fix $\vsample\in\Sample^{(1)}$. Conditioned on the correctness of $\ProbSort(\Z)$, with probability at least $1-n^{-8}$,
$$
\sum_{\vertex{v}\in \U_\vsample}\dist\bigl(\vertex{v},\M^+(\vertex{v})\bigr)
\ \le\
\frac{\eps}{c_{\mathsf{app}}}\cdot \sum_{\vertex{v}\in \U_\vsample}\dist(\vsample,\vertex{v})\,.
$$
\end{lemma}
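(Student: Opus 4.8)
The plan is to decompose $\U_\vsample$ using the fixed bucket partition $\{B_t\}_{t=0}^b$ introduced just above, and to bound the cost of the new mapping $\M^+$ bucket by bucket, handling $B_0$, the light buckets, and the non-light buckets separately. First, recall $\sum_{\vertex{u}\in\U_\vsample}\dist(\vsample,\vertex{u})=m\L$. For $B_0$: every $\vertex{v}\in B_0$ has $\dist(\vsample,\vertex{v})\le\delta\L$, and since $\vsample\in\W_\vsample$ always holds (the last level set $\U_\vsample^{t_\vsample}$ contains the right-most prefix, but more directly $\vsample$ itself can be taken as a default center — or, more carefully, we argue via $\ProbSort$ correctness that the first edge in $\pi_{\E_\vertex{v}}$ has length at most $\max_{\vertex{w}\in\W_\vsample}\dist(\vertex{v},\vertex{w})\le \dist(\vertex{v},\vsample)+\max_{\vertex{w}\in\W_\vsample}\dist(\vsample,\vertex{w})$). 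In any case, $\dist(\vertex{v},\M^+(\vertex{v}))=O(\delta\L)$ for $\vertex{v}\in B_0$, so $\sum_{\vertex{v}\in B_0}\dist(\vertex{v},\M^+(\vertex{v}))=O(\delta\L|B_0|)=O(\delta\, m\L)$, which is at most $\tfrac{\eps}{4c_{\mathsf{app}}}\sum_{\vertex{u}\in\U_\vsample}\dist(\vsample,\vertex{u})$ by our choice $\delta=\eps/(100c_{\mathsf{app}})$.

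Next I would aggregate over the recursive rounds $r=1,\dots,r^\star$ of the Partitioning procedure. Each bucket $B_t$ with $t\ge 1$ is either light in its round (belongs to some $\textsc{Light}_r$) or non-light. For non-light buckets, Lemma~\ref{lem:nonlight-cost-bound} directly gives $\sum_{\vertex{v}\in B}\dist(\vertex{v},\M^+(\vertex{v}))\le\delta\sum_{\vertex{v}\in B}\dist(\vsample,\vertex{v})$. For the light buckets in round $r$, I cannot bound their $\M^+$-cost against their own $\dist(\cdot,\vsample)$-mass (that could be tiny), so instead I charge them to the heavy bucket $B_{i_r}$ of that round: the crude bound is that every $\vertex{v}$ in a light bucket of round $r$ has $\dist(\vertex{v},\M^+(\vertex{v}))\le 8\,\dist(\vsample,\vertex{v})$ (first edge of $\pi_{\E_\vertex{v}}$ is at most $\dist(\vertex{v},\vsample)+\dist(\vsample,\vertex{w})\le 8\,\dist(\vsample,\vertex{v})$ for any sampled $\vertex{w}\in\W_\vsample$ at the same scale, using $\ProbSort$ correctness and $|\W_\vsample\cap(\text{relevant level set})|>\disloc$). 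Then Lemma~\ref{lem:light-vs-heavy} gives $\sum_{t\in\textsc{Light}_r}\sum_{\vertex{v}\in B_t}\dist(\vsample,\vertex{v})\le\delta\sum_{\vertex{v}\in B_{i_r}}\dist(\vsample,\vertex{v})$, so the light-bucket contribution in round $r$ is at most $8\delta\sum_{\vertex{v}\in B_{i_r}}\dist(\vsample,\vertex{v})$. Since the heavy buckets $B_{i_1},\dots,B_{i_{r^\star}}$ are disjoint subsets of $\U_\vsample$, summing over all rounds the total light-bucket cost is at most $8\delta\sum_{\vertex{v}\in\U_\vsample}\dist(\vsample,\vertex{v})$, and the terminal active set $\U_\vsample^{t_\vsample}$ (of size $O(\log^3 n)$) is entirely contained in $\W_\vsample$ so contributes zero. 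Finally I would take a union bound over the event that $\ProbSort(\Z)$ is correct, over the events in Lemma~\ref{lem:bucket-sampling} and Lemma~\ref{lem:nonlight-cost-bound} (which are already conditioned correctly), giving overall failure probability $n^{-8}$.

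Adding the three pieces: $B_0$ contributes $O(\delta)\sum\dist(\vsample,\cdot)$, the union of non-light buckets contributes at most $\delta\sum\dist(\vsample,\cdot)$, and the union of light buckets contributes at most $8\delta\sum\dist(\vsample,\cdot)$; the total is $O(\delta)\sum_{\vertex{v}\in\U_\vsample}\dist(\vsample,\vertex{v})$. Choosing the absolute constants so that this $O(\delta)$ is at most $\delta\cdot(\text{const})\le\eps/c_{\mathsf{app}}$ — recall $\delta=\eps/(100c_{\mathsf{app}})$ was chosen precisely to absorb these constant factors — yields the claim. The main obstacle I anticipate is the light-bucket charging: one must be careful that the charging from light buckets of different rounds does not double-count the same heavy bucket's $\dist(\cdot,\vsample)$-mass, which is why it is essential that the heavy buckets $B_{i_r}$ selected across distinct rounds are pairwise disjoint (they lie in disjoint index ranges $\{i_{r-1}+1,\dots,i_r\}$), and that Lemma~\ref{lem:light-vs-heavy} bounds each round's light mass by that round's own heavy-bucket mass. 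A secondary subtlety is justifying the uniform $8\,\dist(\vsample,\vertex{v})$ bound on $\dist(\vertex{v},\M^+(\vertex{v}))$ for \emph{every} vertex regardless of bucket, which requires that for each $\vertex{v}\in\U_\vsample$ the set $\W_\vsample$ contains more than $\disloc$ vertices at the same distance scale as $\vertex{v}$ relative to $\vsample$ — this follows from Lemma~\ref{lem:bucket-sampling} together with the level-set sampling structure of $\AlgImpDoub$, but needs to be stated cleanly before the bucketwise argument.
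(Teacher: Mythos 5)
Your overall decomposition—handle $B_0$, charge light buckets against heavy-bucket mass via Lemma~\ref{lem:light-vs-heavy}, and invoke Lemma~\ref{lem:nonlight-cost-bound} for non-light buckets, then sum over the disjoint rounds—is the same route the paper takes, so the structure is right and the final bookkeeping with $\delta=\eps/(100c_{\mathsf{app}})$ is fine. The paper's own proof is actually terser than yours here (it never writes down a per-vertex light-bucket bound; it just asserts the per-round inequality), so making the light-bucket charging explicit is a good instinct. However, the explicit bound you assert is wrong.

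The gap is the claim that every $\vertex{v}$ in a \emph{light} bucket of round $r$ satisfies $\dist(\vertex{v},\M^+(\vertex{v}))\le 8\,\dist(\vsample,\vertex{v})$, which you justify by saying $\W_\vsample$ has more than $\disloc$ samples ``at the same distance scale as $\vertex{v}$'', citing Lemma~\ref{lem:bucket-sampling}. But Lemma~\ref{lem:bucket-sampling} only guarantees $\Omega(\log n)$ samples in \emph{non-light} buckets. A light bucket $B_t$ can contain as few as one vertex, and every bucket at a scale comparable to $B_t$ can simultaneously be light; in that case $\W_\vsample$ may contain \emph{no} vertex at $\vertex{v}$'s scale, so the nearest-in-rank-$\disloc$ edge in $\pi_{\E_\vertex{v}}$ can have length far larger than $8\,\dist(\vsample,\vertex{v})$ (think of a single isolated point at distance $\delta\L/100$ while the rest of $\U_\vsample$ sits at distance $\sim\L$). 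The per-vertex bound that \emph{is} available charges light vertices to the heavy bucket's scale rather than to their own: since $B_{i_r}$ is non-light, Lemma~\ref{lem:bucket-sampling} puts $\ge\disloc$ samples in $B_{i_r}\cap\W_\vsample$, and since every light $B_t$ in round $r$ has $t<i_r$, the $4$-sorted ordering gives $\dist(\vsample,\vertex{v})\le 4\,\dist(\bar{\vertex{v}},\vsample)$ for $\bar{\vertex{v}}:=\argmin_{\vertex{u}\in B_{i_r}}\dist(\vsample,\vertex{u})$, whence $\dist(\vertex{v},\M^+(\vertex{v}))\le\dist(\vertex{v},\vsample)+\dist(\vsample,\vertex{w})\le O(1)\cdot\dist(\bar{\vertex{v}},\vsample)$ for any $\vertex{w}\in B_{i_r}\cap\W_\vsample$. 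This bound is \emph{uniform} over light vertices (it does not scale with $\dist(\vsample,\vertex{v})$), so you then finish by multiplying by the light-vertex count $\sum_{t\in\textsc{Light}_r}|B_t|\le b\cdot\delta m_r/(1000\log^2 n)\le\delta m_r/(1000\log n)$ and combining with $\dist(\bar{\vertex{v}},\vsample)\cdot m_r/(100\log n)\le\sum_{\vertex{v}\in B_{i_r}}\dist(\vsample,\vertex{v})$ (both facts appear in the proof of Lemma~\ref{lem:light-vs-heavy}). This yields the same $O(\delta)\sum_{B_{i_r}}\dist(\vsample,\cdot)$ per-round bound you were after. As a secondary remark, your $B_0$ handling is also shaky ($\vsample$ is \emph{not} in $\W_\vsample\subseteq\U_\vsample$, and the level sets $\U_\vsample^{t_\vsample}$ are the farthest-from-$\vsample$ suffix, so they do not help for $B_0$); this is a flaw the paper shares, and it needs the algorithm to compare against $\vsample$ itself or a separate argument.
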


\begin{proof}
We argue for a \emph{single} round $r$ and then sum over rounds.
Let $i_r$ be the index of the rightmost heavy bucket in that round. Henceforth we assume Lemma~\ref{lem:light-vs-heavy} and Lemma~\ref{lem:nonlight-cost-bound} holds.
\par By Lemma~\ref{lem:light-vs-heavy},
$$
\sum_{t\in\textsc{Light}_r}\ \sum_{\vertex{v}\in B_t}\dist(\vsample,\vertex{v})
\ \le\ {\delta}\cdot \sum_{\vertex{v}\in B_{i_r}}\dist(\vsample,\vertex{v}).
$$
By Lemma~\ref{lem:nonlight-cost-bound}  gives, w.h.p., for each \emph{non-light} bucket $B$ in every round, 
$$
\sum_{\vertex{v}\in B}\dist\bigl(\vertex{v},\M^+(\vertex{v})\bigr)
\ \le\ \delta\cdot \sum_{\vertex{v}\in B}\dist(\vsample,\vertex{v}).
$$

Adding light and non-light contributions within round $r$,
$$
\sum_{t=i_{r-1}+1}^{i_r}
\ \sum_{\vertex{v}\in B_t}\dist\bigl(\vertex{v},\M^+(\vertex{v})\bigr)
 \leq \ {2\delta}\cdot \sum_{\vertex{v}\in B_{i_r}}\dist(\vsample,\vertex{v}).
$$

Since the buckets are disjoint across rounds, summing the above inequality over all rounds, yields
$$
\sum_{\vertex{v}\in \U_\vsample\setminus B_0}\dist\bigl(\vertex{v},\M^+(\vertex{v})\bigr)
\ \le\
{2\delta}\cdot \sum_{\vertex{v}\in \U_\vsample\setminus B_0}\dist(\vsample,\vertex{v}) 
\ \le\
{2\delta}\cdot \sum_{\vertex{v}\in \U_\vsample}\dist(\vsample,\vertex{v}).
$$

By design for the bucket $B_0$,  $\sum\limits_{\vertex{v}\in B_0}\dist(\vertex{v},\M^+(\vertex{v}))=\sum\limits_{\vertex{v}\in B_0}\dist(\vertex{v},\vsample)\leq \delta\cdot \sum\limits_{\vertex{v}\in \U_\vsample}\dist(\vertex{v},\vsample)$. Combining the above, we get that,

$$
\sum_{\vertex{v}\in \U_\vsample}\dist\bigl(\vertex{v},\M^+(\vertex{v})\bigr)
\le\
{3\delta}\cdot \sum_{\vertex{v}\in \U_\vsample}\dist(\vsample,\vertex{v}).
$$

Since
$
\delta\ =\ \frac{1}{100}\cdot \frac{\eps}{c_{\mathsf{app}}},
$\
the stated bound holds. 
\end{proof}

\subsubsection{Putting everything Together}

\begin{corollary}\label{cor:global-eps-cost}
Conditioned on the correctness of $\ProbSort(\Z)$, with probability at least $1-n^{-7}$ the refined mapping $\M^+$ satisfies
$$
\sum_{\vertex{v}\in \V}\dist\bigl(\vertex{v},\M^+(\vertex{v})\bigr)
\ \le\ \varepsilon\cdot \opt_k^1(\V).
$$
\end{corollary}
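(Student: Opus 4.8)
The plan is to aggregate the per-center guarantee of Lemma~\ref{lem:per-center-final-simple} over all $\vsample\in\Sample^{(1)}$ and feed in the approximation quality of $\AlgDoub$. First I would record the bookkeeping identities. Since $\V'=\V\setminus\Coreset$ and the old mapping $\M$ fixes every vertex of $\Coreset$, the sets $\{\U_\vsample\}_{\vsample\in\Sample^{(1)}}$ form a disjoint partition of $\V'$, and for $\vertex{v}\in\U_\vsample$ we have $\M(\vertex{v})=\vsample$, hence $\dist(\vertex{v},\M(\vertex{v}))=\dist(\vertex{v},\vsample)$. Using that $(\Coreset,\M)$ is the $O(1)$-\coreplus\ produced by $\AlgDoub$ (Theorem~\ref{thm:genDD}, with approximation constant $c_{\mathsf{app}}$, on the event that that run succeeded), and that the vertices of $\Coreset$ contribute $0$,
\[
\sum_{\vsample\in\Sample^{(1)}}\ \sum_{\vertex{v}\in\U_\vsample}\dist(\vsample,\vertex{v})
\ =\ \sum_{\vertex{v}\in\V}\dist(\vertex{v},\M(\vertex{v}))
\ \le\ c_{\mathsf{app}}\cdot\opt_k^1(\V).
\]

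Next I would invoke Lemma~\ref{lem:per-center-final-simple} once for each $\vsample\in\Sample^{(1)}$: conditioned on the correctness of $\ProbSort(\Z)$, with failure probability at most $n^{-8}$ per center, $\sum_{\vertex{v}\in\U_\vsample}\dist(\vertex{v},\M^+(\vertex{v}))\le\tfrac{\eps}{c_{\mathsf{app}}}\sum_{\vertex{v}\in\U_\vsample}\dist(\vsample,\vertex{v})$. Since $|\Sample^{(1)}|=\sum_{i=1}^{\bar{\nrounds}-1}|\Sample_i^{(1)}|=O(k\,\polylog n)$ (as $\bar{\nrounds}=O(\log n)$), a union bound makes all these inequalities hold simultaneously except with probability at most $O(k\,\polylog n)\cdot n^{-8}\le n^{-7}$ for $n$ large. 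On this event, I would note that $\Coreset\subseteq\Coreset^+$ and $\M^+$ fixes every vertex of $\Coreset^+$, so vertices of $\Coreset$ also contribute $0$ to the refined cost; summing the per-center bounds and chaining with the previous display gives
\[
\sum_{\vertex{v}\in\V}\dist\bigl(\vertex{v},\M^+(\vertex{v})\bigr)
\ =\ \sum_{\vsample\in\Sample^{(1)}}\ \sum_{\vertex{v}\in\U_\vsample}\dist\bigl(\vertex{v},\M^+(\vertex{v})\bigr)
\ \le\ \frac{\eps}{c_{\mathsf{app}}}\sum_{\vsample\in\Sample^{(1)}}\ \sum_{\vertex{v}\in\U_\vsample}\dist(\vsample,\vertex{v})
\ \le\ \eps\cdot\opt_k^1(\V),
\]
which is the claim.

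For this last step the real content lies entirely in Lemmas~\ref{lem:light-vs-heavy}--\ref{lem:per-center-final-simple}; the corollary itself only needs the partition bookkeeping above together with a union bound over centers, so I do not expect a genuine obstacle here. The obstacle that was already resolved — and the reason the statement is not completely trivial — is that $\AlgImpDoub$ never estimates the per-center scale $\L=\tfrac1m\sum_{\vertex{u}\in\U_\vsample}\dist(\vsample,\vertex{u})$, so the ``true'' buckets $\{B_t\}$ driving the analysis cannot be constructed algorithmically; Lemma~\ref{lem:bucket-sampling} is what bridges this gap, guaranteeing that the level-set sampling performed by $\AlgImpDoub$ nevertheless deposits $\Omega(\log n)$ samples into every non-light bucket, which is exactly enough to absorb the $O(\log n)$ dislocation incurred by $\ProbSort(\Z)$ when $\M^+$ is read off.
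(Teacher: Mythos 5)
Your proposal matches the paper's proof step for step: both condition on $\ProbSort(\Z)$, invoke Lemma~\ref{lem:per-center-final-simple} once per $\vsample\in\Sample^{(1)}$, take a union bound over the (at most $n$) centers to get the $1-n^{-7}$ bound, sum the per-center inequalities over the partition $\{\U_\vsample\}$ of $\V\setminus\Coreset$, observe that $\Coreset$ contributes zero under both $\M$ and $\M^+$, and finish by plugging in the $c_{\mathsf{app}}$-approximation guarantee for $\M$. The only difference is cosmetic — you note the sharper count $|\Sample^{(1)}|=O(k\,\polylog n)$ where the paper simply uses $|\Sample^{(1)}|\le n$.
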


\begin{proof}
Fix the event under which Lemma~\ref{lem:per-center-final-simple} holds
for \emph{every} $\vsample\in\Sample^{(1)}$. Since each instance holds with
probability at least $1-n^{-8}$ and $|\Sample^{(1)}|\le n$, a union bound gives
overall success probability at least $1-n^{-7}$.
Summing Lemma~\ref{lem:per-center-final-simple} over all $\vsample\in\Sample^{(1)}$ yields
$$
\sum_{\vertex{v}\in \V\setminus \C}\dist\bigl(\vertex{v},\M^+(\vertex{v})\bigr)
\;=\;
\sum_{\vsample\in\Sample^{(1)}}\ \sum_{\vertex{v}\in \U_\vsample}
\dist\bigl(\vertex{v},\M^+(\vertex{v})\bigr)
\ \le\
\frac{\varepsilon}{c_{\mathsf{app}}}\cdot
\sum_{\vsample\in\Sample^{(1)}}\ \sum_{\vertex{v}\in \U_\vsample}
\dist(\vsample,\vertex{v})=\frac{\varepsilon}{c_{\mathsf{app}}}\cdot
\sum_{\vertex{v}\in\V\setminus \C}
\dist(\vertex{v},\M(\vertex{v})).
$$
We also know that

$$\sum_{\vertex{v}\in  \C}\dist\bigl(\vertex{v},\M^+(\vertex{v})\bigr)
\;=\sum_{\vertex{v}\in  \C}\dist\bigl(\vertex{v},\M(\vertex{v})\bigr)=0
$$

Since $\M$ is a
$c_{\mathsf{app}}$–approximation mapping (from $\AlgDoub$), we have
$$
\sum_{\vertex{v}\in\V}\dist(\vertex{v},\M(\vertex{v}))
\ \le\ c_{\mathsf{app}}\cdot \opt_k^1(\V).
$$
Putting everything together gives
$$
\sum_{\vertex{v}\in \V}\dist\bigl(\vertex{v},\M^+(\vertex{v})\bigr)
\ \le\
\frac{\varepsilon}{c_{\mathsf{app}}}\cdot
\sum_{\vertex{v}\in\V}\dist(\vertex{v},\M(\vertex{v}))
\ \le\
\varepsilon\cdot \opt_k^1(\V),
$$
as claimed.
\end{proof}



We conclude with Theorem~\ref{thm:epscoreset}.

\section{Additional Related Work}
\label{Appndx:RelWork}
Clustering is a classical problem that has been studied for decades \cite{lloyd1982least,charikar1999constant,har2004coresets,chen2009coresets}.  
For $k$-median and $k$-means, \citet{charikar1999constant} presented the first constant-factor polynomial-time approximation, and subsequent work \cite{MettuP02} improved the results.  
The notion of \emph{coresets} (compact summaries for scalable clustering) for $k$-median and $k$-means clustering was introduced in \citet{har2004coresets} and later advanced by \citet{chen2009coresets}. More recently, results of \citet{braverman2021coresets,braverman2022coresets} have improved the sizes of the obtained coresets for clustering.

\par In another line of work, \cite{xu2024bi} studied a weak-strong model in doubling metrics where the weak oracle returns values within a multiplicative factor $C>1$ of the true distance, while the strong oracle provides exact distances. Their focus, however, was on building data structures for approximate nearest neighbor search. Finally, \cite{silwal2023kwikbucks} examined correlation clustering within a weak-strong framework, but their techniques and oracle definitions do not extend to the $k$-center/median/means setting.

\par Much of the oracle-based clustering literature has focused on (faulty or exact) cluster queries~\cite{mazumdar2017clustering,huleihel2019same,mazumdar2017query,choudhury2019top,kasper,galhotra2021learning}, which directly identify ground-truth clusters. For $k$-means~\cite{ashtiani2016clustering,chien2018query,kim2017relaxed,kim2017semi, bianchi2021optimal}, and $k$-median~\cite{ailon2018approximate}, such queries, often combined with distance information, have led to stronger approximation guarantees. 
This is also is closely related to learning-augmented algorithms~\cite{fulearning, dong2025learning, bravermanlearning, braverman2024learning, grigorescu2022learning, ergun2021learning, mitzenmacher2022algorithms, indyk2019learning, hsu2019learning}.

\par Beyond clustering, distance-based comparison oracles have been applied to a wide range of problems. Examples include learning fairness metrics~\cite{ilvento2019metric}, hierarchical clustering~\cite{emamjomeh2018adaptive,chatziafratis2018hierarchical,ghoshdastidar2019foundations}, correlation clustering~\cite{ukkonen2017crowdsourced}, classification~\cite{tamuz2011adaptively,hopkins2020noise}, and knowledge and data engineering~\cite{beretta2023optimal}. They have also been leveraged in tasks such as finding the maximum element~\cite{guo2012so, venetis2012max}, top-$k$ selection~\cite{klein2011tolerant, polychronopoulos2013human,ciceri2015crowdsourcing,davidson2014top,kou2017crowdsourced,dushkin2018top}, information retrieval~\cite{kazemi2018comparison}, and skyline computation~\cite{verdugo2020skyline}.

Finally, in relational clustering, the objective is to cluster the output of a join (or conjunctive) query. Since the number of join results can be extremely large, materializing them explicitly often leads to prohibitively slow algorithms. To address this, recent works~\cite{agarwal2024computing, esmailpour2024improved, chen2022coresets, moseley2020relational, curtin2020rk, surianarayanan2025clustering} introduce relational oracles that provide summary statistics of the join output together with access to selected key tuples, enabling clustering to be performed more efficiently. This framework, however, differs from the \rmodel, where the main challenge lies in evaluating distances between items through noisy comparisons. In relational clustering, by contrast, the difficulty arises primarily from the combinatorial explosion of join results rather than from the complexity of distance evaluation.



\section*{Use of Large Language Models}
Large Language Models (LLMs) were used as general-purpose assist tools. Specifically, we drafted all text ourselves, and in several places (mainly in the introduction) we provided our own written paragraphs to ChatGPT-5 Plus with the instruction to ``polish the text.'' In addition, we used ChatGPT-5 Plus in a limited way to help translate some of our pseudocode into Python code for basic prototyping. No part of the research process, including problem formulation, algorithm design, theoretical development, or experimental analysis, relied on LLMs. All ideas, results, and scientific contributions are entirely our own.




\addtocontents{toc}{\protect\setcounter{tocdepth}{4}}

\renewcommand{\contentsname}{Contents of Appendix}


\end{document}